\documentclass{article} 
\usepackage{iclr2026_conference,times}

\usepackage{amsmath,amsfonts,bm}

\def\eqref#1{equation~\ref{#1}}

\def\1{\bm{1}}

\DeclareMathAlphabet{\mathsfit}{\encodingdefault}{\sfdefault}{m}{sl}
\SetMathAlphabet{\mathsfit}{bold}{\encodingdefault}{\sfdefault}{bx}{n}

\newcommand{\E}{\mathbb{E}}

\DeclareMathOperator{\Tr}{Tr}

\DeclareMathOperator{\Cov}{Cov}

\usepackage[dvipsnames]{xcolor}
\usepackage[colorlinks=true,
            linkcolor=blue,
            citecolor=Rhodamine,
            urlcolor=blue]{hyperref}
\usepackage{url}
\usepackage{amsmath}
\usepackage{amssymb}
\usepackage{mathtools}
\usepackage{amsthm}
\usepackage[utf8]{inputenc} 
\usepackage[T1]{fontenc}    	
\usepackage{url}            		
\usepackage{booktabs}       	
\usepackage{amsfonts}       	
\usepackage{nicefrac}       	
\usepackage{microtype}      	
\usepackage{xcolor}         		
\usepackage{amsmath}
\usepackage{mathrsfs}
\usepackage{amssymb}
\usepackage{graphicx}
\usepackage{algorithm}
\usepackage{makecell}
\usepackage{lipsum}
\usepackage{enumitem}
\usepackage{graphicx}
\usepackage{listings}
\usepackage{tcolorbox}
\usepackage{multirow}
 \usepackage{algorithm}
 \usepackage{algpseudocode}
 \usepackage{booktabs}
\usepackage{multirow}
\usepackage{graphicx}
\usepackage[capitalise]{cleveref}
\crefname{equation}{Eq.}{Eqs.}
\iclrfinalcopy
\title{Scaling Online Optimal Transport Minibatches for Flow Matching}

\theoremstyle{plain}
\newtheorem{theorem}{Theorem}[section]
\newtheorem{proposition}[theorem]{Proposition}
\newtheorem{lemma}[theorem]{Lemma}

\theoremstyle{definition}
\newtheorem{definition}[theorem]{Definition}
\newtheorem{assumption}[theorem]{Assumption}
\newtheorem{fact}[theorem]{Fact}
\theoremstyle{plain}
\newtheorem{remark}[theorem]{Remark}

\begin{document}

\author{Peter Chen \qquad Sergey Levine \\
\\
University of California, Berkeley }

\maketitle

\begin{abstract}
A central challenge in scaling flow models is maintaining stability in training and integration. In flow matching, this stability depends critically on how source samples are paired with data. Independent random pairing introduces substantial stochasticity, motivating minibatch optimal transport (OT) to construct online source-target couplings that reduce this randomness \citep{tong2023improving}. However, as applications extend to larger and higher-dimensional datasets, small minibatches can limit these benefits, while scaling the coupling to substantially larger minibatches at every iteration is computationally prohibitive. Moreover, how OT coupling affects the full stochastic gradient and integration remains theoretically unresolved. In this work, we develop a scalable online entropic OT approximation that enables substantially larger source-target couplings while preserving the benefits of OT. We close the theoretical gap for both unregularized and entropic OT by proving strict reductions in full stochastic-gradient variance and integration error relative to independent pairing, and further characterize how these improvements strengthen as the online coupling minibatch grows. Experiments on single-cell gene interpolation-flow and image-flow training validate these predictions, showing that larger online couplings consistently improve training stability and final generation performance.
\end{abstract}

\section{Introduction} \label{sec:intro}
Much of the recent progress in generative AI has been driven by scale. Autoregressive language models benefit from a standardized learning objective: next-token prediction under a softmax distribution. Once this objective is fixed, much of the challenge of scaling shifts to systems and optimization. Flow models introduce an additional dimension: their supervision is constructed along a generative path from noise to data, and the resulting regression targets depend on how source noise samples are coupled with target data samples. Thus, scaling flow models requires consideration not only of how the model is optimized, but also of how its training targets are constructed.

The choice of such source-target coupling is therefore part of the learning problem itself. In flow matching, each training example pairs a source sample from the noise distribution with a target data sample and regresses the velocity along the resulting interpolation path. Under the standard independent coupling, these pairings are random. Consequently, nearby intermediate states can arise from different endpoint pairs and receive substantially different velocity targets, increasing ambiguity in the regression problem and the variance of stochastic gradients. A natural way to reduce this randomness is to determine the source-target correspondence during training. 

Optimal transport (OT) then provides a natural solution by constructing a coupling between source and target samples within each minibatch during training. Such online coupling minimizes the transport cost between the two empirical distributions, yielding more coherent source-target assignments that induce straighter flows and reduce the ambiguity of independent pairing \citep{pooladian2023multisample,tong2023improving}. However, as flow matching is applied to larger and more heterogeneous datasets, such as high-dimensional single-cell trajectories and large-scale image datasets, a fixed small coupling minibatch can become increasingly restrictive, where each minibatch only exposes OT to a limited subset of the source and target distributions, constraining the candidate correspondences available at each iteration. As the underlying data distribution grows in diversity and complexity, it therefore becomes natural to scale the OT minibatch accordingly.

Still, scaling the minibatch size is computationally challenging, where dense pairwise interactions in standard OT restrict practical minibatches to hundreds of samples. Nevertheless, recent advances in scalable entropic OT make substantially larger contexts accessible. In particular, FlashSinkhorn \citep{ye2026flashsinkhorn} implements Sinkhorn iterations with FlashAttention-style HBM kernels without materializing the full coupling matrix in high-bandwidth memory. To this end, we propose a simple, tractable approximation by replacing online OT coupling with scalable entropic OT, enabling coupling contexts to grow from hundreds to thousands of samples during training.

The price of this tractability is that entropic regularization changes the coupling itself. The regularization strength \(\varepsilon\) controls this deviation from standard OT, while the minibatch size \(k\) controls the scale of the online coupling. Understanding whether the stability benefits of OT persist therefore requires characterizing the joint effects of regularization and coupling scale. Moreover, this question remains unresolved even for standard unregularized minibatch OT. Existing analyses motivate OT through straighter paths and reduced conditional uncertainty \citep{pooladian2023multisample,boite2026expected,zhang2026on,chambolle2026limit}, but do not provide a general strict characterization of its effect on the full stochastic-gradient variance or the induced numerical integration error.

By making larger online couplings feasible, we determine whether their stability benefits survive the approximation required for scaling. Our contributions can be summarized below:

\vspace{-0.5em}
\paragraph{Contributions.} We provide a unified and principled characterization of how online OT coupling affects optimization and numerical integration. On the optimization side, we characterize the full stochastic-gradient variance by (i) proving that population quadratic OT strictly reduces this variance relative to independent coupling, (ii) showing that unregularized minibatch-OT variance reduces as the minibatch size $k$ grows, and (iii) extending same guarantees to entropic-regularized couplings. On the integration side, we establish parallel strict reductions in Euler integration error and characterize how these guarantees improve with scaled minibatch size. These results provide a better theoretical understanding for using and scaling entropic OT as a replacement to the standard online OT couplings.

\vspace{-0.5em}
\paragraph{Related works.} OT-based conditional flow matching (OT-CFM) improves both training and generation through better source-target pairings \citep{tong2023improving,pooladian2023multisample}. Given the difficulty to scale OT-CFM, recent works scale the minibatch context by separating pairing from training: \citet{zhang2026on} precomputed million-scale couplings offline, which requires buffering enough pairs to cover the entire training run before optimization begins; semidiscrete approximation methods offer better efficiency by reusing fitted dataset-level potentials, but need a separate potential-optimization stage \citep{mousavi2026flow} or face non-convergence when scaled to large unconditional datasets \citep{kong2026alignflow}. Therefore, we return to the original challenge, by finding an affordable and stable method for scaling the minibatch that flow matching recomputes from fresh samples at every training iteration.

The existing theory has so far addressed separate pieces. \citet{tong2023improving} connect population OT to dynamic OT, while \citet{pooladian2023multisample} study conditional gradient covariance and asymptotic BatchOT straightness. Finite-batch theory remains limited: \citet{zhang2026on} characterize uncertainty in the resulting OT coupling, while \citet{boite2026expected} analyze its integration benefit only in a restricted two-atom model; more recent work also studies full parameter-gradient variance in restricted Gaussian models \citep{reu2026gradient} and integration error through the geometry of a prescribed velocity field \citep{tao2026role}. Thus, no prior result establishes that replacing independent pairing with minibatch OT strictly reduces full stochastic-gradient variance while inducing a general integration advantage; we defer further related work to Appendix~\ref{app:related-works}.

\section{Preliminaries}\label{sec:prelim}
We introduce flow matching (FM) and coupling definitions used throughout the later analyses.

\vspace{-0.5em}
\paragraph{Continuous normalizing flows.} Let \(q_0,q_1\in\mathcal P_2(\mathbb R^d)\) denote the source (noise) and target (image-data) distributions, respectively. A time-dependent vector field \(v_\theta:[0,1]\times\mathbb R^d\to\mathbb R^d\), parameterized by \(\theta\in\mathbb R^p\), assigns an instantaneous velocity to each time-state pair \((t,x)\), thereby defining a flow from the source to the target. Specifically, a continuous normalizing flow (CNF) \citep{lipman2022flow} transports source samples through the flow map
\[
\frac{\mathrm d}{\mathrm dt}\phi_t(x)=v_\theta(t,\phi_t(x)),\qquad \phi_0(x)=x,
\]
with the goal that \((\phi_1)_\#q_0\approx q_1\). Flow matching trains this vector field from paths induced by a coupling of the source and target endpoints.

\vspace{-0.5em}
\paragraph{Coupling-based FM.} Let $\pi$ be a coupling of $q_0$ and $q_1$. Each FM training tuple is generated first by sampling $(X_0,X_1)\sim\pi$ and then independently sampling $t\sim\mathcal U([0,1])$. Unless stated otherwise, we take all expectations $\E_\pi[\cdot]$ and covariances $\Cov_\pi(\cdot)$ over the randomness induced by the sampling procedure associated with coupling $\pi$. For simplicity, we denote the complete stochastic FM training tuple by $\xi^\pi=(t,X_0,X_1)$ and its associated variables, including intermediate interpolation point $X_t^\pi$ and pathwise velocity target $U^\pi$, by
\[
    X_t^\pi=(1-t)X_0+tX_1,\qquad U^\pi=X_1-X_0.
\]
At a fixed model input \((t,X_t^\pi)\), different endpoint pairs may produce different velocity targets. Given one sampled FM training tuple \(\xi^\pi\), its loss is \[\ell_\pi(\theta;\xi^\pi)=\|v_\theta(t,X_t^\pi)-U^\pi\|_2^2.\] Under this squared loss, the optimal deterministic velocity prediction at input \((t,x)\) is the conditional mean, defined by \(u^\pi(t,X_t^\pi)\!\coloneqq\!\mathbb E_\pi[U^\pi\mid t,X_t^\pi]\). The conditional covariance, \(\Cov_\pi(U^\pi\mid t,X_t^\pi)\), then measures how much the possible velocity targets still vary at that input; the population FM objective further averages $\ell_\pi(\theta;\xi^\pi)$ over all possible training tuples,
\[
    \mathcal L_\pi(\theta):=\mathbb E_\pi[\ell_\pi(\theta;\xi^\pi)].
\]
Let $J_\theta(t,x)\!\coloneqq\!\nabla_\theta v_\theta(t,x)\in\mathbb R^{d\times p}$, the gradient over $\ell_\pi(\theta;\xi^\pi)$ can be expressed as
\[
    G_\pi(\theta)\coloneqq\nabla_\theta\ell_\pi(\theta;\xi^\pi)=2J_\theta(t,X_t^\pi)^\top\bigl(v_\theta(t,X_t^\pi)-U^\pi\bigr).
\]
Whenever \(\mathbb E_\pi[\|G_\pi(\theta)\|_2^2]<\infty\), we can derive the total variance of the per-sample stochastic gradient induced by sampling FM training tuples under coupling \(\pi\):
\[
    \sigma_{\text{grad},\pi}^2(\theta):=\Tr\{\Cov_\pi(G_\pi(\theta))\}=\mathbb E_\pi\!\left[\|G_\pi(\theta)-\mathbb E_\pi G_\pi(\theta)\|_2^2\right].
\]
Here, \(\Cov_\pi(G_\pi(\theta))\in\mathbb R^{p\times p}\) is the covariance matrix of the per-sample stochastic gradient, whereas \(\sigma_{\text{grad},\pi}^2(\theta)\) is its trace, measuring the total sample-to-sample gradient variance across FM training tuples under coupling $\pi$. The law of total covariance further gives us the following decomposition,
\begin{equation}
\label{eq:var-decomposition}
\sigma_{\text{grad},\pi}^2(\theta)=\mathbb E_\pi\!\left[\Tr\{\Cov_\pi(G_\pi(\theta)\mid t,X_t^\pi)\}\right]+\Tr\!\left\{\Cov_\pi\!\left(\mathbb E_\pi[G_\pi(\theta)\mid t,X_t^\pi]\right)\right\}.
\end{equation}
The first term is the gradient noise from target ambiguity at a fixed model input, and the second term is the variation of the conditional mean gradient across model inputs.

\section{Scalable OT for Flow Matching}\label{sec:method}

We refer to our method as conditional flow matching with entropic OT (EOT-CFM). Our approach applies a larger context of source and target samples to construct better couplings, with FlashSinkhorn \citep{ye2026flashsinkhorn} enabling efficient computation at scale. Specifically, at each training iteration, we draw fresh source and target minibatches, compute an approximate entropic transport plan, and sample source-target pairs from this plan to form the regression batch.

We consider three coupling constructions as background: (i) independent coupling \(\pi_{\text{ind}}\), which pairs samples from \(q_0\) and \(q_1\) randomly; (ii) optimal OT coupling \(\pi^\star\), which minimizes \(\mathbb E_\pi[\|X_0-X_1\|_2^2]\) over all noise and data points\footnote{If \(q_0\ll\mathcal L^d\), Brenier's theorem \citep{brenier1991polar} states that this coupling is unique, with the deterministic transport map: \(\pi^\star\!=\!(\mathrm{id},T)_\#q_0\), where \(T\!=\!\nabla\varphi\) is \(q_0\)-almost everywhere for some convex function \(\varphi\).}; (iii) in practice, minibatch OT solves the corresponding problem on source and target minibatches of size \(k\), where we denote this empirical plan as \(\widehat\pi_k\).

To formalize our method, we denote \(k\) as the number of samples to be paired in the coupling minibatch and \(\mathcal{B}\leq k\) as the regression batch used per model update. Given independent samples \(x_i\sim q_0\) and \(y_j\sim q_1\), we write \(C_{ij}=\|x_i-y_j\|_2^2/(2d)\), with dimension normalization that preserves the unregularized OT matching. Assigning uniform mass \(1/k\) to each sample, we aim to solve the following entropic transport plan:
\begin{equation}\label{eq:eot}
\widehat\pi_{k,\varepsilon}
=\operatorname*{arg\,min}_{\substack{
P\geq0,\;P\mathbf1=\mathbf1/k\\
P^\top\mathbf1=\mathbf1/k}}
\left\{
\langle\mathbf C,P\rangle
+\varepsilon\operatorname{KL}\!\left(
P\middle\|\mathbf1\mathbf1^\top/k^2
\right)
\right\}.
\end{equation}
The regularization $\varepsilon>0$ balances between the cost and entropy. Specifically, the exact row conditional is $\mathbb P(J=j\!\mid \!I=i)=k\cdot\widehat\pi_{k,\varepsilon,ij}$, which normalizes probabilities from the approximate target potential $\widetilde g$. Algorithm~\ref{alg:eot-cfm} then constructs the interpolation points $X_{t_b,b}$ and velocity targets $U_b$ defined in \S\ref{sec:prelim}. Without materializing the full transport plan or invoking a separate JAX process, FlashSinkhorn solves the minibatch coupling directly within the PyTorch training loop using tiled cost evaluations and dual potentials, with iteration count $L$ that controls the numerical approximation. 

We also note that our later EOT analyses consider exact entropic transport plans. For computational efficiency, FlashSinkhorn uses a finite number of Sinkhorn iterations, introducing additional numerical approximation error that is not included in the bounds below.

\begin{algorithm}[t]
\caption{Continuous normalizing flow with online entropic couplings (EOT-CFM)}\label{alg:eot-cfm}
\small
\begin{algorithmic}[1]
\Require Minibatch size $k$, batch size $\mathcal{B}$, regularization $\varepsilon>0$, Sinkhorn iterations $L$, training steps $T$.
\For{$t=1,\ldots,T$}
    \State Sample noise and data from source $x_i\sim q_0$ and target $y_j\!\sim\!q_1$ with minibatch size $k$
    \State Approximate the target dual potential $\widetilde g$ of \cref{eq:eot} with $L$ FlashSinkhorn iterations
    \State Select $\mathcal{B}$ distinct source indices $I_1,\ldots,I_\mathcal{B}$ uniformly
    \For{$b=1,\ldots,\mathcal{B}$}
        \State Compute $(\widetilde r_{I_b,j})_{j=1}^{k}
        =\operatorname{softmax}\!\left(
        \left(\tfrac{\widetilde g_j-C_{I_bj}}{\varepsilon}\right)_{j=1}^{k}
        \right)$
        \State Sample $J_b\sim\operatorname{Cat}\!\left(
        (\widetilde r_{I_b,j})_{j=1}^{k}\right)$
        and $t_b\sim\mathcal U([0,1])$
    \EndFor
    \State Compute $\widehat{\mathcal L}(\theta)=\frac{1}{\mathcal{B}d}\sum_{b=1}^\mathcal{B}\|v_\theta(t_b,X_{t_b,b})-U_b\|_2^2$
    \State Update $\theta$ with the optimizer using $\nabla_\theta\widehat{\mathcal L}(\theta)$
\EndFor
\end{algorithmic}
\end{algorithm}

\section{Main Results}

\subsection{On stochastic gradient variance}\label{sec:thy}

We first study the benefit of scaling OT minibatch size through its effect on stochastic-gradient variance. The closest existing result we are aware of is \citet[Lemma~3.2]{pooladian2023multisample}, which analyzes the conditional gradient variance of minibatch OT at model input \((t,x)\), corresponding to the fixed-input component in \cref{eq:var-decomposition}. However, it remains a gap to prove that OT coupling strictly reduces the stochastic-gradient variance relative to independent coupling.

Therefore, in \S\ref{sec:grad-var-std}, we begin by establishing a strict gradient-variance reduction by filling the gap between optimal coupling \(\pi^\star\) and independent coupling \(\pi_{\text{ind}}\). In \S\ref{sec:grad-var-entro}, we extend this result to entropic-OT couplings. Finally, in \S\ref{sec:grad-var-scale}, we establish that scaling to larger minibatch size could further reduce gradient variance for both unregularized and entropic OT couplings.

\subsubsection{Gradient variance reduction: standard unregularized OT}\label{sec:grad-var-std}
We first state two standard assumptions used in theoretical analyses of flow matching and neural differential equations: realizability of the population target field and global optimality of the population objective \citep{islam2025longitudinal,khan2026flow,baheri2026geometrygrounded}. Since the population regression target depends on the coupling \(\pi\), we note that these assumptions are imposed coupling-wise; all the detailed math derivations and proofs for the later theoretical results are deferred to Appendix~\ref{app:proof}.

\begin{assumption}[Realizability]
\label{ass:real}
For every coupling \(\pi\) under comparison, the model class \(\mathcal V=\{v_\theta:\theta\in\mathbb R^p\}\) contains the coupling-induced conditional mean velocity, i.e., there exists \(\theta_\pi\in\mathbb R^p\) such that \(v_{\theta_\pi}(t,X_t^\pi)=u^\pi(t,X_t^\pi)\) almost surely.
\end{assumption}

\begin{assumption}[Optimality]
\label{ass:opt}
For every coupling \(\pi\) under comparison, \(\theta_\pi^\star\) is a global minimizer of \(\mathcal L_\pi\), the map \(\theta\mapsto v_\theta(t,X_t^\pi)\) is differentiable at \(\theta_\pi^\star\) almost surely, and \(\mathbb E_\pi[\|G_\pi(\theta_\pi^\star)\|_2^2]<\infty\).
\end{assumption}
\vspace{-0.5em}
From an optimization perspective, gradient noise at an optimum provides a reference quantity for analyzing SGD convergence \citep{gower2019sgd}. In flow matching setting, evaluating gradients at population optima also removes the variance contribution from imperfect fitting of the conditional-mean velocity, providing a common basis for comparing gradient noise across couplings and motivating later flow matching analyses for using optimality conditions to study stochastic gradients \citep{vaitl2022path,vaitl2022gradients,kim2025adaptive,zhang2025heuristic,vaitl2025path}.
\begin{proposition}
\label{prop:variance-at-optimum}
Under \emph{Assumptions~\ref{ass:real}} and~\emph{\ref{ass:opt}}, along with a trainable additive output bias \(b\in\mathbb R^d\) under OT-CFM architecture (see \emph{Fact~\ref{fact:affine}} for detail), we have
\begin{equation}\label{eq:var-at-optimum}
\sigma_{\emph{grad},\pi}^2(\theta_\pi^\star)=4\,\mathbb E_\pi\left[\Tr\!\left\{J_{\theta_\pi^\star}(t,X_t^\pi)^\top\Cov_\pi(U^\pi\mid t,X_t^\pi)J_{\theta_\pi^\star}(t,X_t^\pi)\right\}\right].
\end{equation}
\end{proposition}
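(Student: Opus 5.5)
The plan is to combine the total-covariance decomposition \cref{eq:var-decomposition} with the realizability and optimality assumptions. The aim is to show that, at \(\theta_\pi^\star\), the conditional mean gradient vanishes almost surely. Once it vanishes, the second (between-input) term of \cref{eq:var-decomposition} drops out, and only the within-input term survives.

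\textbf{Step 1: identify the optimal field.} Write \(v^\star=v_{\theta_\pi^\star}\) and \(J^\star=J_{\theta_\pi^\star}\). Expand the population loss using the conditional mean:
\[
\mathcal L_\pi(\theta)=\mathbb E_\pi\bigl[\|v_\theta(t,X_t^\pi)-u^\pi(t,X_t^\pi)\|_2^2\bigr]+\mathbb E_\pi\bigl[\Tr\Cov_\pi(U^\pi\mid t,X_t^\pi)\bigr].
\]
The second term does not depend on \(\theta\), so \(\mathcal L_\pi(\theta)\) is at least that term for every \(\theta\). By Assumption~\ref{ass:real}, the bound is attained at \(\theta_\pi\). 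Since \(\theta_\pi^\star\) is a global minimizer (Assumption~\ref{ass:opt}), it also attains the bound. Hence \(v^\star(t,X_t^\pi)=u^\pi(t,X_t^\pi)\) almost surely.

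\textbf{Step 2: conditional gradient mean and covariance.} Conditioning on \((t,X_t^\pi)\), both \(J^\star\) and \(v^\star\) are measurable functions of the input. Therefore
\[
\mathbb E_\pi[G_\pi(\theta_\pi^\star)\mid t,X_t^\pi]=2J^{\star\top}\bigl(v^\star-u^\pi\bigr)=0 \quad\text{a.s.}
\]
The second term of \cref{eq:var-decomposition} is the covariance of an almost-surely zero random vector, so it is zero. For the first term, the only randomness left given \((t,X_t^\pi)\) is \(U^\pi\), and \(G_\pi\) depends on it affinely. This gives
\[
\Cov_\pi(G_\pi\mid t,X_t^\pi)=4J^{\star\top}\Cov_\pi(U^\pi\mid t,X_t^\pi)J^\star.
\]
Taking the trace and then the outer expectation yields \cref{eq:var-at-optimum}.

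\textbf{Main obstacle: integrability.} The identities above need conditional second moments to exist, and the assumptions do not obviously supply them. Assumption~\ref{ass:opt} gives \(\mathbb E\|G_\pi(\theta_\pi^\star)\|_2^2<\infty\), which by itself justifies the total-variance decomposition. The cross term in the Step 1 expansion needs finite \(\mathbb E\|U^\pi\|^2\), which follows from \(q_0,q_1\in\mathcal P_2\), and also finite \(\mathbb E\|v_\theta\|^2\). This is where I expect the trainable output bias (Fact~\ref{fact:affine}) to enter. Because the bias \(b\) is a parameter and \(\partial v/\partial b=I_d\), the first-order condition at the optimum forces \(\mathbb E_\pi[v^\star-U^\pi]=0\). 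More to the point, the columns of \(J^\star\) contain the identity, so \(\|G_\pi\|^2\ge 4\|v^\star-U^\pi\|^2\). Finiteness of \(\mathbb E\|G_\pi\|^2\) then gives \(v^\star-U^\pi\in L^2\), and hence \(v^\star\in L^2\). This makes the minimization argument in Step 1 rigorous and every conditional expectation in Step 2 well defined. I expect handling this cleanly, and avoiding the pitfall of \(\mathcal L_\pi=\infty\) for some \(\theta\), to be the most delicate part of the proof.
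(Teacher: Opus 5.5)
Your proof is correct and follows the paper's argument step for step. The expansion of $\mathcal L_\pi$ around $u^\pi$, together with realizability and global optimality, forces $v_{\theta_\pi^\star}=u^\pi$ almost surely; this removes the between-input term of \cref{eq:var-decomposition} and leaves $4J^\top\Cov_\pi(U^\pi\mid t,X_t^\pi)J$ inside the trace. Your integrability argument via the bias is valid but unnecessary, and the paper uses the bias only later, in Lemma~\ref{lem:additive-output-bias}: global optimality alone already gives $\mathcal L_\pi(\theta_\pi^\star)\le\mathcal L_\pi(\theta_\pi)=\mathbb E_\pi\|U^\pi-u^\pi(t,X_t^\pi)\|_2^2<\infty$.
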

Thus, at the population optimum, conditional target covariance is mapped into parameter-gradient space by the model Jacobian. Its average magnitude also satisfies
\[
    \mathbb E_\pi\!\big[\!\Tr\{\Cov_\pi(U^\pi\mid t,X_t^\pi)\}\big]=\mathbb E_\pi\!\left[\|U^\pi-u^\pi(t,X_t^\pi)\|_2^2\right].
\]
To obtain an explicit positive lower bound for independent coupling, we directly follow from the Assumption~(A2') in \citet{pooladian2023multisample}, using the following technical sufficient condition:

\begin{assumption}
\label{ass:positive-common-support}
There exist a compact convex set \(\Omega\subset\mathbb R^d\) with nonempty interior, \(\alpha\in(0,1]\), \(\gamma>0\), and \(\alpha\)-H\"older functions \(f_i:\Omega\to[\gamma,\infty)\) such that, for \(i\in\{0,1\}\) and every Borel \(A\subseteq\mathbb R^d\),
\[
    q_i(A)=\int_{A\cap\Omega}f_i(x)\,\mathrm dx,\qquad \operatorname{supp}(q_i)=\Omega.
\]
\end{assumption}

Finally, combining the preceding variance identity with this common-support condition, we establish a strict gradient-variance gap between standard unregularized OT and independent coupling.

\begin{theorem}
\label{thm:strict-ot-variance}
Let \(\pi^\star=(\mathrm{id},T)_\#q_0\) be the unregularized OT coupling and \(\pi_{\emph{ind}}\) be the independent coupling under the quadratic cost. Suppose \emph{Assumptions~\ref{ass:real}}, \emph{\ref{ass:opt}}, and \emph{\ref{ass:positive-common-support}} hold for both couplings and define \(\omega_d=\mathcal L^d(B(0,1))\), then, for every \(\bar x\in\Omega^\circ\) and \(r>0\) such that \(B(\bar x,2r)\subset\Omega\), we have
\begin{equation}
\label{eq:strict-ot-gap}
\sigma_{\emph{grad},\pi_{\emph{ind}}}^2(\theta_{\pi_{\emph{ind}}}^\star)-\sigma_{\emph{grad},\pi^\star}^2(\theta_{\pi^\star}^\star)\geq\frac{4d\gamma^2\omega_d^2}{d+2}\,r^{2d+2}>0.
\end{equation}
\end{theorem}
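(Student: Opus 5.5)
The plan is to evaluate both sides of \cref{eq:strict-ot-gap} through Proposition~\ref{prop:variance-at-optimum}. I will show that the optimal-coupling term vanishes identically, and then lower-bound the independent-coupling term by an explicit integral of conditional target variance over a small ball.

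\emph{Step 1: reduce Jacobian-weighted variance to target variance.} With the trainable output bias $b$ of Fact~\ref{fact:affine}, the Jacobian splits as $J_\theta=[\,\partial_w v_\theta,\; I_d\,]$. Hence $J_\theta J_\theta^\top\succeq I_d$. For any positive semidefinite $\Sigma$,
\[
\Tr\{J^\top\Sigma J\}=\Tr\{\Sigma JJ^\top\}\ge\Tr\{\Sigma\}.
\]
Combining this with \cref{eq:var-at-optimum} and the identity stated after the proposition gives
\[
\sigma_{\text{grad},\pi}^2(\theta_\pi^\star)\ge 4\,\mathbb E_\pi\|U^\pi-u^\pi(t,X_t^\pi)\|_2^2 .
\]
Conversely, if $\Cov_\pi(U^\pi\mid t,X_t^\pi)=0$ almost surely, then the variance is exactly $0$.

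\emph{Step 2: the OT term is zero.} Under $\pi^\star=(\mathrm{id},T)_\#q_0$ with $T=\nabla\varphi$ and $\varphi$ convex, we have $X_t=\bigl((1-t)\mathrm{id}+t\nabla\varphi\bigr)(X_0)$. For $t<1$ this map is the gradient of the strictly convex function $(1-t)\|x\|^2/2+t\varphi$, so it is injective on the support. Therefore $X_0$, and hence $U=T(X_0)-X_0$, is a measurable function of $(t,X_t)$, and the conditional covariance vanishes for a.e.\ $t$. The only care needed here is the injectivity of a strictly monotone gradient map, which should be routine. This gives $\sigma^2_{\text{grad},\pi^\star}(\theta^\star_{\pi^\star})=0$.

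\emph{Step 3: lower bound for independent coupling (the main computation).} Fix $t\in[0,1]$ and change variables $(x_0,x_1)\mapsto(x,u)=((1-t)x_0+tx_1,\;x_1-x_0)$. This map has unit Jacobian, so the joint density of $(X_t,U)$ is
\[
p_t(x,u)=f_0(x-tu)\,f_1(x+(1-t)u).
\]
Since $u^{\pi_{\text{ind}}}(t,x)=m_t(x)$ is the conditional mean, we can write $\mathbb E\|U-m_t(X_t)\|^2=\int\!\!\int p_t(x,u)\|u-m_t(x)\|^2\,du\,dx$.

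Restrict to $x\in B(\bar x,r)$ and $\|u\|<r$. Then $x-tu$ and $x+(1-t)u$ both lie in $B(\bar x,2r)\subset\Omega$, so $p_t\ge\gamma^2$ there. For each fixed $x$, the centered ball gives $\int_{\|u\|<r}\|u-m\|^2du\ge\int_{\|u\|<r}\|u\|^2du$ for every $m$. Polar coordinates give $\int_{\|u\|<r}\|u\|^2du=\frac{d\,\omega_d}{d+2}r^{d+2}$. Multiplying by $\gamma^2$ and by $\mathcal L^d(B(\bar x,r))=\omega_d r^d$, then integrating over $t\in[0,1]$, yields
\[
\mathbb E\|U-u\|^2\ge\frac{d\gamma^2\omega_d^2}{d+2}r^{2d+2}.
\]
Multiplying by $4$ as in Step 1 and subtracting the zero from Step 2 gives \cref{eq:strict-ot-gap}.

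The only subtle point I anticipate is justifying that the pointwise-in-$x$ minimization over $m$ is legitimate after restricting the $u$-domain. This holds because the integrand is nonnegative, so discarding the region outside the restriction only decreases the integral.
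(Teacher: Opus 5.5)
Your proposal is correct and follows the paper's proof essentially step for step. The shared ingredients are the bias block giving $J_\theta J_\theta^\top\succeq I_d$ (so $\sigma^2_{\text{grad},\pi}\ge 4\,\mathbb E_\pi\|U^\pi-u^\pi\|_2^2$), injectivity of $F_t=(1-t)\mathrm{id}+tT$ for $t<1$ to make the OT conditional covariance vanish, and the unit-Jacobian change of variables with density $\ge\gamma^2$ on $B(\bar x,r)\times B(0,r)$ plus the symmetric-ball second moment $\frac{d\omega_d}{d+2}r^{d+2}$. The differences are cosmetic: you bound $\iint p_t\|u-m_t(x)\|_2^2$ directly for an arbitrary $m_t$ instead of going through $\rho_t(x)\Cov(U\mid t,X_t=x)\succeq\frac{\gamma^2\omega_d r^{d+2}}{d+2}I_d$, and the Borel measurability of the inverse of $(t,x)\mapsto(t,F_t(x))$, which your Step~2 uses tacitly, is supplied in the paper by the Lusin--Souslin theorem (in your framing it also follows from $F_t^{-1}=\nabla\psi_t^*$ being $(1-t)^{-1}$-Lipschitz for the $(1-t)$-strongly convex $\psi_t=(1-t)\|\cdot\|_2^2/2+t\varphi$).
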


\begin{remark}
Given \emph{Theorem~\ref{thm:strict-ot-variance}}, one can bound full-batch SGD variance  by single-sample variance via Jensen inequality (see \cref{eq:jen}), which extends similar later results also to batch-level.
\end{remark}

\subsubsection{Gradient-variance reduction: entropic OT}\label{sec:grad-var-entro}

We next study the influence of entropic regularization on gradient variance and introduce following setup inherited from previous minibatch flow matching analysis.

\begin{definition}[Uniformly bounded parameter Jacobians]\label{rem:bounded-jacobian}
Following the maximum-Jacobian setup in the flow-matching gradient-variance bound of \citet[Lemma~3.2]{pooladian2023multisample}, a coupling family \(\Pi\) has uniformly bounded parameter Jacobians if there exists \(M<\infty\) such that
\begin{equation}
\max_{t\in[0,1],\,x\in\mathbb R^d}
\|J_{\theta_\pi^\star}(t,x)\|_{\mathrm{op}}^2\leq M,
\qquad \forall\,\pi\in\Pi.
\end{equation}
\end{definition}

\begin{remark}
We apply \emph{Definition~\ref{rem:bounded-jacobian}} to obtain a common Jacobian bound $M$ for all couplings considered in later theorems, allowing us to translate target-uncertainty bounds into full stochastic-gradient variance comparisons. We also note that such Jacobian bound control is widely used in neural-network optimization \citep[Assumption~2]{oymak2019overparameterized} and recent flow-matching theory analysis \citep[Proposition~3]{zeng2025flow}.
\end{remark}

We use \(\pi_\varepsilon\) to denote the population entropic-OT coupling between \(q_0\) and \(q_1\):
\begin{equation}
\label{eq:population-eot}
\pi_\varepsilon\in\operatorname*{arg\,min}_{\pi\in\Pi(q_0,q_1)}\left\{\frac{1}{2d}\,\mathbb E_\pi[\|X_0-X_1\|_2^2]+\varepsilon\operatorname{KL}(\pi\|\pi_{\text{ind}})\right\}.
\end{equation}
To compare its variance to that under independent coupling, we first relate its irreducible flow matching regression error to its quadratic transport cost, where \(W_2\) denotes the 2-Wasserstein distance:

\begin{proposition}\label{prop:fm-loss-cost-gap}
Under \emph{Assumptions~\ref{ass:real}} and~\emph{\ref{ass:opt}}, every coupling \(\pi\) satisfies
\begin{equation}
\label{eq:fm-loss-cost-gap}
0\leq\mathcal L_\pi(\theta_\pi^\star)=\mathbb E_\pi[\Tr\{\Cov_\pi(U^\pi\mid t,X_t^\pi)\}]\leq\mathbb E_\pi[\|X_1-X_0\|_2^2]-W_2^2(q_0,q_1).
\end{equation}
\end{proposition}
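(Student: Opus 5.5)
The plan has three steps: identify the minimal loss with an average conditional variance, bound that variance by the total second moment of the target, and control the conditional-mean term through a convexity (Benamou--Brenier) argument.

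\emph{Step 1: equality.} By Assumption~\ref{ass:real}, some \(\theta_\pi\) satisfies \(v_{\theta_\pi}(t,X_t^\pi)=u^\pi(t,X_t^\pi)\) almost surely. For any \(\theta\), the \(L^2\) projection (Pythagoras) identity gives
\[
\mathcal L_\pi(\theta)=\mathbb E_\pi\|v_\theta-u^\pi\|^2+\mathbb E_\pi\|U^\pi-u^\pi\|^2 .
\]
Hence the global minimum value is \(\mathbb E_\pi\|U^\pi-u^\pi(t,X_t^\pi)\|^2\), and by Assumption~\ref{ass:opt} it is attained at \(\theta_\pi^\star\). This quantity equals \(\mathbb E_\pi[\Tr\{\Cov_\pi(U^\pi\mid t,X_t^\pi)\}]\), as already noted after Proposition~\ref{prop:variance-at-optimum}. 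Nonnegativity is immediate.

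\emph{Step 2: reduction of the upper bound.} Write \(\mathbb E\|U^\pi-u^\pi\|^2=\mathbb E\|U^\pi\|^2-\mathbb E\|u^\pi\|^2\), and note \(\mathbb E\|U^\pi\|^2=\mathbb E_\pi\|X_1-X_0\|^2\). It then suffices to prove
\[
\mathbb E_\pi\|u^\pi(t,X_t^\pi)\|^2\ge W_2^2(q_0,q_1).
\]
This is the kinetic energy of the marginal velocity field.

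\emph{Step 3: the key inequality.} Let \(p_t\) be the law of \(X_t^\pi\). The interpolant path solves the continuity equation \(\partial_t p_t+\nabla\cdot(p_t u^\pi_t)=0\) weakly. To check this, test against smooth \(\phi\):
\[
\frac{d}{dt}\mathbb E\phi(X_t)=\mathbb E[\nabla\phi(X_t)\cdot U]=\mathbb E[\nabla\phi(X_t)\cdot u^\pi(t,X_t)],
\]
where the second equality uses the tower property. The curve also has endpoints \(p_0=q_0\) and \(p_1=q_1\). The Benamou--Brenier formula then gives
\[
W_2^2(q_0,q_1)\le\int_0^1\!\int\|u^\pi_t\|^2\,dp_t\,dt=\mathbb E_\pi\|u^\pi(t,X_t^\pi)\|^2 ,
\]
since \(t\sim\mathcal U([0,1])\) is independent of \((X_0,X_1)\).

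The main obstacle is justifying Benamou--Brenier in this generality, i.e., showing that \((p_t,u^\pi_t)\) is an admissible pair with finite energy. Finite energy follows from \(\mathbb E\|u^\pi\|^2\le\mathbb E\|U\|^2<\infty\) because \(q_0,q_1\in\mathcal P_2\). Weak continuity in \(t\) follows from the continuity of the interpolant. With these two facts, the standard statement (e.g.\ Ambrosio--Gigli--Savar\'e, Thm.~8.3.1, or Villani) applies directly.

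If a more elementary route is preferred, I would use the superposition principle. Flow along \(u^\pi\) to build a random curve whose endpoints form a coupling \(\tilde\pi\in\Pi(q_0,q_1)\). Cauchy--Schwarz in time then gives
\[
\mathbb E\|Y_1-Y_0\|^2\le\mathbb E\int_0^1\|\dot Y_t\|^2\,dt=\mathbb E\|u^\pi\|^2 ,
\]
and the left-hand side is at least \(W_2^2(q_0,q_1)\).
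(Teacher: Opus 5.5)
Your proposal is correct and follows essentially the same route as the paper. Both arguments use the Pythagorean decomposition of $\mathcal L_\pi$ around the realizable conditional mean, the orthogonality identity $\mathbb E_\pi\|U^\pi\|_2^2=\mathbb E_\pi\|u^\pi(t,X_t^\pi)\|_2^2+\mathcal L_\pi(\theta_\pi^\star)$, and the continuity-equation/Benamou--Brenier bound $\mathbb E_\pi\|u^\pi(t,X_t^\pi)\|_2^2\geq W_2^2(q_0,q_1)$; your checks of finite energy and narrow continuity, and the superposition alternative, only add admissibility details the paper asserts without elaboration.
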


\vspace{-0.5em}
Thus, the population loss at optimality is exactly the average target uncertainty remaining at a fixed model input. As an intermediate step, we establish Lemma~\ref{lem:eot-strict-small-epsilon} to show that sufficiently small entropic regularization strictly reduces this uncertainty comparing to independent coupling.

\begin{lemma}\label{lem:eot-strict-small-epsilon}
Suppose \emph{Assumptions~\ref{ass:real}} and~\emph{\ref{ass:opt}} hold for \(\pi_{\emph{ind}}\), \(\pi^\star\), and every \(\pi_\varepsilon\) under comparison, and assume \(\mathcal L_{\pi_{\emph{ind}}}(\theta_{\pi_{\emph{ind}}}^\star)>0\), which holds whenever \(q_0\) has a strictly positive Lebesgue density and \(q_1\) is not a Dirac mass, including the standard choice \(q_0=\mathcal N(0,I_d)\). Then, for any \(\varepsilon_0>0\) that satisfies
\begin{equation}
\label{eq:eot-small-epsilon-certificate}
\mathbb E_{\pi_{\varepsilon_0}}[\|X_1-X_0\|_2^2]-W_2^2(q_0,q_1)<\mathcal L_{\pi_{\emph{ind}}}(\theta_{\pi_{\emph{ind}}}^\star),
\end{equation}
every \(0<\varepsilon\leq\varepsilon_0\) gives us \cref{eq:eot-strict-population-loss-chain}, and such $\varepsilon_0$ is guaranteed to exist by \emph{Remark~\ref{rem:eot-epsilon-zero-existence}},
\begin{equation}\label{eq:eot-strict-population-loss-chain}
\mathcal L_{\pi^\star}(\theta_{\pi^\star}^\star)<\mathcal L_{\pi_\varepsilon}(\theta_{\pi_\varepsilon}^\star)<\mathcal L_{\pi_{\emph{ind}}}(\theta_{\pi_{\emph{ind}}}^\star).
\end{equation}
\end{lemma}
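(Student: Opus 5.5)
We split \cref{eq:eot-strict-population-loss-chain} into its two inequalities. Both rest on Proposition~\ref{prop:fm-loss-cost-gap} and on one monotonicity fact about the entropic family.

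\textbf{Upper inequality.} We first show that \(\varepsilon\mapsto \mathbb E_{\pi_\varepsilon}[\|X_1-X_0\|_2^2]\) is nondecreasing. Write \(c(\pi)=\tfrac{1}{2d}\mathbb E_\pi[\|X_0-X_1\|_2^2]\) and \(H(\pi)=\operatorname{KL}(\pi\|\pi_{\text{ind}})\). For \(\varepsilon\le\varepsilon_0\), optimality of \(\pi_\varepsilon\) and of \(\pi_{\varepsilon_0}\) gives
\[
c(\pi_\varepsilon)+\varepsilon H(\pi_\varepsilon)\le c(\pi_{\varepsilon_0})+\varepsilon H(\pi_{\varepsilon_0}),
\qquad
c(\pi_{\varepsilon_0})+\varepsilon_0 H(\pi_{\varepsilon_0})\le c(\pi_\varepsilon)+\varepsilon_0 H(\pi_\varepsilon).
\]
Adding the two yields \((\varepsilon_0-\varepsilon)\bigl(H(\pi_{\varepsilon_0})-H(\pi_\varepsilon)\bigr)\le0\), so \(H(\pi_\varepsilon)\ge H(\pi_{\varepsilon_0})\). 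Substituting this into the first inequality gives \(c(\pi_\varepsilon)\le c(\pi_{\varepsilon_0})\). Combining with the upper bound of Proposition~\ref{prop:fm-loss-cost-gap} applied to \(\pi_\varepsilon\),
\[
\mathcal L_{\pi_\varepsilon}(\theta^\star_{\pi_\varepsilon})\le \mathbb E_{\pi_\varepsilon}\|X_1-X_0\|_2^2-W_2^2\le \mathbb E_{\pi_{\varepsilon_0}}\|X_1-X_0\|_2^2-W_2^2<\mathcal L_{\pi_{\text{ind}}}(\theta^\star_{\pi_{\text{ind}}}),
\]
where the last step is the certificate \cref{eq:eot-small-epsilon-certificate}.

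\textbf{Lower inequality.} By Brenier, \(U^{\pi^\star}=T(X_0)-X_0\). For \(t<1\), the map \(x\mapsto(1-t)x+tT(x)=\nabla\bigl((1-t)\tfrac{\|x\|^2}{2}+t\varphi\bigr)\) is the gradient of a strictly convex function, hence injective \(q_0\)-a.e. Therefore \(X_0\) is a measurable function of \((t,X_t^{\pi^\star})\), the conditional covariance of \(U^{\pi^\star}\) vanishes, and by Proposition~\ref{prop:fm-loss-cost-gap} \(\mathcal L_{\pi^\star}(\theta^\star_{\pi^\star})=0\). It then remains to prove \(\mathcal L_{\pi_\varepsilon}(\theta^\star_{\pi_\varepsilon})>0\) for every \(\varepsilon>0\).

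The population entropic plan has the Schrödinger form \(\mathrm d\pi_\varepsilon=\exp\bigl((f(x)+g(y)-\|x-y\|^2/(2d))/\varepsilon\bigr)\,\mathrm d(q_0\otimes q_1)\). This density is positive \(q_0\otimes q_1\)-a.e., so the conditional law of \(X_1\) given \(X_0=x\) is equivalent to \(q_1\), which is not a Dirac mass. Suppose the loss were zero. Then \(U\) would be a.s.\ a function of \((t,X_t)\), and hence \(X_1=X_t+(1-t)U\) would be a function of \((t,X_t)\). Fix a \(t\in(0,1)\) for which this holds. Conditionally on \(X_0\), the interpolant \(X_t\) is an invertible affine image of \(X_1\). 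Because the joint law of \((X_0,X_1)\) has a positive density against \(q_0\otimes q_1\) and \(q_0\) has a Lebesgue density, the pair \((X_0,X_1)\) cannot be recovered from \(X_t\) alone. The conditional law of \(X_1\) given \(X_t\) is then nondegenerate, which contradicts determinism.

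\textbf{Main obstacle.} The delicate step is this last nondegeneracy argument. Making it rigorous requires a change of variables \((x_0,x_1)\mapsto(x_t,x_1)\) showing that the conditional law of \(X_1\) given \(X_t=z\) has density proportional to \(p_\varepsilon\bigl((z-tx_1)/(1-t),x_1\bigr)\,q_0\bigl((z-tx_1)/(1-t)\bigr)\) against \(q_1(\mathrm dx_1)\), and that this weight is positive on a set of \(q_1\)-measure exceeding a single atom. That positivity follows from the positivity of \(q_0\)'s density and of the Schrödinger density, together with \(q_1\) not being a Dirac mass.
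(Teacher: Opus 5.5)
Your upper inequality is the paper's own argument: you exchange the two entropic optimizers as competitors, add the two inequalities, deduce that the cost is monotone in $\varepsilon$, and then chain Proposition~\ref{prop:fm-loss-cost-gap} with the certificate. The one thing to add is that both relative entropies are finite, because $\pi_{\text{ind}}$ is a finite-objective competitor. Without that, adding the two inequalities is not legitimate.

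The lower inequality is where the gap is: as written it does not prove the lemma as stated, because it never uses the hypothesis $\mathcal L_{\pi_{\text{ind}}}(\theta^\star_{\pi_{\text{ind}}})>0$. For $\mathcal L_{\pi_\varepsilon}>0$ you rely on the \emph{sufficient} condition quoted in the statement ($q_0$ has a strictly positive Lebesgue density and $q_1$ is non-Dirac). For $\mathcal L_{\pi^\star}=0$ you use a Brenier map, which requires $q_0\ll\mathcal L^d$. Neither is a hypothesis of the lemma, so your argument covers only a special case. It says nothing when, for example, $q_0$ is concentrated on a lower-dimensional set while $\mathcal L_{\pi_{\text{ind}}}>0$ still holds.

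The missing idea is that no density computation is needed, and you already have the ingredients. Positivity of the Schr\"odinger density $\pi_{\text{ind}}$-a.e., which you state, gives $\pi_\varepsilon\sim\pi_{\text{ind}}$. Pushing $\lambda\otimes\pi$ forward under $(t,x_0,x_1)\mapsto(t,(1-t)x_0+tx_1,x_1-x_0)$ preserves mutual absolute continuity, so the laws of $(t,X_t,U)$ under $\pi_\varepsilon$ and $\pi_{\text{ind}}$ are equivalent. By Proposition~\ref{prop:fm-loss-cost-gap}, $\mathcal L_\pi(\theta^\star_\pi)=0$ holds iff the graph $\{u=f(t,x)\}$ of some Borel $f$ has full measure, and a full-measure event transfers between equivalent laws. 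So $\mathcal L_{\pi_\varepsilon}=0$ would force $\mathcal L_{\pi_{\text{ind}}}=0$, contradicting the hypothesis. This is exactly the paper's route.

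Similarly, $\mathcal L_{\pi^\star}(\theta^\star_{\pi^\star})=0$ follows in one line from the same proposition: $0\le\mathcal L_{\pi^\star}(\theta^\star_{\pi^\star})\le\mathbb E_{\pi^\star}[\|X_1-X_0\|_2^2]-W_2^2(q_0,q_1)=0$. This holds for any optimal coupling, with no Monge map and no injectivity argument. Under your extra density assumptions, the nondegeneracy step you flag can be completed by a Tonelli argument as in Lemma~\ref{lem:large-k-finite-nondegeneracy}. That is more work than the equivalence argument, and it still gives only the weaker result.
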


\begin{remark}\label{rem:eot-epsilon-zero-existence}
We note that such \(\varepsilon_0\) always exists. For any \(\widetilde\pi\in\Pi(q_0,q_1)\) with \(0<\operatorname{KL}(\widetilde\pi\|\pi_{\emph{ind}})<\infty\) and \(\mathbb E_{\widetilde\pi}[\|X_1-X_0\|_2^2]-W_2^2(q_0,q_1)<\mathcal L_{\pi_{\emph{ind}}}(\theta_{\pi_{\emph{ind}}}^\star)\), \cref{eq:eot-strict-population-loss-chain} holds whenever
\begin{equation}
\label{eq:eot-explicit-competitor-cutoff}
0<\varepsilon<\frac{\mathcal L_{\pi_{\emph{ind}}}(\theta_{\pi_{\emph{ind}}}^\star)+W_2^2(q_0,q_1)-\mathbb E_{\widetilde\pi}[\|X_1-X_0\|_2^2]}{2d\operatorname{KL}(\widetilde\pi\|\pi_{\emph{ind}})}.
\end{equation}
A coupling \(\widetilde\pi\) satisfying these conditions always exists; we defer its proof to \emph{Appendix~\ref{app:proof-eot-epsilon-existence}}.
\end{remark}

Finally, we transform Lemma~\ref{lem:eot-strict-small-epsilon} into the stochastic-gradient variance reduction.

\begin{theorem}
\label{thm:eot-gradient-variance}
Suppose the conditions of \emph{Lemma~\ref{lem:eot-strict-small-epsilon}} hold. Then, every \(0<\varepsilon\leq\varepsilon_0\) satisfies that
\begin{equation}\label{eq:eot-full-gradient-bound}
0<\sigma_{\emph{grad},\pi_\varepsilon}^2(\theta_{\pi_\varepsilon}^\star)\leq4\{\mathbb E_{\pi_\varepsilon}[\|X_1-X_0\|_2^2]-W_2^2(q_0,q_1)\}\operatorname*{ess\,sup}\|J_{\theta_{\pi_\varepsilon}^\star}(t,X_t^{\pi_\varepsilon})\|_{\mathrm{op}}^2.
\end{equation}
Moreover, under \emph{Definition~\ref{rem:bounded-jacobian}} over all sufficiently small \(\varepsilon\), there exists \(\varepsilon_1\in(0,\varepsilon_0]\) such that every \(0<\varepsilon\leq\varepsilon_1\) satisfies
\begin{equation}
\label{eq:eot-full-gradient-strict}
\sigma_{\emph{grad},\pi^\star}^2(\theta_{\pi^\star}^\star)<\sigma_{\emph{grad},\pi_\varepsilon}^2(\theta_{\pi_\varepsilon}^\star)<\sigma_{\emph{grad},\pi_{\emph{ind}}}^2(\theta_{\pi_{\emph{ind}}}^\star).
\end{equation}
\end{theorem}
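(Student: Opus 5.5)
The proof rests on two earlier results. Proposition~\ref{prop:variance-at-optimum} gives the variance identity at the optimum, and Proposition~\ref{prop:fm-loss-cost-gap} bounds the average conditional target covariance by the transport-cost excess.

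\textbf{Upper bound in \cref{eq:eot-full-gradient-bound}.} For any positive semidefinite \(\Sigma\) we have \(\Tr\{J^\top\Sigma J\}\le\|J\|_{\mathrm{op}}^2\Tr\{\Sigma\}\). Applying this inside \cref{eq:var-at-optimum} with \(\pi=\pi_\varepsilon\) gives
\[
\sigma^2_{\mathrm{grad},\pi_\varepsilon}(\theta_{\pi_\varepsilon}^\star)\le4\operatorname*{ess\,sup}\|J_{\theta_{\pi_\varepsilon}^\star}\|_{\mathrm{op}}^2\,\mathbb E_{\pi_\varepsilon}\bigl[\Tr\{\Cov_{\pi_\varepsilon}(U\mid t,X_t)\}\bigr].
\]
The expectation on the right is \(\mathcal L_{\pi_\varepsilon}(\theta^\star_{\pi_\varepsilon})\), which \cref{eq:fm-loss-cost-gap} bounds by the cost excess.

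\textbf{Positivity.} I would use Lemma~\ref{lem:eot-strict-small-epsilon}, which gives \(\mathcal L_{\pi_\varepsilon}>\mathcal L_{\pi^\star}\ge0\), so \(\Cov_{\pi_\varepsilon}(U\mid t,X_t)\) is nonzero on a set of positive measure. To pass this to the gradient I need \(J\) not to annihilate that covariance. The trainable output bias of Fact~\ref{fact:affine} supplies this: its Jacobian block is the identity \(I_d\). Hence \(\Tr\{J^\top\Sigma J\}\ge\Tr\{\Sigma\}\), and therefore
\[
\sigma^2_{\mathrm{grad},\pi}(\theta_\pi^\star)\ge4\,\mathcal L_\pi(\theta^\star_\pi)
\]
for every coupling \(\pi\). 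This gives positivity.

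\textbf{Strict chain \cref{eq:eot-full-gradient-strict}.} The bias argument sandwiches the variance for every coupling:
\[
4\mathcal L_\pi\le\sigma^2_{\mathrm{grad},\pi}\le4M\mathcal L_\pi.
\]
Here \(M\ge1\) comes from Definition~\ref{rem:bounded-jacobian}. This alone does not order the variances, because the factor \(M\) can swamp the gaps in the losses.

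\emph{Right inequality.} I would show that \(\mathcal L_{\pi_\varepsilon}(\theta^\star_{\pi_\varepsilon})\to\mathcal L_{\pi^\star}(\theta^\star_{\pi^\star})\) as \(\varepsilon\to0\). This follows from \cref{eq:fm-loss-cost-gap}: \(\mathcal L_{\pi^\star}\) is squeezed to \(0\) because the cost excess of \(\pi^\star\) is zero, and the cost excess of \(\pi_\varepsilon\) vanishes as \(\varepsilon\to0\). The latter holds by comparing the objective \cref{eq:population-eot} against the competitor \(\widetilde\pi\) of Remark~\ref{rem:eot-epsilon-zero-existence}, together with a block approximation of \(\pi^\star\) with finite KL. Then the upper bound gives \(\sigma^2_{\pi_\varepsilon}\le4M\mathcal L_{\pi_\varepsilon}\to0\). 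Since \(\sigma^2_{\pi_{\mathrm{ind}}}\ge4\mathcal L_{\pi_{\mathrm{ind}}}>0\), choosing \(\varepsilon_1\) with \(M\mathcal L_{\pi_{\varepsilon_1}}<\mathcal L_{\pi_{\mathrm{ind}}}\) yields the right inequality.

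\emph{Left inequality.} This is the main obstacle. The key observation is that \(\pi^\star=(\mathrm{id},T)_\#q_0\) has zero conditional target variance: the straight-line OT paths do not cross, so \((t,X_t)\) determines \(X_0\) and hence \(U=T(X_0)-X_0\). By \cref{eq:var-at-optimum} this gives \(\sigma^2_{\mathrm{grad},\pi^\star}=0\), consistent with \(\mathcal L_{\pi^\star}=0\) from \cref{eq:fm-loss-cost-gap}. The left inequality then reduces to the positivity already shown. I expect making the non-crossing and zero-loss claim rigorous to be the delicate point, using cyclical monotonicity of \(\operatorname{supp}\pi^\star\). The remaining step is the vanishing cost excess for the entropic coupling as \(\varepsilon\to0\), which is standard \(\Gamma\)-convergence of entropic OT.
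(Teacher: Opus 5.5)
Your proposal is correct and follows the paper's skeleton. The upper bound comes from $\Tr\{J^\top\Sigma J\}\le\|J\|_{\mathrm{op}}^2\Tr\{\Sigma\}$ together with \cref{eq:fm-loss-cost-gap}. Positivity comes from the trainable output bias; your bound $JJ^\top\succeq I_d$ is equivalent to the paper's observation, \cref{eq:bias-gradient-equals-loss}, that the bias block of the gradient covariance has trace exactly $4\mathcal L_\pi(\theta_\pi^\star)$. The right inequality comes from driving $4MD_\varepsilon$ below the independent-coupling variance, where $D_\varepsilon=\mathbb E_{\pi_\varepsilon}[\|X_1-X_0\|_2^2]-W_2^2(q_0,q_1)$.

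Two sub-steps differ from the paper.

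\textbf{The zero variance at $\pi^\star$.} What you call the main obstacle is not one. Your own remark that \cref{eq:fm-loss-cost-gap} gives $\mathcal L_{\pi^\star}(\theta^\star_{\pi^\star})=0$ is already the whole proof. It says $\mathbb E_{\pi^\star}[\Tr\{\Cov_{\pi^\star}(U^{\pi^\star}\mid t,X_t^{\pi^\star})\}]=0$, so the conditional covariance vanishes almost surely, and \cref{eq:var-at-optimum} then gives $\sigma^2_{\mathrm{grad},\pi^\star}(\theta^\star_{\pi^\star})=0$. This is exactly how the paper argues. Your non-crossing argument is correct: cyclical monotonicity of $\operatorname{supp}\pi^\star$ makes $(x_0,x_1)\mapsto(1-t)x_0+tx_1$ injective on the support for $t\in(0,1)$, even without a Brenier map. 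But it only re-derives this fact with extra measurability work.

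\textbf{The limit $D_\varepsilon\to0$.} The paper cites \cref{eq:eot-cost-convergence}: weak cluster points of $\pi_\varepsilon$ are OT-optimal, plus uniform integrability of the cost over $\Pi(q_0,q_1)$. Your route uses finite-KL block approximations $\rho_n$ of $\pi^\star$ and the bound $D_\varepsilon\le\mathbb E_{\rho_n}[\|X_1-X_0\|_2^2]-W_2^2(q_0,q_1)+2d\varepsilon\operatorname{KL}(\rho_n\|\pi_{\mathrm{ind}})$, followed by $\varepsilon\downarrow0$ and then $n\to\infty$. This is a valid and more elementary alternative; the paper uses the same device in the proof of Theorem~\ref{thm:main-entropic-integration}. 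The competitor $\widetilde\pi$ of Remark~\ref{rem:eot-epsilon-zero-existence} on its own only yields $D_\varepsilon<\mathcal L_{\pi_{\mathrm{ind}}}(\theta^\star_{\pi_{\mathrm{ind}}})$, not vanishing, so it plays no role here.

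\textbf{Quantifier.} You need $4M\mathcal L_{\pi_\varepsilon}(\theta^\star_{\pi_\varepsilon})<\sigma^2_{\mathrm{grad},\pi_{\mathrm{ind}}}(\theta^\star_{\pi_{\mathrm{ind}}})$ for every $0<\varepsilon\le\varepsilon_1$, not just at $\varepsilon_1$. This follows from your limit, or from the monotonicity of $D_\varepsilon$ in \cref{eq:eot-cost-monotonicity}. Also cap $\varepsilon_1$ by $\varepsilon_0$ and by the range of $\varepsilon$ on which the uniform Jacobian bound $M$ holds.
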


\subsubsection{Gradient-variance reduction: larger minibatch}\label{sec:grad-var-scale}

We now move to characterize how the gradient variance changes when minibatch size is scaled. Similar to the previous proof structure, we first derive the result upon standard unregularized OT and then extend to the results to entropic OT. Our math idea follows from by connecting the cost residue in \cref{eq:large-k-cost-gap} to the velocity uncertainty and then bridging this uncertainty to the gradient variance.
\vspace{-0.5em}
\paragraph{Unregularized OT.} For each minibatch, we independently draw \(k\) source endpoints from \(q_0\) and \(k\) target endpoints from \(q_1\) and denote the distribution of this selected pair by \(\pi_k\), with \(\pi_1=\pi_{\text{ind}}\). Since \(\pi_k\) is a coupling between \(q_0\) and \(q_1\), its quadratic transport cost cannot be smaller than the cost of optimal coupling $\pi^*$, i.e., \(W_2^2(q_0,q_1)\). Therefore, we can represent this gap as the excess cost residue $\Delta_k$:
\begin{equation}\label{eq:large-k-cost-gap}
\Delta_k\coloneqq\mathbb E_{\pi_k}[\|X_1-X_0\|_2^2]-W_2^2(q_0,q_1).
\end{equation}
Clearly, \(\Delta_k\geq0\), and we apply this cost residual in an interesting way by directly connecting it to the velocity-target uncertainty that drives stochastic gradient variance:

\begin{proposition}
\label{prop:large-k-intrinsic-consistency}
Given \(q_0,q_1\in\mathcal P_2(\mathbb R^d)\), then without any assumptions, for every \(k\geq1\), we have
\begin{equation}\label{eq:large-k-intrinsic-envelope}
\mathbb E_{\pi_k}\!\left[\Tr\{\Cov_{\pi_k}(U^{\pi_k}\mid t,X_t^{\pi_k})\}\right]\leq\Delta_k\xrightarrow{k\to\infty}0.
\end{equation}
\end{proposition}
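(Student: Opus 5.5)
The proof splits into two parts: the inequality $\mathbb E_{\pi_k}[\Tr\{\Cov(U\mid t,X_t)\}]\le\Delta_k$, and the limit $\Delta_k\to0$.

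\textbf{The inequality.} I would repeat the argument behind Proposition~\ref{prop:fm-loss-cost-gap}, but without Assumptions~\ref{ass:real} and~\ref{ass:opt}. The only tools are the $L^2$ projection property of conditional expectation and the dynamic (Benamou--Brenier) characterization of $W_2$.

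First, the left-hand side equals $\mathbb E_{\pi_k}\|U\|^2-\mathbb E_{\pi_k}\|u^{\pi_k}(t,X_t)\|^2$. By the law of total variance this is finite because $\pi_k\in\Pi(q_0,q_1)$ and $q_0,q_1\in\mathcal P_2$. Also $\mathbb E\|U\|^2=\mathbb E_{\pi_k}\|X_1-X_0\|^2$.

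Second, let $\rho_t=\mathrm{Law}(X_t)$. The pair $(\rho_t,u^{\pi_k}(t,\cdot))$ solves the continuity equation, as in the standard marginalization argument of conditional flow matching. To see this, take a test function $\varphi$ and differentiate $\mathbb E\varphi(X_t)$; this gives $\mathbb E[\nabla\varphi(X_t)\cdot U]=\mathbb E[\nabla\varphi(X_t)\cdot u^{\pi_k}(t,X_t)]$. The path runs from $q_0$ to $q_1$. Benamou--Brenier then gives
\[
W_2^2(q_0,q_1)\le\int_0^1\!\!\int\|u^{\pi_k}\|^2\,d\rho_t\,dt=\mathbb E\|u^{\pi_k}(t,X_t)\|^2 .
\]
Combining the two displays yields $\mathbb E\Tr\Cov\le\mathbb E\|X_1-X_0\|^2-W_2^2=\Delta_k$.

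\textbf{The limit.} It remains to show $\Delta_k\to0$, that is, $\mathbb E_{\pi_k}\|X_1-X_0\|^2\to W_2^2(q_0,q_1)$. First, $\pi_k$ is indeed a coupling of $q_0,q_1$: it is the law of a uniformly chosen pair in the minibatch OT plan, and by exchangeability its marginals are $q_0$ and $q_1$. I would interpret the sampling so that the selected index is uniform, which makes this hold.

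Next, $\mathbb E_{\pi_k}\|X_1-X_0\|^2=\mathbb E[W_2^2(\hat\mu_k,\hat\nu_k)]$, where $\hat\mu_k,\hat\nu_k$ are the empirical measures of the two minibatches. By the triangle inequality,
\[
W_2(\hat\mu_k,\hat\nu_k)\le W_2(\hat\mu_k,q_0)+W_2(q_0,q_1)+W_2(q_1,\hat\nu_k).
\]
Squaring and using $(a+b+c)^2\le(1+\eta)b^2+C_\eta(a^2+c^2)$, it suffices to show $\mathbb E W_2^2(\hat\mu_k,q_0)\to0$, and likewise for $q_1$.

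\textbf{Main obstacle.} The hard step is this convergence, which must hold for general $\mathcal P_2$ measures with no rates or moment assumptions beyond second moments. I would argue as follows.
\begin{itemize}
\item Almost surely, $\hat\mu_k\Rightarrow q_0$ weakly by Varadarajan's theorem.
\item Almost surely, $\int\|x\|^2d\hat\mu_k\to\int\|x\|^2dq_0$ by the strong law of large numbers.
\item Together these give $W_2(\hat\mu_k,q_0)\to0$ almost surely.
\item For uniform integrability, use $W_2^2(\hat\mu_k,q_0)\le2\int\|x\|^2d\hat\mu_k+2\int\|x\|^2dq_0$. The right-hand side is uniformly integrable, since its first term is an average of i.i.d.\ $L^1$ variables and such averages are uniformly integrable.
\item Vitali's theorem then gives $\mathbb E W_2^2(\hat\mu_k,q_0)\to0$.
\end{itemize}
Combining this with the triangle-inequality bound yields $\limsup_k\mathbb E W_2^2(\hat\mu_k,\hat\nu_k)\le W_2^2(q_0,q_1)$. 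Together with the trivial lower bound $\mathbb E W_2^2(\hat\mu_k,\hat\nu_k)\ge W_2^2(q_0,q_1)$, this gives $\Delta_k\to0$.
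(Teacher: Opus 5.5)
Your proposal is correct and follows the paper's proof. The bound comes from conditional orthogonality plus the Benamou--Brenier energy lower bound for the continuity equation driven by $u^{\pi_k}$. The limit $\Delta_k\to0$ follows from $\mathbb E_{\pi_k}\|X_1-X_0\|_2^2=\mathbb E[W_2^2(\widehat q_{0,k},\widehat q_{1,k})]$, the triangle inequality, and $\mathbb E[W_2^2(\widehat q_k,q)]\to0$.

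The only real difference is how you prove that last fact. You combine almost-sure $W_2$ convergence (Varadarajan plus the strong law for second moments) with Vitali's theorem, using the uniformly integrable dominating bound $2\int\|x\|_2^2\,d\widehat q_k+2\int\|x\|_2^2\,dq$. The paper instead projects onto a ball of radius $R$, uses dominated convergence for the bounded part, and controls the tails by $6a_R$ with $a_R=\mathbb E_q\|X-P_R(X)\|_2^2$. Both arguments are valid.
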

\vspace{-0.5em}
Indeed, Proposition~\ref{prop:large-k-intrinsic-consistency} plays an important role by bridging the gap between the velocity-target uncertainty and OT minibatch, which allows us to further connect it to stochastic gradient variance:

\begin{theorem}
\label{thm:large-k-eventual-gradient-reduction}
Suppose that \(q_0,q_1\in\mathcal P_2(\mathbb R^d)\) and \emph{Assumptions~\ref{ass:real}} and~\emph{\ref{ass:opt}} hold for every \(\pi_k\). Then,
\begin{equation}\label{eq:large-k-full-convergence}
0\leq\sigma_{\emph{grad},\pi_k}^2(\theta_{\pi_k}^\star)\leq4\Delta_k\operatorname*{ess\,sup}\|J_{\theta_{\pi_k}^\star}(t,X_t^{\pi_k})\|_{\mathrm{op}}^2.
\end{equation}
Given the setup in \emph{Definition~\ref{rem:bounded-jacobian}}, the right-hand side of \cref{eq:large-k-full-convergence} converges to zero, so for any fixed \(k_0\) with \(\sigma_{\emph{grad},\pi_{k_0}}^2(\theta_{\pi_{k_0}}^\star)>0\), \cref{eq:large-k-full-eventual-reduction} strictly holds for all sufficiently large \(k\):
\begin{equation}\label{eq:large-k-full-eventual-reduction}
\sigma_{\emph{grad},\pi_k}^2(\theta_{\pi_k}^\star)<\sigma_{\emph{grad},\pi_{k_0}}^2(\theta_{\pi_{k_0}}^\star).
\end{equation}
\end{theorem}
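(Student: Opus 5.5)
The plan is to combine the optimum variance identity of Proposition~\ref{prop:variance-at-optimum} with the uncertainty envelope of Proposition~\ref{prop:large-k-intrinsic-consistency}. The argument has three steps.

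\textbf{Step 1: a pointwise Jacobian bound.} Fix $k$ and write $\pi=\pi_k$ and $J=J_{\theta_{\pi_k}^\star}(t,X_t^{\pi_k})$. Since Assumptions~\ref{ass:real} and~\ref{ass:opt} hold for $\pi_k$, Proposition~\ref{prop:variance-at-optimum} gives
\[
\sigma_{\text{grad},\pi_k}^2(\theta_{\pi_k}^\star)=4\,\mathbb E_{\pi_k}\bigl[\Tr\{J^\top\Cov_{\pi_k}(U^{\pi_k}\mid t,X_t^{\pi_k})J\}\bigr],
\]
using the affine output bias as in that proposition. Write $\Sigma$ for the conditional covariance. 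It is positive semidefinite, so $\Tr\{J^\top\Sigma J\}=\Tr\{\Sigma^{1/2}JJ^\top\Sigma^{1/2}\}\le\|JJ^\top\|_{\mathrm{op}}\Tr\{\Sigma\}=\|J\|_{\mathrm{op}}^2\Tr\{\Sigma\}$. Bounding $\|J\|_{\mathrm{op}}^2$ by its essential supremum and pulling it out of the expectation yields
\[
\sigma^2\le4\,\operatorname*{ess\,sup}\|J\|_{\mathrm{op}}^2\;\mathbb E_{\pi_k}[\Tr\{\Sigma\}].
\]
Proposition~\ref{prop:large-k-intrinsic-consistency} then bounds the last expectation by $\Delta_k$, which proves \cref{eq:large-k-full-convergence}. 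Nonnegativity is immediate because $\sigma^2$ is a trace of a covariance. If the essential supremum is infinite the bound is trivial, so no case split is needed.

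\textbf{Step 2: convergence to zero.} Under Definition~\ref{rem:bounded-jacobian}, applied to the family $\{\pi_k\}$, the essential supremum is at most the pointwise maximum, which is at most $M$ uniformly in $k$. Hence the right-hand side is at most $4M\Delta_k$. This tends to $0$ because $\Delta_k\to0$ by Proposition~\ref{prop:large-k-intrinsic-consistency}.

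\textbf{Step 3: eventual strict reduction.} Fix $k_0$ with $s:=\sigma_{\text{grad},\pi_{k_0}}^2(\theta_{\pi_{k_0}}^\star)>0$. Choose $K$ so that $4M\Delta_k<s$ for all $k\ge K$. Then for every $k\ge K$ we have $\sigma_{\text{grad},\pi_k}^2\le4M\Delta_k<s$, which is \cref{eq:large-k-full-eventual-reduction}.

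\textbf{Main obstacle.} The substantive input is $\Delta_k\to0$, but it is supplied by Proposition~\ref{prop:large-k-intrinsic-consistency}. Proving it independently would require stability of optimal couplings under empirical convergence together with uniform integrability of $\|X_1-X_0\|^2$. Given that proposition, the only point needing care is the trace inequality for positive semidefinite $\Sigma$ and the measurability of the essential supremum. The latter is handled by working with the deterministic uniform bound $M$ where strictness is required.
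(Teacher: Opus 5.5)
Your proposal is correct and follows essentially the same route as the paper: the optimum variance identity combined with the positive-semidefinite trace bound $\Tr(J^\top\Sigma J)\le\|J\|_{\mathrm{op}}^2\Tr(\Sigma)$, then the envelope $\mathsf R(\pi_k)\le\Delta_k$ from Proposition~\ref{prop:large-k-intrinsic-consistency}, and finally the uniform Jacobian bound $M$, which gives $4M\Delta_k\to0$ and hence the eventual strict reduction. The only differences are cosmetic: the paper requires the Jacobian bound only for $k\ge K_J$, which your argument handles by taking $K\ge K_J$, and it appends a Jensen remark about the batch-averaged update that the statement does not need.
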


\vspace{-1em}
\paragraph{Entropic OT.}
For each \(k\geq1\) and \(\varepsilon>0\), we denote \(\pi_{k,\varepsilon}\) as the distribution of the endpoint pair generated by the entropic minibatch procedure in \S\ref{sec:method} and \(\widehat\pi_{k,\varepsilon}\) as the random empirical plan within minibatches. Similarly, \(\pi_{1,\varepsilon}\!=\!\pi_{\text{ind}}\). For simplicity, we abbreviate the average target uncertainty in Proposition~\ref{prop:large-k-intrinsic-consistency} by \( \mathsf R(\pi)\!=\!\mathbb E_\pi[\Tr\{\Cov_\pi(U^\pi\mid t,X_t^\pi)\}] \). Shown in Proposition~\ref{prop:entropic-large-k-fixed}, we first characterize the effect of increased \(k\) using the similar cost residue as in \cref{eq:large-k-cost-gap}:

\begin{proposition}
\label{prop:entropic-large-k-fixed}
Suppose \(q_0,q_1\in\mathcal P_2(\mathbb R^d)\). For every fixed \(\varepsilon>0\), \(\pi_{k,\varepsilon}\to\pi_\varepsilon\) in \(W_2\), and moreover,
\begin{equation}\label{eq:entropic-large-k-fixed}
\limsup_{k\to\infty}\mathsf R(\pi_{k,\varepsilon})\leq\mathsf R(\pi_\varepsilon)\leq\mathbb E_{\pi_\varepsilon}[\|X_1-X_0\|_2^2]-W_2^2(q_0,q_1).
\end{equation}
Note that \(\pi_{k,\varepsilon}\) and \(\pi_{\emph{ind}}\) are mutually absolutely continuous for every finite \(k\), so \(\mathsf R(\pi_{k,\varepsilon})=0\) if and only if \(\mathsf R(\pi_{\emph{ind}})=0\).
\end{proposition}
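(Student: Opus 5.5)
The proof has three parts: the convergence \(\pi_{k,\varepsilon}\to\pi_\varepsilon\) in \(W_2\), the two inequalities in \cref{eq:entropic-large-k-fixed}, and the mutual absolute continuity.

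\emph{Convergence.} Write \(\pi_{k,\varepsilon}=\E[\widehat\pi_{k,\varepsilon}]\), where \(\widehat\pi_{k,\varepsilon}\) is viewed as a random measure on \(\mathbb R^d\times\mathbb R^d\) built from the empirical marginals \(\widehat q_{0,k},\widehat q_{1,k}\). By Varadarajan's theorem and the strong law applied to second moments, \(W_2(\widehat q_{i,k},q_i)\to0\) almost surely. I then invoke stability of entropic OT with respect to its marginals for fixed \(\varepsilon>0\) and quadratic cost. This is the known continuity result (e.g.\ Ghosal--Nutz--Bernton): if \(W_2\)-convergent marginals are given, the entropic plans converge weakly to the unique population plan \(\pi_\varepsilon\). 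Uniqueness here comes from strict convexity of KL. The plans \(\widehat\pi_{k,\varepsilon}\) have marginals with converging second moments, so their second moments are uniformly integrable, and the weak convergence upgrades to \(W_2\) convergence almost surely. Next, \(\pi_{k,\varepsilon}\) has marginals exactly \(q_0,q_1\). This holds because \(\widehat\pi_{k,\varepsilon}\) has uniform marginals on i.i.d.\ points, and selecting a uniform row and then a column by the exact row conditional yields the right marginals. Since \(W_2^2\) is jointly convex, \(W_2^2(\pi_{k,\varepsilon},\pi_\varepsilon)\le\E[W_2^2(\widehat\pi_{k,\varepsilon},\pi_\varepsilon)]\). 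The expectation tends to zero by dominated convergence, with the domination \(W_2^2(\widehat\pi,\pi_\varepsilon)\le 2\int\|z\|^2d\widehat\pi+2\int\|z\|^2d\pi_\varepsilon\), whose first term equals the empirical second moments; these are uniformly integrable in expectation.

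\emph{Upper semicontinuity of \(\mathsf R\).} By \cref{prop:fm-loss-cost-gap}, \(\mathsf R(\pi)=\E_\pi\|U\|^2-\E\|u^\pi(t,X_t)\|^2\). The first term is continuous under \(W_2\) convergence. For the second term I use the variational form
\[
\E_\pi\|u^\pi\|^2=\sup_{h\in C_b}\bigl\{2\E_\pi\langle h(t,X_t),U\rangle-\E_\pi\|h(t,X_t)\|^2\bigr\},
\]
which is a supremum of functionals continuous in \(W_2\), using that \(U\) has converging second moments. The map \(\pi\mapsto\E_\pi\|u^\pi\|^2\) is therefore lower semicontinuous, so \(\mathsf R\) is upper semicontinuous along \(\pi_{k,\varepsilon}\to\pi_\varepsilon\), giving \(\limsup_k\mathsf R(\pi_{k,\varepsilon})\le\mathsf R(\pi_\varepsilon)\). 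The second inequality in \cref{eq:entropic-large-k-fixed} is the bound of \cref{prop:fm-loss-cost-gap}. To use it without Assumptions~\ref{ass:real}--\ref{ass:opt}, I note that \(\mathsf R(\pi)\le\E_\pi\|U-h\|^2\) for any measurable \(h\), and choose \(h\) as the velocity of the McCann interpolation of an optimal plan conditioned on \((t,X_t)\), exactly as in that proposition's proof.

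\emph{Absolute continuity.} Conditional on the other \(2k-2\) samples, the Sinkhorn plan entries are strictly positive and bounded above by 1. By exchangeability, the density of \(\pi_{k,\varepsilon}\) with respect to \(q_0\otimes q_1\) is \(k\,\E[\widehat\pi_{k,\varepsilon,11}\,k\mid x_1,y_1]\), which is strictly positive and finite. Mutual absolute continuity of the endpoint laws passes to the laws of \((t,X_t,U)\), so the conditional laws of \(U\) given \((t,X_t)\) are mutually absolutely continuous almost everywhere. Hence one conditional law is a Dirac mass exactly when the other is. Since \(\mathsf R=0\) holds iff \(U\) is a.s.\ a function of \((t,X_t)\), the two vanishing conditions are equivalent.

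I expect the main obstacle to be the stability step, in particular the justification of uniform integrability; citing the stability theorem directly is the fallback.
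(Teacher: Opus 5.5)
The one step that does not hold up as written is your justification of the second inequality in \cref{eq:entropic-large-k-fixed}. It also misdescribes the proof of Proposition~\ref{prop:fm-loss-cost-gap}. Bounding $\mathsf R(\pi)\le\mathbb E_\pi\|U-h(t,X_t^\pi)\|_2^2$, with $h$ the velocity field of the McCann interpolation of an optimal plan, only helps if $2\mathbb E_\pi\langle U,h(t,X_t^\pi)\rangle-\mathbb E_\pi\|h(t,X_t^\pi)\|_2^2\geq W_2^2(q_0,q_1)$, and you give no reason for this. That field is determined only on the support of the McCann interpolant $\mu_t^\star$, while $X_t^\pi$ generally charges other regions. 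For example, with $q_0$ uniform on $[0,1]$ and $q_1=\tfrac12(\delta_0+\delta_1)$, $\mu_t^\star$ vanishes on $((1-t)/2,(1+t)/2)$, which $X_t^{\pi_{\mathrm{ind}}}$ charges. So $h(t,X_t^\pi)$ is not even defined without an extension. One could try to rescue this by extending via $\nabla\psi_t$, with $\psi_t$ the Hopf--Lax solution of the Hamilton--Jacobi equation, and integrating $\frac{\mathrm d}{\mathrm dt}\psi_t(X_t^\pi)$. That needs regularity you do not have for general $q_0,q_1\in\mathcal P_2(\mathbb R^d)$.

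The intended step is much simpler, and half of it is already in your semicontinuity paragraph. Take $h=u^\pi$, so $\mathsf R(\pi)=\mathbb E_\pi\|U\|_2^2-\mathbb E_\pi\|u^\pi(t,X_t^\pi)\|_2^2$. For test functions $\phi$, $\frac{\mathrm d}{\mathrm dt}\mathbb E[\phi(X_t^\pi)]=\mathbb E[\nabla\phi(X_t^\pi)^\top u^\pi(t,X_t^\pi)]$, so $(\mathrm{Law}(X_t^\pi),u^\pi)$ solves the continuity equation from $q_0$ to $q_1$. Benamou--Brenier then gives $\mathbb E_\pi\|u^\pi(t,X_t^\pi)\|_2^2\geq W_2^2(q_0,q_1)$. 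No realizability or optimality assumption enters; this is how the paper obtains the bound for every coupling.

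The other parts are sound, with some differences in route.

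\emph{Convergence.} You cite marginal stability of entropic OT, whereas the paper proves it directly. Lemma~\ref{lem:entropic-shadow-bound} gives $\limsup_k\mathsf S_\varepsilon(\widehat q_{0,k},\widehat q_{1,k})\le\mathsf S_\varepsilon(q_0,q_1)$. Joint lower semicontinuity of cost and relative entropy then controls every weak cluster point, and uniqueness of $\pi_\varepsilon$ identifies the limit. In either version you need the endpoint pushforward of the index-level plan in \cref{eq:eot} to be the measure-level entropic plan between $\widehat q_{0,k}$ and $\widehat q_{1,k}$. This is immediate for distinct sample points. When $q_0$ or $q_1$ has atoms it requires the block-uniform-lift argument of Lemma~\ref{lem:entropic-index-measure-reduction}, which you skip.

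\emph{Passing to $\pi_{k,\varepsilon}$.} Going from almost-sure $W_2$ convergence of the empirical plans to $\pi_{k,\varepsilon}$ through joint convexity of $W_2^2$ and a generalized dominated convergence argument is fine. The paper instead gets weak convergence by bounded convergence and upgrades it using the fixed marginals.

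\emph{Absolute continuity.} Your argument through the density $k^2\,\mathbb E[\widehat\pi_{k,\varepsilon,11}\mid X_1,Y_1]\in(0,k]$, using strict positivity of entropic plans and exchangeability, is correct. It is a bit sharper than the paper's event-based argument with $\int q_1(E_x)^k\,q_0(\mathrm dx)>0$.
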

\vspace{-0.5em}
Increasing \(k\) at a fixed \(\varepsilon\) indeed approaches the population entropic coupling, but this convergence alone cannot give the complete reduction result, which also depends on the coupling-specific model Jacobian. So, we next establish the gradient variance reduction under larger minibatch size:

\begin{theorem}
\label{thm:entropic-large-k-fixed-gradient}
Fix a finite baseline \(k_0\geq1\) and consider that \(q_0,q_1\in\mathcal P_2(\mathbb R^d)\), \(\mathsf R(\pi_{\emph{ind}})>0\), and \emph{Assumptions~\ref{ass:real}} and~\emph{\ref{ass:opt}} hold for every \(\pi_{k,\varepsilon}\) under comparison. Then, for every \(k\geq1\) and \(\varepsilon>0\), we have
\begin{equation}\label{eq:entropic-large-k-fixed-bound}
4\mathsf R(\pi_{k,\varepsilon})\leq\sigma_{\emph{grad},\pi_{k,\varepsilon}}^2(\theta_{\pi_{k,\varepsilon}}^\star)\leq4\mathsf R(\pi_{k,\varepsilon})\operatorname*{ess\,sup}\|J_{\theta_{\pi_{k,\varepsilon}}^\star}(t,X_t^{\pi_{k,\varepsilon}})\|_{\mathrm{op}}^2.
\end{equation}
Under \emph{Definition~\ref{rem:bounded-jacobian}} over all sufficiently large \(k\) and sufficiently small \(\varepsilon\), there exists \(\bar\varepsilon>0\) such that, for every fixed \(0<\varepsilon\leq\bar\varepsilon\), there is a finite batch threshold \(K>k_0\) for which every \(k\geq K\) satisfies
\begin{equation}\label{eq:entropic-large-k-fixed-reduction}
\sigma_{\emph{grad},\pi_{k,\varepsilon}}^2(\theta_{\pi_{k,\varepsilon}}^\star)<\sigma_{\emph{grad},\pi_{k_0,\varepsilon}}^2(\theta_{\pi_{k_0,\varepsilon}}^\star).
\end{equation}
\end{theorem}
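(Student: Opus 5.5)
The plan has two parts: the two-sided bound \cref{eq:entropic-large-k-fixed-bound}, then the eventual strict reduction \cref{eq:entropic-large-k-fixed-reduction}.

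\textbf{Two-sided bound.} Fix \(k,\varepsilon\) and write \(\pi=\pi_{k,\varepsilon}\), \(J=J_{\theta_\pi^\star}(t,X_t^\pi)\) and \(\Sigma=\Cov_\pi(U^\pi\mid t,X_t^\pi)\). Proposition~\ref{prop:variance-at-optimum} gives
\[
\sigma^2_{\mathrm{grad},\pi}(\theta_\pi^\star)=4\,\mathbb E_\pi[\Tr\{J^\top\Sigma J\}].
\]
Since \(\Sigma\succeq0\), we have \(\Tr\{J^\top\Sigma J\}=\Tr\{\Sigma JJ^\top\}\leq\|J\|_{\mathrm{op}}^2\Tr\Sigma\). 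Taking expectations and bounding by the essential supremum gives the upper bound \(4\mathsf R(\pi)\operatorname{ess\,sup}\|J\|_{\mathrm{op}}^2\).

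For the lower bound I use the trainable output bias \(b\) from Fact~\ref{fact:affine}. The bias coordinates of \(J\) contribute the block \(\partial v/\partial b=I_d\), so \(JJ^\top\succeq I_d\). Hence \(\Tr\{\Sigma JJ^\top\}\geq\Tr\Sigma\), and taking expectations yields \(4\mathsf R(\pi)\). Equivalently, the bias components of the gradient alone already carry variance \(4\mathsf R\).

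\textbf{Eventual strict reduction.} Let \(M\) be the common Jacobian bound. By the lower bound, the baseline satisfies \(\sigma^2_{\pi_{k_0,\varepsilon}}\geq4\mathsf R(\pi_{k_0,\varepsilon})\). Proposition~\ref{prop:entropic-large-k-fixed} shows \(\mathsf R(\pi_{k_0,\varepsilon})>0\), because \(\mathsf R(\pi_{\mathrm{ind}})>0\) and the two couplings are mutually absolutely continuous.

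For large \(k\), the upper bound together with \cref{eq:entropic-large-k-fixed} gives
\[
\limsup_{k\to\infty}\sigma^2_{\pi_{k,\varepsilon}}\leq4M\,\mathsf R(\pi_\varepsilon)\leq4M\bigl(\mathbb E_{\pi_\varepsilon}\|X_1-X_0\|^2-W_2^2(q_0,q_1)\bigr).
\]
It therefore suffices to choose \(\bar\varepsilon\) so that, for every \(\varepsilon\leq\bar\varepsilon\),
\[
M\bigl(\mathbb E_{\pi_\varepsilon}\|X_1-X_0\|^2-W_2^2\bigr)<\mathsf R(\pi_{k_0,\varepsilon}).
\]
Once this holds, the strict inequality between the \(\limsup\) and the baseline produces a threshold \(K\), which can be taken larger than \(k_0\).

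The left side of this condition tends to \(0\) as \(\varepsilon\to0\), by the standard \(\Gamma\)-convergence of entropic OT to unregularized OT, which gives \(\mathbb E_{\pi_\varepsilon}\|X_1-X_0\|^2\to W_2^2\). This is the same fact behind the existence of \(\varepsilon_0\) in Remark~\ref{rem:eot-epsilon-zero-existence}.

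\textbf{Main obstacle.} The right side, \(\mathsf R(\pi_{k_0,\varepsilon})\), must stay bounded away from zero as \(\varepsilon\to0\) for the fixed \(k_0\).

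If \(k_0=1\), then \(\pi_{k_0,\varepsilon}=\pi_{\mathrm{ind}}\) and the right side is the constant \(\mathsf R(\pi_{\mathrm{ind}})>0\), so the argument closes.

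For general \(k_0\), the plan is as follows.
\begin{itemize}
\item As \(\varepsilon\to0\), the finite entropic plan converges to the unregularized minibatch plan, so \(\pi_{k_0,\varepsilon}\to\pi_{k_0}\) weakly.
\item The unregularized minibatch coupling \(\pi_{k_0}\) still mixes each source point over \(k_0\) random targets, which should give \(\mathsf R(\pi_{k_0})>0\) whenever \(\mathsf R(\pi_{\mathrm{ind}})>0\).
\item Then argue lower semicontinuity of \(\mathsf R\) along \(\varepsilon\to0\). This follows from \(\mathsf R(\pi)=\mathbb E\|U\|^2-\mathbb E\|u^\pi\|^2\): the first term converges under uniform integrability at fixed \(k_0\), and the second is upper semicontinuous as a supremum of \(\mathbb E[\langle U,f\rangle]-\tfrac12\mathbb E\|f\|^2\) over continuous test fields \(f(t,x)\).
\end{itemize}
This gives \(\liminf_{\varepsilon\to0}\mathsf R(\pi_{k_0,\varepsilon})\geq\mathsf R(\pi_{k_0})>0\), and \(\bar\varepsilon\) is then chosen small enough that both sides of the condition separate.

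If this semicontinuity step resists, a fallback is to replace it with the positivity of the baseline variance at each fixed \(\varepsilon\), as in Theorem~\ref{thm:large-k-eventual-gradient-reduction}. However, that only controls \(\mathsf R(\pi_\varepsilon)\) rather than the finite-\(k\) limit uniformly in \(\varepsilon\).
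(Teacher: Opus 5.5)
Your two-sided bound matches the paper's. So does the architecture of the strict reduction: an envelope $\limsup_k\sigma^2_{\mathrm{grad},\pi_{k,\varepsilon}}(\theta^\star_{\pi_{k,\varepsilon}})\le 4MD_\varepsilon$ with $D_\varepsilon\to0$, played against a floor on the $k_0$-baseline that is uniform in small $\varepsilon$. The gap is the floor itself. You correctly identify it as the crux, but your argument runs the semicontinuity the wrong way.

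Writing $\tfrac12\mathbb E\|u^\pi\|_2^2=\sup_f\{\mathbb E\langle U,f(t,X_t)\rangle-\tfrac12\mathbb E\|f(t,X_t)\|_2^2\}$ exhibits $\pi\mapsto\mathbb E\|u^\pi\|_2^2$ as a supremum of functionals that are continuous under weak convergence in $\Pi(q_0,q_1)$. That makes it \emph{lower} semicontinuous, so $\mathsf R=\mathbb E\|U\|_2^2-\mathbb E\|u^\pi\|_2^2$ is \emph{upper} semicontinuous. This is exactly Lemma~\ref{lem:entropic-uncertainty-usc}, which the paper uses on the large-$k$ side. It only gives that $\limsup_{\varepsilon\downarrow0}\mathsf R(\pi_{k_0,\varepsilon})$ is at most $\mathsf R$ of the limit, not a lower bound.

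Lower semicontinuity genuinely fails under weak, even $W_2$, convergence. In $\mathbb R^3$ take $X_0=(A,B)$ with $A\sim\mathrm{Unif}[0,1]$, $B\sim\mathcal N(0,I_2)$, and $X_1=(A+1,R_{\phi_n(A)}B)$, where $R_\phi$ is a planar rotation and $\phi_n(a)=\tfrac\pi2(na-\lfloor na\rfloor)$. All these couplings share their marginals and have $\mathsf R=0$, because $(t,X_t)$ reveals $A$ and $(1-t)I_2+tR_\phi$ is invertible. Yet they converge in $W_2$ to $X_1=(A+1,R_\Phi B)$ with $\Phi\sim\mathrm{Unif}[0,\pi/2]$ independent, whose $\mathsf R$ is positive.

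The missing idea is that, at fixed $k_0$, the $\varepsilon\downarrow0$ convergence holds in total variation, and $\mathsf R$ is TV-continuous on $\Pi(q_0,q_1)$.
\begin{itemize}
\item For each realized pair of batches, $\widehat\pi_{k_0,\varepsilon}$ lies in a fixed compact polytope. It converges entrywise to the unique KL-minimizer $\widehat\pi^{\mathrm{ME}}_{k_0,0}$ on the optimal face, since every cluster point is cost-optimal with no larger KL.
\item Both selected-pair laws sit on the same $k_0^2$ atoms, so they converge in TV batchwise. Dominated convergence then gives $\|\pi_{k_0,\varepsilon}-\pi^{\mathrm{ME}}_{k_0,0}\|_{\mathrm{TV}}\to0$.
\item TV-continuity of $\mathsf R$ follows by truncating $U$ to a ball of radius $L$; the bounded-loss Bayes risk then moves by $O(L^2)$ times the TV distance. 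Add that $\sqrt{\mathsf R}$ is $1$-Lipschitz in the target in $L^2$, and that $\|X_1-X_0\|_2^2$ is uniformly integrable over all couplings of $q_0,q_1$.
\end{itemize}

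The limit is $\pi^{\mathrm{ME}}_{k_0,0}$, not your $\pi_{k_0}$. The theorem assumes no absolute continuity, so empirical OT can have ties, and the entropic limit need not match the tie-breaking rule that defines $\pi_{k_0}$.

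Its positivity should come from $\pi^{\mathrm{ME}}_{k_0,0}\sim\pi_{\mathrm{ind}}$, not from the heuristic that each source ``mixes over $k_0$ targets.'' Equivalence holds because $\pi^{\mathrm{ME}}_{k_0,0}(E)\ge\int q_1(E_x)^{k_0}\,q_0(\mathrm dx)>0$ whenever $(q_0\otimes q_1)(E)>0$. Combine this with invariance of the graph property $U=f(t,X_t)$ under equivalent laws.

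This is Lemma~\ref{lem:entropic-fixed-batch-zero-temperature}. With it, $\mathsf R(\pi_{k_0,\varepsilon})>r_0/2$ for small $\varepsilon$, where $r_0=\mathsf R(\pi^{\mathrm{ME}}_{k_0,0})>0$, and your remaining steps go through.
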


\subsection{On Integration-Error Reduction}\label{sec:int-err}

Beyond optimization, OT couplings induce straighter probability paths, but how this geometry translates into numerical benefits remains unclear. We study integration error at the distribution level by comparing the output of Euler discretization with the target distribution.

\begin{definition}[One-step distributional Euler error]\label{def:one-step-distributional-euler-error}
For a coupling $\pi\in\Pi(q_0,q_1)$, one Euler step from $t=0$ produces $X_0+\mathbb E_\pi[X_1-X_0\mid X_0]=\mathbb E_\pi[X_1\mid X_0]$. Its distributional integration error is
\begin{equation} \label{eq:one-step-distributional-euler-error}
\mathrm{err}(\pi)\coloneqq W_2^2\!\left((\mathbb E_\pi[X_1\mid X_0=\cdot])_\#q_0,q_1 \right),
\end{equation}
i.e., the Wasserstein distance between the one-step output distribution and the target distribution $q_1$.
\end{definition}
\vspace{-0.5em}
Shown in Proposition~\ref{prop:int-err-red}, we first establish the relationship of distributional integration error across optimal, entropic, and independent couplings:
\begin{proposition} \label{prop:int-err-red}
Suppose $q_0,q_1\in\mathcal P_2(\mathbb R^d)$, $q_0\ll\mathcal L^d$, and $q_1$ is not a Dirac mass. Let $\pi^\star=(\mathrm{id},T)_\#q_0$ be the population quadratic-OT coupling, let $\pi_k$ be the unregularized OT plan at batch size $k$, with $\pi_1=\pi_{\emph{ind}}$, and for every fixed $\varepsilon>0$, let $\pi_\varepsilon$ be the population entropic coupling. Then,
\[
\mathrm{err}(\pi^{*})<\mathrm{err}(\pi_{\varepsilon})<\mathrm{err}(\pi_{\emph{ind}}).
\]
\end{proposition}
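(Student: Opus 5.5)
The plan is to compute the two extreme errors in closed form and then place $\mathrm{err}(\pi_\varepsilon)$ strictly between them using the barycentric map $b_\varepsilon(x)\coloneqq\mathbb E_{\pi_\varepsilon}[X_1\mid X_0=x]$ and the law of total variance. Throughout, write $m=\mathbb E_{q_1}[X_1]$ and $\mathrm{Var}(q_1)=\mathbb E_{q_1}\|X_1-m\|_2^2$.

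\emph{The endpoints.} Since $\pi^\star=(\mathrm{id},T)_\#q_0$, we have $\mathbb E_{\pi^\star}[X_1\mid X_0]=T(X_0)$. Because $T_\#q_0=q_1$, this gives $\mathrm{err}(\pi^\star)=0$. Under $\pi_{\mathrm{ind}}$ we have $\mathbb E[X_1\mid X_0]\equiv m$, so the pushforward is $\delta_m$. Hence $\mathrm{err}(\pi_{\mathrm{ind}})=W_2^2(\delta_m,q_1)=\mathrm{Var}(q_1)$, which is strictly positive because $q_1$ is not a Dirac mass. It therefore remains to show $0<\mathrm{err}(\pi_\varepsilon)<\mathrm{Var}(q_1)$.

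\emph{Lower bound $\mathrm{err}(\pi_\varepsilon)>0$.} I will use the standard Schr\"odinger form of the population entropic plan: $\mathrm d\pi_\varepsilon/\mathrm d(q_0\otimes q_1)=\exp\bigl((f(x)+g(y)-\|x-y\|^2/(2d))/\varepsilon\bigr)$, which is strictly positive. Consequently, for $q_0$-a.e.\ $x$ the conditional law of $X_1$ given $X_0=x$ is mutually absolutely continuous with $q_1$. Since $q_1$ is not a Dirac mass, that conditional law is not a Dirac mass either, so $\Tr\Cov_{\pi_\varepsilon}(X_1\mid X_0)>0$ almost surely. Conditional Jensen (total variance) then gives
\[
\mathbb E\|b_\varepsilon(X_0)\|_2^2=\mathbb E\|X_1\|_2^2-\mathbb E\bigl[\Tr\Cov_{\pi_\varepsilon}(X_1\mid X_0)\bigr]<\mathbb E_{q_1}\|X_1\|_2^2 .
\]
So $(b_\varepsilon)_\#q_0$ has a strictly smaller second moment than $q_1$, hence differs from $q_1$, and therefore $W_2^2>0$.

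\emph{Upper bound $\mathrm{err}(\pi_\varepsilon)<\mathrm{Var}(q_1)$.} Use the admissible coupling $(b_\varepsilon(X_0),X_1)$ under $\pi_\varepsilon$. Since $\mathbb E[b_\varepsilon(X_0)]=m$, total variance gives
\[
\mathrm{err}(\pi_\varepsilon)\le\mathbb E_{\pi_\varepsilon}\|X_1-b_\varepsilon(X_0)\|_2^2=\mathrm{Var}(q_1)-\mathrm{Var}\bigl(b_\varepsilon(X_0)\bigr).
\]
It suffices to show that $b_\varepsilon$ is not $q_0$-a.s.\ constant; I expect this to be the main obstacle. Suppose instead that $b_\varepsilon\equiv m$. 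Then $\mathbb E_{\pi_\varepsilon}[X_0^\top X_1]=\mathbb E[X_0]^\top m$, so the transport cost of $\pi_\varepsilon$ equals that of $\pi_{\mathrm{ind}}$. The optimality of $\pi_\varepsilon$ in \cref{eq:population-eot}, compared against the candidate $\pi_{\mathrm{ind}}$, then forces $\varepsilon\operatorname{KL}(\pi_\varepsilon\|\pi_{\mathrm{ind}})\le0$, i.e.\ $\pi_\varepsilon=\pi_{\mathrm{ind}}$.

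To reach a contradiction, I show that $\pi_{\mathrm{ind}}$ is not optimal in \cref{eq:population-eot}. Take a competitor $\widetilde\pi$ with finite KL and strictly smaller quadratic cost, for instance the coupling of Remark~\ref{rem:eot-epsilon-zero-existence} (Appendix~\ref{app:proof-eot-epsilon-existence}). Such a coupling can also be built from bounded density perturbations of $q_0\otimes q_1$ that increase $\mathbb E[X_0^\top X_1]$; this uses that $q_0\ll\mathcal L^d$ and that $q_1$ is non-degenerate. Along the mixture $\pi_s=(1-s)\pi_{\mathrm{ind}}+s\widetilde\pi$, the cost is linear in $s$ with strictly negative slope. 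The KL term is convex in $s$, vanishes at $s=0$, and satisfies $\operatorname{KL}(\pi_s\|\pi_{\mathrm{ind}})=o(s)$. To see this, set $h=\mathrm d\widetilde\pi/\mathrm d\pi_{\mathrm{ind}}$ and write $\operatorname{KL}(\pi_s\|\pi_{\mathrm{ind}})=\int(1+s(h-1))\log(1+s(h-1))\,\mathrm d\pi_{\mathrm{ind}}$. The integrand is $O(s^2)$ pointwise, and the $o(s)$ bound follows by dominated convergence, using $\operatorname{KL}(\widetilde\pi\|\pi_{\mathrm{ind}})<\infty$. Hence the objective at $\pi_s$ is strictly below its value at $\pi_{\mathrm{ind}}$ for small $s>0$. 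This contradicts $\pi_\varepsilon=\pi_{\mathrm{ind}}$, so $\mathrm{Var}(b_\varepsilon(X_0))>0$ and the strict upper bound follows.

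Combining the three steps yields $\mathrm{err}(\pi^\star)=0<\mathrm{err}(\pi_\varepsilon)<\mathrm{Var}(q_1)=\mathrm{err}(\pi_{\mathrm{ind}})$.
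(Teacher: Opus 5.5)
Your proposal is correct and follows the paper's argument: the same closed-form endpoints, positivity of $\mathrm{err}(\pi_\varepsilon)$ from $\pi_\varepsilon\sim\pi_{\text{ind}}$ plus the total-variance second-moment deficit, and the strict upper bound from the envelope $\mathrm{Var}(q_1)-\mathrm{Var}(b_\varepsilon(X_0))$ together with showing that $\pi_{\text{ind}}$ is not entropic-optimal. The differences are cosmetic: you cite the Schr\"odinger form where the paper proves full support by a mixture perturbation, and you argue by contradiction where the paper directly exhibits the bounded perturbation $1+s\,a(x)b(y)$, which is your mixture with a bounded-density competitor; use that competitor here rather than the Remark's, whose construction is stated under extra hypotheses.
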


\vspace{-0.5em}
Finally, based on Proposition~\ref{prop:int-err-red}, we establish the error reduction over larger $k$ by connecting the expected empirical EOT value in \cref{eq:main-expected-eot-value} to the integration error.
\begin{equation}
\label{eq:main-expected-eot-value}
 S_{k,\varepsilon}\coloneqq\mathbb E\!\left[\left\langle\mathbf C,\widehat\pi_{k,\varepsilon}\right\rangle+\varepsilon\operatorname{KL}\!\left(\widehat\pi_{k,\varepsilon}\middle\|\mathbf1\mathbf1^\top/k^2\right)\right].
\end{equation}

\begin{theorem}\label{thm:main-entropic-integration}
Under the conditions of \emph{Proposition~\ref{prop:int-err-red}}, for every fixed $\varepsilon>0$ and $k\geq2$, we have
\begin{equation}
\label{eq:main-entropic-finite-k-chain}
\mathrm{err}(\pi_{k,\varepsilon})
\leq \mathbb E[\|X_1-\mathbb E[X_1]\|_2^2]-\frac{d^2{(S_{1,\varepsilon}-S_{k,\varepsilon})}^2}{\mathbb E[\|X_0-\mathbb E[X_0]\|_2^2]}
<\mathrm{err(\pi_{\emph{ind}})}.
\end{equation}
The upper bound in \cref{eq:main-entropic-finite-k-chain} is also non-increasing in $k$, and
\(
\mathrm{err}(\pi_{k,\varepsilon})\xrightarrow{k\to\infty}\mathrm{err}(\pi_{\varepsilon})
\). Let $\varepsilon_k>0$ be any sequence with $\varepsilon_k\to0$, for every fixed finite baseline $k_0$, there exists $k_1>k_0$ such that all $k>k_1$ satisfies
\begin{equation}
\label{eq:main-entropic-matched-baseline}
\mathrm{err}(\pi_{k,\varepsilon_k}) < \mathrm{err}(\pi_{k_0,\varepsilon_k}).
\end{equation}
\end{theorem}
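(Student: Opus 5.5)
The plan is to reduce everything to one identity for the one-step Euler output. Write $m(x)=\mathbb E_\pi[X_1\mid X_0=x]$ and $Y=m(X_0)$. Since $X_1$ and $Y$ are coupled through $X_0$, we have
\[
\mathrm{err}(\pi)\le \mathbb E\|X_1-Y\|_2^2=\mathbb E\|X_1\|^2-\mathbb E\|m(X_0)\|^2=\mathbb E\|X_1-\mathbb E X_1\|^2-\Cov\text{-energy of }m(X_0).
\]
The last term is $\mathbb E\|m(X_0)-\mathbb E X_1\|^2$.

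\textbf{Step 1: lower-bound the spread of $m(X_0)$ by a correlation.} By Cauchy--Schwarz applied to centered variables,
\[
\mathbb E\langle X_0-\mathbb EX_0,\,m(X_0)-\mathbb EX_1\rangle=\mathbb E\langle X_0-\mathbb EX_0,\,X_1-\mathbb EX_1\rangle\eqqcolon c(\pi),
\]
so $\mathbb E\|m(X_0)-\mathbb EX_1\|^2\ge c(\pi)^2/\mathbb E\|X_0-\mathbb EX_0\|^2$. Under $\pi_{\mathrm{ind}}$ we get $m\equiv\mathbb EX_1$ and $c=0$. Hence $\mathrm{err}(\pi_{\mathrm{ind}})=W_2^2(\delta_{\mathbb EX_1},q_1)=\mathbb E\|X_1-\mathbb EX_1\|^2$, which gives the right-hand equality baseline.

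\textbf{Step 2: relate $c(\pi_{k,\varepsilon})$ to $S_{1,\varepsilon}-S_{k,\varepsilon}$.} The marginals of $\pi_{k,\varepsilon}$ are $q_0,q_1$, so
\[
\mathbb E_{\pi_{k,\varepsilon}}\|X_0-X_1\|^2=\text{const}-2c(\pi_{k,\varepsilon}),\qquad \mathbb E\langle\mathbf C,\widehat\pi_{k,\varepsilon}\rangle=\tfrac{1}{2d}\mathbb E_{\pi_{k,\varepsilon}}\|X_0-X_1\|^2,
\]
using exchangeability of the minibatch. $S_{1,\varepsilon}$ is the independent cost, with KL equal to $0$. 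Since the KL term is nonnegative,
\[
S_{1,\varepsilon}-S_{k,\varepsilon}\le \tfrac{1}{2d}\big(\mathbb E_{\pi_{\mathrm{ind}}}-\mathbb E_{\pi_{k,\varepsilon}}\big)\|X_0-X_1\|^2=c(\pi_{k,\varepsilon})/d.
\]
Now $S_{1,\varepsilon}\ge S_{k,\varepsilon}$, because $S_{k,\varepsilon}$ is nonincreasing in $k$: this is the standard subsample-averaging argument, in which one averages optimal sub-plans over $(k-1)$-subsets to get a feasible plan and uses convexity of KL. Therefore $c\ge d(S_1-S_k)\ge0$, and squaring yields the first inequality.

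\textbf{Step 3: strict inequality, monotonicity, and limits.} Strictness needs $S_{k,\varepsilon}<S_{1,\varepsilon}$ for $k\ge2$. I expect this to be the main obstacle. I would show that for $k=2$, with positive probability over the draws, the independent plan $\mathbf 1\mathbf 1^\top/4$ is not optimal. The first-order condition fails unless the cross term $\langle x_1-x_2,y_1-y_2\rangle=0$. This holds a.s. only if $q_0$ or $q_1$ is Dirac, and $q_0\ll\mathcal L^d$ together with the fact that $q_1$ is not a Dirac mass excludes this. Monotonicity then propagates strictness to all $k\ge2$. The same monotonicity of $S_{k,\varepsilon}$ gives the non-increasing upper bound.

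For $\mathrm{err}(\pi_{k,\varepsilon})\to\mathrm{err}(\pi_\varepsilon)$, I would invoke Proposition~\ref{prop:entropic-large-k-fixed} ($W_2$ convergence). I would then need $L^2$ convergence of the conditional means. I plan to obtain this from the explicit Gibbs form of the row conditionals, $\propto e^{(g_j-C_{ij})/\varepsilon}$, together with stability of the Sinkhorn potentials. I would take uniform convergence on compacts from entropic OT sample-complexity results and pass to the limit by dominated convergence.

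\textbf{Step 4: the matched baseline.} For the matched-baseline claim (\cref{eq:main-entropic-matched-baseline}), as $\varepsilon_k\to0$ I expect $\pi_{k,\varepsilon_k}\to\pi^\star$, giving $\mathrm{err}\to\mathrm{err}(\pi^\star)=0$ since $T_\#q_0=q_1$. Meanwhile $\pi_{k_0,\varepsilon_k}\to\pi_{k_0}$. Using Step 1 and a strict inequality $\mathrm{err}(\pi_{k_0})>0$ (as $m$ averages over the $k_0$ random minibatches), I would conclude the claim for large $k$.
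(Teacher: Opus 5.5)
Your finite-$k$ argument is correct and follows the paper's proof. It uses $\mathbb E_{\pi_{\mathrm{ind}}}\|X_1-X_0\|_2^2-\mathbb E_\pi\|X_1-X_0\|_2^2=2c(\pi)$ with Cauchy--Schwarz, the bound $\mathbb E\langle\mathbf C,\widehat\pi_{k,\varepsilon}\rangle\le S_{k,\varepsilon}$ from $\mathrm{KL}\ge0$, leave-one-out averaging for $S_{k+1,\varepsilon}\le S_{k,\varepsilon}$, and the $2\times2$ perturbation of $\tfrac14\mathbf{1}\mathbf{1}^\top$ for $S_{2,\varepsilon}<S_{1,\varepsilon}$. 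The averaging step needs joint convexity of KL, i.e.\ data processing against the averaged reference $\mathbf{1}_{\{i\ne a,\,j\ne b\}}/(n^2k^2)$; convexity in the first argument alone leaves an extra $2\log\frac{k+1}{k}$.

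The gaps are in the limit statements. For $\mathrm{err}(\pi_{k,\varepsilon})\to\mathrm{err}(\pi_\varepsilon)$ you rightly note that $W_2$ convergence is not enough. However, your Gibbs-form route needs locally uniform convergence of empirical Sinkhorn potentials, and the sample-complexity and stability results you would cite typically assume compact support, sub-Gaussian tails or higher moments. Here only $q_0,q_1\in\mathcal P_2(\mathbb R^d)$ is given. The paper never touches potentials:
\begin{itemize}
\item Keep the KL term you dropped in Step~1 and bound it by data processing: $\mathrm{KL}(\pi_{k,\varepsilon}\|\pi_{\mathrm{ind}})\le\mathbb E[\mathrm{KL}(\widehat\pi_{k,\varepsilon}\|\mathbf{1}\mathbf{1}^\top/k^2)]$.
\item Set $J_\varepsilon(\pi)=\frac{1}{2d}\mathbb E_\pi\|X_1-X_0\|_2^2+\varepsilon\,\mathrm{KL}(\pi\|\pi_{\mathrm{ind}})$ and $S_\varepsilon=\min J_\varepsilon$. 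Combined with the $\varepsilon$-free shadow bound of Lemma~\ref{lem:entropic-shadow-bound}, this gives $S_\varepsilon\le J_\varepsilon(\pi_{k,\varepsilon})\le S_{k,\varepsilon}\le S_\varepsilon+e_k$ with $e_k\to0$.
\item The entropic Pythagorean inequality yields $\mathrm{KL}(\pi_{k,\varepsilon}\|\pi_\varepsilon)\le e_k/\varepsilon$, and Pinsker gives total-variation convergence.
\item $\mathrm{err}$ is continuous in total variation among couplings with common marginals: disintegrate over $q_0$ and compare truncated conditional means in $L^2(q_0)$.
\end{itemize}

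The matched-baseline claim has two holes. On the large-$k$ side, $\pi_{k,\varepsilon_k}\to\pi^\star$ is only asserted. It is a joint limit with no rate linking $k$ and $\varepsilon_k$, so fixed-$\varepsilon$ tools do not transfer. The paper relies on two facts: $e_k$ does not depend on $\varepsilon$, and $S_\varepsilon\to W_2^2(q_0,q_1)/(2d)$. The latter is proved by approximating $\pi^\star$, which typically has infinite relative entropy with respect to $\pi_{\mathrm{ind}}$, by finite-entropy block couplings. Together these force $\mathbb E_{\pi_{k,\varepsilon_k}}\|X_1-X_0\|_2^2\to W_2^2(q_0,q_1)$, and Brenier uniqueness gives $W_2$ convergence. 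Then $\mathrm{err}(\pi_{k,\varepsilon_k})\le\mathbb E_{\pi_{k,\varepsilon_k}}\|X_1-T(X_0)\|_2^2\to0$ because the limit is Monge. That last step needs proof, since err is not weakly continuous in general.

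On the baseline side the argument fails as written: nothing bounds $\mathrm{err}(\pi_{k_0,\varepsilon_k})$ from below uniformly in $k$. Your one-step bound only bounds err from above. Knowing $\mathrm{err}(\pi_{k_0})>0$ does not transfer along weak convergence $\pi_{k_0,\varepsilon}\to\pi_{k_0}$, because err is not lower semicontinuous: Monge couplings in $\Pi(q_0,q_1)$, each with error $0$, can converge weakly to $\pi_{\mathrm{ind}}$, whose error is $\mathbb E\|X_1-\mathbb E X_1\|_2^2>0$.

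The missing idea is a stronger mode of convergence. At fixed $k_0$, each batchwise plan $\widehat\pi_{k_0,\varepsilon}$ converges entrywise as $\varepsilon\downarrow0$ to the maximum-entropy plan on the optimal face; this is a finite-dimensional argument. Hence $\pi_{k_0,\varepsilon}\to\pi_{k_0,0}^{\mathrm{ME}}$ in total variation, and TV-continuity of err gives $\mathrm{err}(\pi_{k_0,\varepsilon_k})\to\mathrm{err}(\pi_{k_0,0}^{\mathrm{ME}})$.

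Positivity of this limit should come from $\pi_{k_0,0}^{\mathrm{ME}}\sim\pi_{\mathrm{ind}}$, proved as in Lemma~\ref{lem:large-k-finite-nondegeneracy}. This makes the conditional law of $X_1$ given $X_0$ equivalent to the non-Dirac $q_1$, so $\mathbb E\|X_1-m(X_0)\|_2^2>0$. Then $\mathrm{err}=0$ would contradict $\mathbb E\|X_1\|_2^2=\mathbb E\|m(X_0)\|_2^2+\mathbb E\|X_1-m(X_0)\|_2^2$. Your remark that $m$ ``averages over the minibatches'' is a heuristic, not a proof of this positivity.
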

\vspace{-0.5em}
Figure~\ref{f3} illustrates the empirical benefit of \cref{eq:main-entropic-matched-baseline} through a qualitative comparison of generated samples at the same NFE; we defer further explanations to \S\ref{sec:exp}.
\section{Experiments} \label{sec:exp}

\begin{figure}[!t]
    \centering
\includegraphics[width=\linewidth]{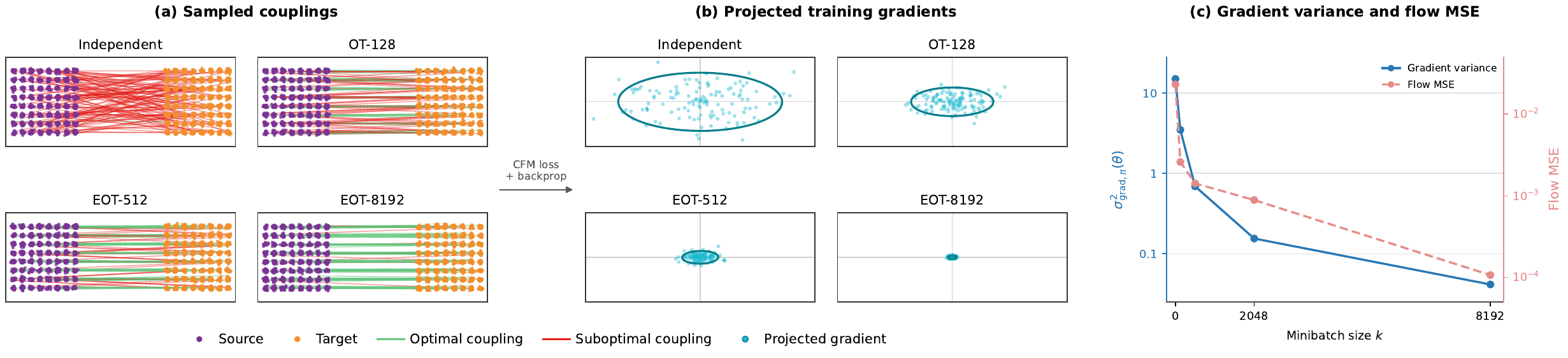}
\vspace{-1.5em}
    \caption{Results on a synthetic multimodal task over independent random coupling, OT-CFM using the minibatch size of $128$, and EOT-CFM with contexts up to $8192$. (a) Source-target pairings, with green and red lines indicating optimal and suboptimal couplings, respectively. Larger minibatches produce more optimal assignments for subsequent flow training. (b) Stochastic gradients evaluated at a trained flow model with regression batch size $128$ and projected onto common two-dimensional axes; ellipses summarize their spread. (c) Full gradient variance (left axis) and final flow-generation MSE (right axis) averaged over three training seeds, scaling along the OT minibatch size, where $k=0$ denotes independent coupling. Scaling the minibatch size consistently reduces both gradient variance and MSE error, yielding more stable training and better flow generation.}
    \label{f1}
    \vspace{-1.5em}
\end{figure}

We study three classes of experiments to show the empirical benefits of scaling minibatch size: (i) first, we consider classical 2D synthetic datasets, where the effects of minibatch scaling can be clearly visualized and interpreted; (ii) we then work on single-cell interpolation flows; apart from the traditional medium-dimensional dataset used in OT-CFM, we include a large-scale, high-dimensional dataset to demonstrate the benefits of scaled minibatches; (iii) finally, we evaluate image flow training on CIFAR-100 \citep{krizhevsky2009learning} and ImageNet-64$\times$64 \citep{deng2009imagenet}.

\vspace{-0.5em}
\paragraph{2D-couplings.}  We consider two synthetic problems: a symmetric problem with $64$ Gaussian modes arranged on an $8\times8$ grid, and a double-ring problem with $8$ modes on each ring. In both problems, the source and target distributions are separated by a horizontal translation, and the ground-truth coupling matches points with the same mode label. We compare independent coupling, OT-CFM with a $128\times128$ minibatch, and EOT with context sizes $n\in\{512,2048,8192\}$, using squared Euclidean cost and training a three-layer MLP for $6{,}000$ updates with batch size $128$ with AdamW.

Results are shown in Figure~\ref{f1}. Specifically, larger OT contexts produce increasingly coherent source-target pairings, as shown by the reduction in red suboptimal crossing connections and the increase in green optimal coupling connections. This improvement directly translates into lower SGD variance, with  tighter projected-gradient distributions that shows scaling the OT context provides cleaner pairings and substantially better training stability. More 2D examples are provided in Appendix~\ref{app:exp}.

\begin{table}[t]
  \centering
  \small
  \setlength{\tabcolsep}{5pt}
  \caption{Results on EB and PBMC (mean $\pm$ std over 5 seeds), with performance metrics ($W_1$, Energy) and gradient variance. Larger contexts improve prediction quality and reduce gradient variance.}
  \resizebox{\columnwidth}{!}{%
  \begin{tabular}{
      lcc
      @{\hspace{12pt}}
      lccc
  }
  \toprule
  \multicolumn{3}{c}{\textbf{EB}}
  &
  \multicolumn{4}{c}{\textbf{PBMC} }
  \\
  \cmidrule(lr){1-3}
  \cmidrule(lr){4-7}
  \textbf{Method}
  & Exact $W_1$ $(\downarrow)$
  & $\sigma_{\text{grad},\pi}^{2}(\theta)$ $(\downarrow)$
  &
  \textbf{Method}
  & Exact $W_1$ $(\downarrow)$
  & CellFlow Energy $(\downarrow)$
  & $\sigma_{\text{grad},\pi}^{2}(\theta)$ $(\downarrow)$
  \\
  \midrule
  OT-CFM-128
  & $0.756 \pm 0.027$
  & $0.247 \pm 0.047$
  & 
  OT-CFM-128
  & $2.612 \pm 0.021$
  & $0.693 \pm 0.102$
  & $1.183 \pm 0.343$
  \\
    \midrule
  EOT-CFM-512
  & $0.746 \pm 0.021$
  & $0.101 \pm 0.017$
  &
  EOT-CFM-1024
  & $2.531 \pm 0.017$
  & $0.435 \pm 0.115$
  & $0.697 \pm 0.096$
  \\

  EOT-CFM-1024
  & $0.719 \pm 0.017$
  & $0.080 \pm 0.012$
  &
  EOT-CFM-4096
  & $2.505 \pm 0.028$
  & $0.385 \pm 0.161$
  & $0.515 \pm 0.035$
  \\
  
  EOT-CFM-1536
  & $0.714 \pm 0.016$
  & $0.071 \pm 0.011$
  &
  EOT-CFM-8192
  & $2.432 \pm 0.015$
  & $0.329 \pm 0.046$
  & $0.384 \pm 0.037$
  \\
  \bottomrule
  \end{tabular}%
  }
\label{tab:singlecell}
\end{table}

\vspace{-0.5em}
\paragraph{High-dimensional single-cell interpolation.} We evaluate on both the embryoid body (EB) benchmark \citep{moon2019visualizing} used in OT-CFM \citep{tong2023improving} and a larger, higher-dimensional PBMC cytokine-response dataset \citep{oesinghaus2025single}, processed by CellFlow \citep{klein2025cellflow}. For EB, it represents $16{,}819$ cells from five time points (the smallest point has 1900 cells) using five whitened principal components and independently split each time point into $80\%/10\%/10\%$ training, validation, and test sets. We leave out each intermediate time point $t\in\{1,2,3\}$, train a three-layer MLP of width $64$ on the remaining transitions for at most $1{,}000$ epochs.

For PBMC, we use $90{,}452$ PBS control cells and $20{,}471$ IFN-$\beta$-stimulated cells from Donor 1, represent each cell using $100$ whitened principal components, and apply the same data split. We train a three-layer MLP of width $256$ for $100{,}000$ steps. Both experiments use AdamW with learning rate $10^{-3}$, weight decay $10^{-5}$, and a regression batch size of $128$, and generate cells using $100$ Euler steps. For EB, we report exact $W_1$ averaged over three held-out time points and five seeds. For PBMC, we compare $2{,}048$ generated and held-out target cells and report exact assignment $W_1$, CellFlow energy, and gradient variance, averaged over five seeds. Results are presented in Table~\ref{tab:singlecell}. Since the smallest time point from EB contains around 1900 cells, the maximum minibatch cannot be scaled as PBMC. For both dataset, larger minibatch leads to both better empirical performance and more stable training, where the improvement consistently gets better with larger minibatch size. With the result deferred to Appendix~\ref{app:exp}, we also use the high dimensional PBMC dataset to study the ablation over $\varepsilon$, moderate thousands-level context already gives a clear effect of scaling and remains robust to the entropic regularization intensity.

\begin{table}[t]
\centering
\caption{
FIDs across different minibatch size and NFEs on CIFAR and ImageNet dataset.
}
\label{tab:fid}
\resizebox{\textwidth}{!}{
\begin{tabular}{l|cccccc|l|cccccc}
\toprule
& \multicolumn{6}{c|}{\textbf{CIFAR-$100$}}
& \multicolumn{7}{c}{\textbf{ImageNet-$64\!\times\!64$}} \\
\cmidrule(lr){2-7}
\cmidrule(lr){8-14}
Minibatch context
& \texttt{NFE} 25
& \texttt{NFE} 50
& \texttt{NFE} 100
& \texttt{NFE} 1000
& \multicolumn{2}{c|}{Adaptive}
& Minibatch context
& \texttt{NFE} 25
& \texttt{NFE} 50
& \texttt{NFE} 100
& \texttt{NFE} 1000
& \multicolumn{2}{c}{Adaptive} \\
\cmidrule(lr){6-7}
\cmidrule(lr){13-14}
&
&
&
&
& \texttt{NFE} & \texttt{FID}
&
&
&
&
&
& \texttt{NFE} & \texttt{FID} \\
\midrule
OT-CFM-$128$
& 12.85
& 10.49
& 9.20
& 8.00
& 142.40 & 7.89
&
OT-CFM-$128$
& 16.95
& 14.88
& 13.98
& 13.33
& 183.91 & 13.26
\\
\midrule
EOT-CFM-$2048$
& 12.77
& 10.28
& 8.77
& 7.79
& 147.16 & 7.71
&
EOT-CFM-$6144$
& 16.82
& 14.69
& 13.80
& 13.24
& 188.17 & 13.20
\\

EOT-CFM-$4096$
& 12.05
& 9.80
& 8.59
& 7.46
& 147.80 & 7.36
&
EOT-CFM-$8192$
& 16.44
& 14.57
& 13.73
& 13.12
& 187.32 & 13.01
\\

EOT-CFM-$8192$
& 11.85
& 9.63
& 8.44
& 7.34
& 146.86 & 7.25
&
EOT-CFM-$10240$
& 16.36
& 14.47
& 13.59
& 13.05
& 187.19 & 12.94
\\
\bottomrule
\end{tabular}
}
\end{table}

\begin{figure*}[t]
    \centering
\includegraphics[width=\linewidth]{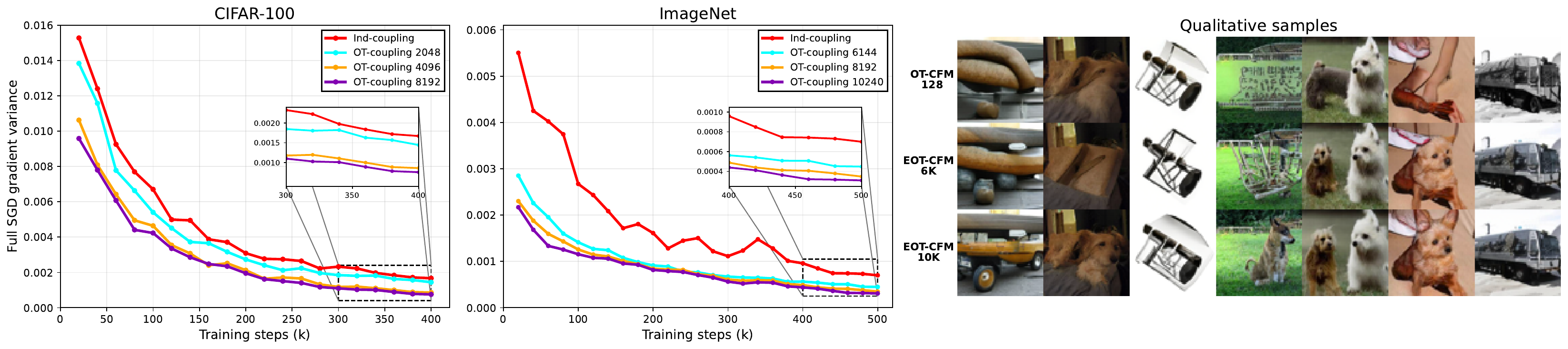}
\vspace{-1.5em}
    \caption{Full SGD gradient variance and qualitative samples. Variance during training on CIFAR-100 (\textbf{left}) and ImageNet (\textbf{middle}) under different minibatch size, with insets highlighting late-stage behavior. Samples from OT-CFM and EOT-CFM from 128 to 10K, generated at NFE 100. (\textbf{right}).}
    \label{f2}
    \vspace{+0.75em}
\includegraphics[width=\linewidth]{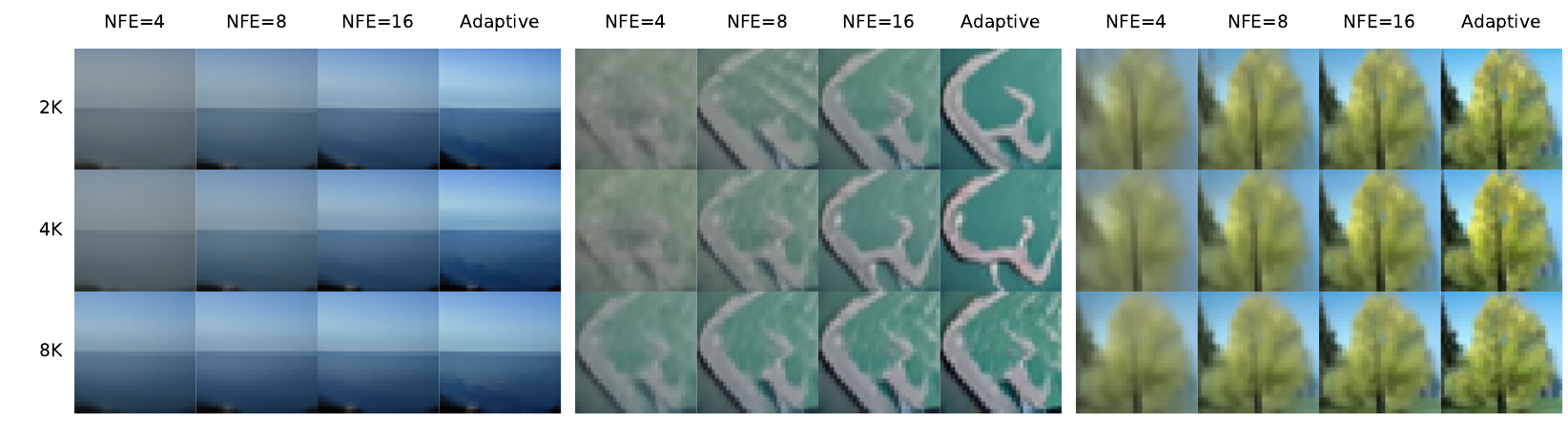}
\vspace{-1.5em}
    \caption{Generated images from initial noise. Rows correspond to minibatch sizes of 2K, 4K, and 8K, while columns compare Euler sampling at NFE 4, 8, and 16 with adaptive Dopri5. Smaller coupling minibatches tend to produce darker, less vivid images, whereas larger minibatches yield brighter samples with more vibrant colors, particularly at low NFEs.}
    \label{f3}
    \vspace{-1.5em}
\end{figure*}

\vspace{-1em}
\paragraph{Image generation.} We evaluate on CIFAR-100 and ImageNet-\(64\!\times\!64\). For CIFAR, all methods follow the same U-Net architecture and optimization hyperparameters from OT-CFM's CIFAR training script \citep{tong2023improving}. We evaluate EMA checkpoints using fixed-step and adaptive Dopri5 integration, and FIDs are computed with CleanFID with \(50{,}000\) generated samples.  

For ImageNet training, we follow the unconditional ADM-style U-Net configuration of \citet{pooladian2023multisample}. We train for \(500{,}000\) updates with a global batch size of \(1{,}024\) across 8$\times$NVIDIA H100 GPUs and a constant learning rate of \(10^{-4}\). Similar to CIFAR-100, all coupling methods share the same architecture and training configuration, and we generate \(50{,}000\) samples from the EMA checkpoint and report FID against the canonical ADM ImageNet-64 reference statistics. Evaluation results are shown in Table~\ref{tab:fid}, and we further present in-training gradient variance and qualitative image generation results in Figure~\ref{f2}. Experiments on both single-cell and image flow matching show that larger coupling minibatches generally reduce gradient variance and improve performance, consistent with our entropic analysis in \S\ref{sec:grad-var-scale}. Figure~\ref{f3} further shows that the gains in generation quality are particularly better at low NFEs. Together, these findings suggest that scaling the entropic coupling context benefits both optimization and sampling, combining lower gradient noise during training with better generation quality under limited integration budgets.

Thanks to its bandwidth-efficient design and compatibility with PyTorch/Triton over HBM GPUs, FlashSinkhorn integrates naturally into efficient FM training. Increasing the minibatch size incurs only a modest increase in peak GPU memory. On ImageNet, the standard OT-CFM baseline reaches a peak memory usage of 39.96 GB, while FlashSinkhorn with a coupling size of 10K uses only 41.08 GB. For runtime, even at the largest context 10K, full OT coupling and pair sampling wall time account for less than $10\%$ of the total time per gradient optimization step. The measured coupling latency is approximately 0.03 seconds at the 4K context and around 0.08 seconds at the 8K context. In addition to the benefit brought by entropic regularization, FlashAttention-style engineering also offers efficient GPU kernels, the native PyTorch integration removes the need for PyTorch-JAX tensor interchange and coordination between separate execution and memory-management runtimes, keeping coupling computation and pair sampling within the flow-matching optimization loop.

\section{Conclusion}

In this work, we study online coupling as a fundamental scaling axis for flow matching and fill the gap by establishing a strict reduction in full stochastic-gradient variance from independent pairing to population OT, characterize how larger online minibatch couplings recover this advantage, and extend the analysis to entropically regularized OT together with parallel guarantees for integration error. These results motivate a scalable approximation based on large online entropic couplings. Across both cell flow and image flow experiments, scaling the online coupling consistently improves training stability and generation quality. In the end, our results provide both a principled theoretical framework for rigorously analyzing the concrete effect brought by the more informative couplings and simple yet practical foundation for scaling flow matching.

\newpage
\bibliography{iclr2026_conference}

@inproceedings{rezende2015variational,
  title={Variational inference with normalizing flows},
  author={Rezende, Danilo and Mohamed, Shakir},
  booktitle={ICML},
  year={2015}
}

@inproceedings{liu2022flow,
  title={Flow straight and fast: Learning to generate and transfer data with rectified flow},
  author={Liu, Xingchao and Gong, Chengyue and Liu, Qiang},
  booktitle={ICLR},
  year={2023}
}

@inproceedings{chen2018neural,
  title={Neural ordinary differential equations},
  author={Chen, Ricky TQ and Rubanova, Yulia and Bettencourt, Jesse and Duvenaud, David K},
  booktitle={NeurIPS},
  year={2018}
}

@inproceedings{lipman2022flow,
  title={Flow matching for generative modeling},
  author={Lipman, Yaron and Chen, Ricky TQ and Ben-Hamu, Heli and Nickel, Maximilian and Le, Matt},
  booktitle={ICLR},
  year={2023}
}

@inproceedings{tong2023improving,
  title={Improving and generalizing flow-based generative models with minibatch optimal transport},
  author={Tong, Alexander and Fatras, Kilian and Malkin, Nikolay and Huguet, Guillaume and Zhang, Yanlei and Rector-Brooks, Jarrid and Wolf, Guy and Bengio, Yoshua},
  booktitle={TMLR},
  year={2024}
}

@inproceedings{pooladian2023multisample,
  title={Multisample flow matching: Straightening flows with minibatch couplings},
  author={Pooladian, Aram-Alexandre and Ben-Hamu, Heli and Domingo-Enrich, Carles and Amos, Brandon and Lipman, Yaron and Chen, Ricky TQ},
   booktitle={ICML},
  year={2023}
}

@inproceedings{zhang2026on,
    title={On Fitting Flow Models with Large Sinkhorn Couplings},
    author={Stephen Y. Zhang and Alireza Mousavi-Hosseini and Michal Klein and Marco Cuturi},
    booktitle={TMLR},
    year={2026}
}

@article{boite2026expected,
  title={Expected Batch Optimal Transport Plans and Consequences for Flow Matching},
  author={Bo{\"\i}t{\'e}, Samuel and Delon, Julie and Nadjahi, Kimia},
  journal={arXiv preprint arXiv:2605.12174},
  year={2026}
}

@inproceedings{cuturi2013sinkhorn,
  title={Sinkhorn distances: Lightspeed computation of optimal transport},
  author={Cuturi, Marco},
  booktitle={NeurIPS},
  year={2013}
}

@inproceedings{altschuler2017near,
  title={Near-linear time approximation algorithms for optimal transport via Sinkhorn iteration},
  author={Altschuler, Jason and Niles-Weed, Jonathan and Rigollet, Philippe},
  booktitle={NeurIPS},
  year={2017}
}

@inproceedings{ye2026flashsinkhorn,
  title={FlashSinkhorn: IO-Aware Entropic Optimal Transport on GPU},
  author={Ye, Felix X-F and Li, Xingjie and Yu, An and Chang, Ming-Ching and Chu, Linsong and Wertheimer, Davis},
  booktitle={ICML},
  year={2026}
}

@inproceedings{flamary2021pot,
  author={Flamary, R{\'e}mi and Courty, Nicolas and Gramfort, Alexandre and Alaya, Mokhtar Z and Boisbunon, Aur{\'e}lie and Chambon, Stanislas and Chapel, Laetitia and Corenflos, Adrien and Fatras, Kilian and Fournier, Nemo and others},
  title   = {P{OT}: Python Optimal Transport},
  booktitle={JMLR},
  year    = {2021}
}

@inproceedings{malamut2025convergence,
  title={Convergence rates of the regularized optimal transport: Disentangling suboptimality and entropy},
  author={Malamut, Hugo and Sylvestre, Maxime},
  booktitle={SIAM Journal on Mathematical Analysis},
  year={2025}
}

@inproceedings{mousavi2026flow,
  title={Flow matching with semidiscrete couplings},
  author={Mousavi-Hosseini, Alireza and Zhang, Stephen and Klein, Michal and Cuturi, Marco},
  booktitle={ICLR},
  year={2026}
}

@inproceedings{kong2026alignflow,
  title={AlignFlow: Improving flow-based generative models with semi-discrete optimal transport},
  author={Kong, Lingkai and Tao, Molei and Liu, Yang and Wang, Bryan and Fu, Jinmiao and Wang, Chien-Chih and Liu, Huidong},
  booktitle={ICLR},
  year={2026}
}

@article{calvo2025weighted,
  title={Weighted conditional flow matching},
  author={Calvo-Ordonez, Sergio and Meunier, Matthieu and Cartea, Alvaro and Reisinger, Christoph and Gal, Yarin and Hernandez-Lobato, Jose Miguel},
  journal={arXiv preprint arXiv:2507.22270},
  year={2025}
}

@inproceedings{reu2026gradient,
  title={Gradient variance reveals failure modes in flow-based generative models},
  author={Reu, Teodora and Dromigny, Sixtine and Bronstein, Michael and Vargas, Francisco},
  booktitle={NeurIPS},
  year={2025}
}

@article{tao2026role,
  title={On the Role of Strain and Vorticity in Numerical Integration Error for Flow Matching},
  author={Tao, Chenxi and Choi, Seung-Kyum},
  journal={arXiv preprint arXiv:2605.06680},
  year={2026}
}

@inproceedings{islam2025longitudinal,
  title={Longitudinal Flow Matching for Trajectory Modeling},
  author={Islam, Mohammad Mohaiminul and Kuipers, Thijs P and Vadgama, Sharvaree and de Vente, Coen and Khan, Afsana and S{\'a}nchez, Clara I and Bekkers, Erik J},
  booktitle={AISTATS},
  year={2026}
}

@article{khan2026flow,
  title={Flow Matching for Offline Reinforcement Learning with Discrete Actions},
  author={Khan, Fairoz Nower and Nahim, Nabuat Zaman and Huang, Ruiquan and Yang, Haibo and Ju, Peizhong},
  journal={arXiv preprint arXiv:2602.06138},
  year={2026}
}

@inproceedings{baheri2026geometrygrounded,
    title={Geometry-Grounded Flow Matching on Compact Manifolds},
    author={Ali Baheri},
    booktitle={ICLR 2026 Workshop on Geometry-grounded Representation Learning and Generative Modeling},
    year={2026}
}

@inproceedings{brenier1991polar,
author = {Brenier, Yann},
title = {Polar factorization and monotone rearrangement of vector-valued functions},
booktitle = {Communications on Pure and Applied Mathematics},
year = {1991}
}

@inproceedings{oymak2019overparameterized,
  title={Overparameterized nonlinear learning: Gradient descent takes the shortest path?},
  author={Oymak, Samet and Soltanolkotabi, Mahdi},
  booktitle={ICML},
  year={2019}
}

@article{zeng2025flow,
  title={Flow Matching in the Low-Noise Regime: Pathologies and a Contrastive Remedy},
  author={Zeng, Weili and Yan, Yichao},
  journal={arXiv preprint arXiv:2509.20952},
  year={2025}
}

@inproceedings{amos2017input,
  title={Input convex neural networks},
  author={Amos, Brandon and Xu, Lei and Kolter, J Zico},
  booktitle={ICML},
  year={2017}
}

@inproceedings{makkuva2020optimal,
  title={Optimal transport mapping via input convex neural networks},
  author={Makkuva, Ashok and Taghvaei, Amirhossein and Oh, Sewoong and Lee, Jason},
  booktitle={ICML},
  year={2020}
}

@inproceedings{fan2020scalable,
  title={Scalable computations of wasserstein barycenter via input convex neural networks},
  author={Fan, Jiaojiao and Taghvaei, Amirhossein and Chen, Yongxin},
  booktitle={ICML},
  year={2021}
}

@inproceedings{chen2024sicnn,
  title={{SICNN}: Sparsity-induced input convex neural network for optimal transport},
  author={Chen, Peter and Xie, Yue and Zhang, Qingpeng},
  booktitle={Optimization for Machine Learning},
  year={2024}
}

@inproceedings{korotin2022neural,
  title={Neural optimal transport},
  author={Korotin, Alexander and Selikhanovych, Daniil and Burnaev, Evgeny},
  booktitle={ICLR},
  year={2023}
}

@inproceedings{asadulaev2024neural,
  title={Neural optimal transport with general cost functionals},
  author={Asadulaev, Arip and Korotin, Aleksandr and Egiazarian, Vage and Mokrov, Petr and Burnaev, Evgeny},
  booktitle={ICLR},
  year={2024}
}

@inproceedings{korotin2019wasserstein,
  title={Wasserstein-2 generative networks},
  author={Korotin, Alexander and Egiazarian, Vage and Asadulaev, Arip and Safin, Alexander and Burnaev, Evgeny},
  booktitle={ICLR},
  year={2021}
}

@inproceedings{korotin2021continuous,
  title={Continuous wasserstein-2 barycenter estimation without minimax optimization},
  author={Korotin, Alexander and Li, Lingxiao and Solomon, Justin and Burnaev, Evgeny},
  booktitle={ICLR},
  year={2021}
}

@inproceedings{lin2019efficient,
  title={On efficient optimal transport: An analysis of greedy and accelerated mirror descent algorithms},
  author={Lin, Tianyi and Ho, Nhat and Jordan, Michael},
  booktitle={ICML},
  year={2019}
}

@inproceedings{lin2022complexity,
  title={On the complexity of approximating multimarginal optimal transport},
  author={Lin, Tianyi and Ho, Nhat and Cuturi, Marco and Jordan, Michael I},
  booktitle={JMLR},
  year={2022}
}

@article{nutz2022entropic,
  title={Entropic optimal transport: Convergence of potentials},
  author={Nutz, Marcel and Wiesel, Johannes},
  journal={Probability Theory and Related Fields},
  year={2022}
}

@inproceedings{fatras2019learning,
  title={Learning with minibatch wasserstein: asymptotic and gradient properties},
  author={Fatras, Kilian and Zine, Younes and Flamary, R{\'e}mi and Gribonval, R{\'e}mi and Courty, Nicolas},
  booktitle={AISTATS},
  year={2020}
}

@article{chambolle2026limit,
  title={Limit Points of Reflow with Minibatch Optimal Transport},
  author={Chambolle, Antonin and Hertrich, Johannes},
  journal={arXiv preprint arXiv:2608.07042},
  year={2026}
}

@inproceedings{kornilov2024optimal,
  title={Optimal flow matching: Learning straight trajectories in just one step},
  author={Kornilov, Nikita and Mokrov, Petr and Gasnikov, Alexander and Korotin, Alexander},
  booktitle={NeurIPS},
  year={2024}
}

@inproceedings{weed2018explicit,
  title={An explicit analysis of the entropic penalty in linear programming},
  author={Weed, Jonathan},
  booktitle={COLT},
  year={2018}
}

@article{rigollet2025sample,
  title={On the sample complexity of entropic optimal transport},
  author={Rigollet, Philippe and Stromme, Austin J},
  journal={The Annals of Statistics},
  year={2025}
}

@article{mordant2024entropic,
  title={The entropic optimal (self-) transport problem: Limit distributions for decreasing regularization with application to score function estimation},
  author={Mordant, Gilles},
  journal={arXiv preprint arXiv:2412.12007},
  year={2024}
}

@article{gonzalez2026finite,
  title={Finite-sample bounds for regularized optimal transport},
  author={Gonz{\'a}lez-Sanz, Alberto and Nutz, Marcel and Stromme, Austin J},
  journal={arXiv preprint arXiv:2606.25947},
  year={2026}
}

@inproceedings{deng2009imagenet,
  title={ImageNet: A large-scale hierarchical image database}, 
  author={Deng, Jia and Dong, Wei and Socher, Richard and Li, Li-Jia and Kai Li and Li Fei-Fei},
  booktitle={CVPR},
  year={2009}
}

@article{krizhevsky2009learning,
  title={Learning multiple layers of features from tiny images},
  author={Krizhevsky, Alex},
  year={2009},
  journal={Tech Report},
  publisher={Toronto, ON, Canada}
}

@article{moon2019visualizing,
  title={Visualizing structure and transitions in high-dimensional biological data},
  author={Moon, Kevin R and Van Dijk, David and Wang, Zheng and Gigante, Scott and Burkhardt, Daniel B and Chen, William S and Yim, Kristina and Elzen, Antonia van den and Hirn, Matthew J and Coifman, Ronald R and others},
  journal={Nature biotechnology},
  year={2019}
}

@article{oesinghaus2025single,
  title={A single-cell cytokine dictionary of human peripheral blood},
  author={Oesinghaus, Lukas and Becker, S{\"o}ren and Vornholz, Larsen and Papalexi, Efthymia and Pangallo, Joey and Moinfar, Amir Ali and Liu, Jenni and La Fleur, Alyssa and Shulman, Maiia and Marrujo, Simone and others},
  journal={BioRXiv},
  year={2025}
}

@article{klein2025cellflow,
  title={CellFlow enables generative single-cell phenotype modeling with flow matching},
  author={Klein, Dominik and Fleck, Jonas Simon and Bobrovskiy, Daniil and Zimmermann, Lea and Becker, S{\"o}ren and Palma, Alessandro and Dony, Leander and Tejada-Lapuerta, Alejandro and Huguet, Guillaume and Lin, Hsiu-Chuan and others},
  journal={bioRxiv},
  year={2025}
}

@article{litman2025scaled,
  title={Scaled-dot-product attention as one-sided entropic optimal transport},
  author={Litman, Elon},
  journal={arXiv preprint arXiv:2508.08369},
  year={2025}
}

@inproceedings{litman2026you,
    title={You Need Better Attention Priors},
    author={Elon Litman and Gabe Guo},
    booktitle={ICML},
    year={2026}
}

@article{beiglbock2017complete,
  title={Complete duality for martingale optimal transport on the line},
  author={Beiglb{\"o}ck, Mathias and Nutz, Marcel and Touzi, Nizar},
  year={2017}
}

@article{benamou2000computational,
  title={A computational fluid mechanics solution to the Monge-Kantorovich mass transfer problem},
  author={Benamou, Jean-David and Brenier, Yann},
  journal={Numerische Mathematik},
  volume={84},
  number={3},
  pages={375--393},
  year={2000},
  publisher={Springer-Verlag Berlin/Heidelberg}
}

@article{eckstein2022quantitative,
  title={Quantitative stability of regularized optimal transport and convergence of Sinkhorn's algorithm},
  author={Eckstein, Stephan and Nutz, Marcel},
  journal={SIAM Journal on Mathematical Analysis},
  volume={54},
  number={6},
  pages={5922--5948},
  year={2022},
  publisher={SIAM}
}

@inproceedings{vaitl2022path,
  title={Path-gradient estimators for continuous normalizing flows},
  author={Vaitl, Lorenz and Nicoli, Kim Andrea and Nakajima, Shinichi and Kessel, Pan},
  booktitle={ICML},
  year={2022}
}

@article{vaitl2022gradients,
  title={Gradients should stay on path: better estimators of the reverse-and forward KL divergence for normalizing flows},
  author={Vaitl, Lorenz and Nicoli, Kim A and Nakajima, Shinichi and Kessel, Pan},
  journal={Machine Learning: Science and Technology},
  year={2022}
}

@inproceedings{kim2025adaptive,
  title={Adaptive non-uniform timestep sampling for accelerating diffusion model training},
  author={Kim, Myunsoo and Ki, Donghyeon and Shim, Seong-Woong and Lee, Byung-Jun},
  booktitle={CVPR},
  year={2025}
}

@inproceedings{gower2019sgd,
  title={SGD: General analysis and improved rates},
  author={Gower, Robert Mansel and Loizou, Nicolas and Qian, Xun and Sailanbayev, Alibek and Shulgin, Egor and Richt{\'a}rik, Peter},
  booktitle={ICML},
  year={2019}
}

@article{zhang2025heuristic,
  title={Why Heuristic Weighting Works: A Theoretical Analysis of Denoising Score Matching},
  author={Zhang, Juyan and Newbury, Rhys and Zhang, Xinyang and Tran, Tin and Kulic, Dana and Burke, Michael},
  journal={arXiv preprint arXiv:2508.01597},
  year={2025}
}

@inproceedings{
vaitl2025path,
title={Path Gradients after Flow Matching},
author={Lorenz Vaitl and Leon Klein},
booktitle={NeurIPS},
year={2025}
}
\bibliographystyle{iclr2026_conference}

\newpage
\appendix
\section{Further Related Works}\label{app:related-works}

We walk through more related works and discuss some related theoretical results from both the FM and OT perspectives:

\vspace{-0.5em}
\paragraph{Flow matching and minibatch couplings.}
Normalizing flows and neural ODEs provide the modeling background for flow matching \citep{rezende2015variational,chen2018neural}, while flow matching and rectified flow introduce regression-based objectives for learning continuous transport dynamics \citep{lipman2022flow,liu2022flow}. More broadly, minibatch Wasserstein objectives have been studied as stochastic surrogates for population OT, including their asymptotic and gradient properties \citep{fatras2019learning}; these results concern the minibatch transport objective itself rather than using its random plan as flow-matching supervision. \citet{tong2023improving} introduce OT-CFM and connect the population OT coupling to dynamic OT, whereas \citet{pooladian2023multisample} analyze gradient covariance conditional on a fixed model input and establish asymptotic BatchOT straightness and transport-cost convergence. 

Very recent work characterizes limit points of Reflow combined with fixed-size minibatch OT through cyclical monotonicity and straightness properties \citep{chambolle2026limit}; we note that this concerns repeated Reflow rather than the optimization or integration behavior of a single online minibatch-CFM objective.

\vspace{-0.5em}
\paragraph{Scaling OT couplings for flow matching.}
\citet{zhang2026on} scales Sinkhorn couplings to millions of samples through an offline construction, where paired samples are precomputed independently of model training and served through an augmented data loader. We note that their finite-batch lower bound motivates increasing coupling size, but it does not establish a strict OT-versus-independent comparison or a full stochastic-gradient result. \citet{boite2026expected} characterize the expected exact-OT batch plan and its transport-cost convergence, while their integration result is restricted to an exact velocity field in a one-dimensional two-atom model, and they also show that when the population OT plan is unique, \(\widehat\pi_k\) converges weakly to \(\pi^\star\), i.e.,
\(
\widehat{\pi}_k \xrightarrow[k\to\infty]{\,w\,} \pi^\star
\). However, we note that this coupling-level convergence does not imply any level of convergence for stochastic gradient variance. 

Semidiscrete methods instead construct dataset-level noise-data alignments, avoiding a fresh discrete OT solve at every iteration \citep{mousavi2026flow,kong2026alignflow}. \citet{calvo2025weighted} replace minibatch OT with Gibbs-weighted independent pairs, avoiding an explicit OT solve but potentially altering the marginals, so larger minibatches provide more weighted pairs rather than scaling a larger candidate pool for optimizing source-target assignments. Our setting instead retains online couplings recomputed from fresh minibatches, with clear scaling benefits merely to thousand-level of minibatch size.

\vspace{-0.5em}
\paragraph{Scalable optimal transport.}
Brenier's characterization motivates representing quadratic-cost OT maps as gradients of convex potentials \citep{brenier1991polar}, for which input-convex neural networks provide a convenient parameterization \citep{amos2017input}. This approach has been used to learn OT maps, plans, and Wasserstein barycenters \citep{makkuva2020optimal,korotin2019wasserstein,fan2020scalable,korotin2021continuous,korotin2022neural,chen2024sicnn,asadulaev2024neural}. Within flow matching, optimal flow matching constrains the velocity parameterization to recover the population quadratic-OT displacement directly \citep{kornilov2024optimal}. These methods learn a population-level map or plan through model structure or a separate neural optimization problem, rather than analyzing the random coupling recomputed from each flow-training minibatch.

\vspace{-0.5em}
\paragraph{Optimization and integration theory.}
\citet{reu2026gradient} analyze the variance of the full parameter gradient in restricted Gaussian transport models under specific analytic velocity parameterizations, and show that low gradient variance can also coexist with undesirable deterministic pairings. We note that these results do not associate independent pairing with random online minibatch OT in a general model class. For numerical integration, \citet{tao2026role} decompose the velocity Jacobian into strain and vorticity and establish favorable Euler properties for exact OT displacement fields. This analysis concerns the geometry of a given velocity field rather than an integration-error ordering induced by finite minibatch couplings. Thus, neither line characterizes how full-gradient variance and integration error jointly change with coupling size and entropic regularization.

\vspace{-0.5em}
\paragraph{Computational entropic OT.}
Entropic regularization makes discrete OT amenable to Sinkhorn matrix scaling \citep{cuturi2013sinkhorn}. Subsequent work establishes approximation-complexity guarantees for Sinkhorn, Greenkhorn, accelerated variants, and multimarginal extensions \citep{altschuler2017near,lin2019efficient,lin2022complexity}, while POT provides widely used implementations \citep{flamary2021pot}. FlashSinkhorn targets GPU efficiency by fusing Sinkhorn updates and avoiding materialization of the full coupling matrix in high-bandwidth memory \citep{ye2026flashsinkhorn}, along with prior results with scaled dotproduct attention for onesided EOT problem \citep{litman2025scaled} and later extension with improved learning prior \citep{litman2026you}.

\vspace{-0.5em}
\paragraph{Statistical and limiting theory of entropic OT.}
For finite linear programs, \citet{weed2018explicit} give explicit non-asymptotic guarantees for convergence of entropically penalized solutions toward unregularized optima. At the population level, EOT potentials, plans, and costs converge toward their unregularized counterparts as $\varepsilon\to0$ \citep{nutz2022entropic,malamut2025convergence}. Statistical results establish parametric sample-complexity bounds at fixed regularization \citep{rigollet2025sample}, limit distributions when regularization decreases with sample size \citep{mordant2024entropic}, and finite-sample bias and variance bounds with explicit dependence on regularization and intrinsic dimension \citep{gonzalez2026finite}.

\section{Missing Proofs}\label{app:proof}

We first clarify a measurability point used throughout the proofs. Since \([0,1]\times\mathbb R^d\) and \(\mathbb R^d\) are standard Borel spaces, a regular conditional law of \(U^\pi\) given \((t,X_t^\pi)\) exists. We fix jointly measurable versions of its conditional mean and covariance. Any two such versions agree almost surely, so their values on null sets do not affect the results below.

\subsection{Proposition~\ref{prop:variance-at-optimum}}\label{app:proof-prop-variance}

\begin{proof}
For any \(\theta\) with \(\mathcal L_\pi(\theta)<\infty\), conditioning on \((t,X_t^\pi)\) and expanding the squared loss gives
\[
    \mathcal L_\pi(\theta)=\mathbb E_\pi\!\left[\|v_\theta(t,X_t^\pi)-u^\pi(t,X_t^\pi)\|_2^2\right]+\mathbb E_\pi\!\left[\|U^\pi-u^\pi(t,X_t^\pi)\|_2^2\right].
\]
The second term is independent of \(\theta\). By realizability, the first term can attain zero. Therefore, global optimality implies \(v_{\theta_\pi^\star}(t,X_t^\pi)=u^\pi(t,X_t^\pi)\) almost surely, and  substituting this into the per-sample gradient yields
\[
G_\pi(\theta_\pi^\star)=2J_{\theta_\pi^\star}(t,X_t^\pi)^\top(u^\pi(t,X_t^\pi)-U^\pi).
\]
So, we have
\[
\mathbb E_\pi[G_\pi(\theta_\pi^\star)\mid t,X_t^\pi]=0.
\]
Thus, the conditional-mean term in \cref{eq:var-decomposition} vanishes. Conditional on \((t,X_t^\pi)\), the Jacobian is fixed, so the conditional gradient covariance is \(4J_{\theta_\pi^\star}^\top\Cov_\pi(U^\pi\mid t,X_t^\pi)J_{\theta_\pi^\star}\), with every Jacobian evaluated at \((t,X_t^\pi)\). Taking its expected trace would give us \cref{eq:var-at-optimum}.
\end{proof}

\subsection{Theorem~\ref{thm:strict-ot-variance}}
\label{app:proof-thm-strict-ot-variance}

Before proving Theorem~\ref{thm:strict-ot-variance}, we derive two supporting facts. First, we define the essential lower and upper model sensitivities at the coupling-specific population optimum by $m_\pi$ and $M_\pi$, respectively:
\[
m_\pi:=\operatorname{ess\,inf}\lambda_{\min}(J_{\theta_\pi^\star}(t,X_t^\pi)J_{\theta_\pi^\star}(t,X_t^\pi)^\top),\qquad M_\pi:=\operatorname{ess\,sup}\|J_{\theta_\pi^\star}(t,X_t^\pi)\|_{\mathrm{op}}^2.
\]
Here, the essential extrema are taken under the distribution of \((t,X_t^\pi)\), and together, they can yield the following variance sandwich bound:

\begin{proposition}[Variance sandwich]\label{prop:gradient-variance-sandwich}
Under \emph{Assumptions~\ref{ass:real}} and \emph{\ref{ass:opt}}, every coupling \(\pi\) under comparison satisfies
\begin{equation}
\label{eq:gradient-variance-lower}
    4m_\pi\mathbb E_\pi\!\left[\|U^\pi-u^\pi(t,X_t^\pi)\|_2^2\right]\leq\sigma_{\emph{grad},\pi}^2(\theta_\pi^\star),
\end{equation}
and, if \(M_\pi<\infty\), we have
\begin{equation}\label{eq:gradient-variance-upper}
\sigma_{\emph{grad},\pi}^2(\theta_\pi^\star)\leq4M_\pi\mathbb E_\pi\!\left[\|U^\pi-u^\pi(t,X_t^\pi)\|_2^2\right].
\end{equation}
\end{proposition}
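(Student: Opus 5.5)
The plan is to start from the identity in Proposition~\ref{prop:variance-at-optimum}. Under Assumptions~\ref{ass:real} and~\ref{ass:opt} it gives
\[
\sigma_{\text{grad},\pi}^2(\theta_\pi^\star)=4\,\mathbb E_\pi\!\left[\Tr\{J^\top\Sigma J\}\right],
\]
where \(J=J_{\theta_\pi^\star}(t,X_t^\pi)\) and \(\Sigma=\Cov_\pi(U^\pi\mid t,X_t^\pi)\) are evaluated at the same random input. It therefore suffices to bound the integrand \(\Tr\{J^\top\Sigma J\}\) pointwise, from below and above, by multiples of \(\Tr\Sigma\). After that, take expectations and use
\[
\mathbb E_\pi[\Tr\Sigma]=\mathbb E_\pi\|U^\pi-u^\pi(t,X_t^\pi)\|_2^2,
\]
which is the law of total variance already recorded after Proposition~\ref{prop:variance-at-optimum}.

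For the pointwise bounds, apply cyclicity of the trace:
\[
\Tr\{J^\top\Sigma J\}=\Tr\{\Sigma^{1/2}JJ^\top\Sigma^{1/2}\}.
\]
Here \(JJ^\top\in\mathbb R^{d\times d}\) is positive semidefinite and \(\Sigma\succeq0\). The Loewner ordering \(\lambda_{\min}(JJ^\top)I_d\preceq JJ^\top\preceq\lambda_{\max}(JJ^\top)I_d\) is preserved under congruence by \(\Sigma^{1/2}\). Taking traces then yields
\[
\lambda_{\min}(JJ^\top)\Tr\Sigma\leq\Tr\{J^\top\Sigma J\}\leq\lambda_{\max}(JJ^\top)\Tr\Sigma .
\]
Since \(\lambda_{\max}(JJ^\top)=\|J\|_{\mathrm{op}}^2\), the upper bound is controlled by \(\|J\|_{\mathrm{op}}^2\).

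Next, pass from these pointwise bounds to the essential extrema \(m_\pi\) and \(M_\pi\). For almost every \((t,X_t^\pi)\) we have \(\lambda_{\min}(JJ^\top)\ge m_\pi\) and \(\|J\|_{\mathrm{op}}^2\le M_\pi\). Because \(\Tr\Sigma\ge0\), multiplying by it preserves these inequalities almost surely. Integrating and multiplying by \(4\) then gives \eqref{eq:gradient-variance-lower} and, when \(M_\pi<\infty\), \eqref{eq:gradient-variance-upper}.

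The main point needing care is integrability and measurability rather than any hard estimate. The conditional covariance and the Jacobian must be jointly measurable functions of \((t,X_t^\pi)\); the paper's opening remark fixing versions handles this. The expectation identities must also be finite: \(\mathbb E\|G_\pi\|^2<\infty\) from Assumption~\ref{ass:opt} ensures \(\sigma^2_{\text{grad},\pi}\) is finite. The lower bound holds trivially if \(m_\pi=0\), and otherwise forces \(\mathbb E_\pi[\Tr\Sigma]<\infty\). For the upper bound, if the right side is infinite there is nothing to prove.
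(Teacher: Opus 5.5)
Your proposal is correct and follows the paper's proof step by step: start from the identity of Proposition~\ref{prop:variance-at-optimum}, bound \(\Tr(J^\top\Sigma J)=\Tr(\Sigma JJ^\top)\) pointwise between \(\lambda_{\min}(JJ^\top)\Tr\Sigma\) and \(\|J\|_{\mathrm{op}}^2\Tr\Sigma\), pass to the essential extrema \(m_\pi\) and \(M_\pi\), and use the tower property to identify \(\mathbb E_\pi[\Tr\Sigma]\) with \(\mathbb E_\pi\|U^\pi-u^\pi(t,X_t^\pi)\|_2^2\). Your congruence-by-\(\Sigma^{1/2}\) argument supplies the justification for the trace inequality, which the paper states without proof.
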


\begin{proof}
For notation simplicity, we denote \(C\!=\!\Cov_\pi(U^\pi\mid t,X_t^\pi)\succeq0\) and \(A\!=\!J_{\theta_\pi^\star}J_{\theta_\pi^\star}^\top\succeq0\) (with the Jacobian evaluated at \((t,X_t^\pi)\)). Cyclicity of the trace gives \(\Tr(J_{\theta_\pi^\star}^\top C J_{\theta_\pi^\star})=\Tr(CA)\), and
\[
\lambda_{\min}(A)\Tr(C)\leq\Tr(CA)\leq\|A\|_{\mathrm{op}}\Tr(C)=\|J_{\theta_\pi^\star}\|_{\mathrm{op}}^2\Tr(C).
\]
By the definition of the essential infimum, $\lambda_{\min}(A)\ge m_\pi$ almost surely. Applying the lower trace inequality pointwise and taking expectations in \cref{eq:var-at-optimum}, we have
\[
\sigma_{\text{grad},\pi}^{2}(\theta_\pi^\star)
=4\mathbb{E}_\pi[\operatorname{Tr}(CA)]
\ge 4\mathbb{E}_\pi[m_\pi\operatorname{Tr}(C)]
=4m_\pi\mathbb{E}_\pi[\operatorname{Tr}(C)].
\]
Similarly, when $M_\pi<\infty$, the definition of the essential supremum implies that
\[
\|A\|_{\mathrm{op}}
=\|J_{\theta_\pi^\star}(t,X_t^\pi)\|_{\mathrm{op}}^2
\le M_\pi
\qquad\text{almost surely}.
\]
Applying the upper trace inequality pointwise and taking expectations yields that
\[
\sigma_{\text{grad},\pi}^{2}(\theta_\pi^\star)
=4\mathbb{E}_\pi[\operatorname{Tr}(CA)]
\le 4\mathbb{E}_\pi[M_\pi\operatorname{Tr}(C)]
=4M_\pi\mathbb{E}_\pi[\operatorname{Tr}(C)].
\]
Finally, since $u^\pi(t,X_t^\pi)=\mathbb{E}_\pi[U^\pi\mid t,X_t^\pi]$, the definition of conditional covariance and the tower property gives us:
\[
\begin{aligned}
\mathbb{E}_\pi[\operatorname{Tr}(C)]
&=\mathbb{E}_\pi\!\left[
\mathbb{E}_\pi\!\left[
\|U^\pi-u^\pi(t,X_t^\pi)\|_2^2
\,\middle|\,t,X_t^\pi
\right]
\right]\\
&=\mathbb{E}_\pi\!\left[
\|U^\pi-u^\pi(t,X_t^\pi)\|_2^2
\right].
\end{aligned}
\]
Substituting this identity into the preceding lower and upper bounds proves \cref{eq:gradient-variance-lower,eq:gradient-variance-upper}, respectively. We also provide some further discussions over Jacobian norm and essential extrema in Remark~\ref{rmk:ess}.
\end{proof}

\begin{remark}\label{rmk:ess}
At each \((t,x)\), \(\lambda_{\min}(J_{\theta_\pi^\star}(t,x)J_{\theta_\pi^\star}(t,x)^\top)\) and \(\|J_{\theta_\pi^\star}(t,x)\|_{\mathrm{op}}^2\) measure the weakest and strongest model sensitivities across output directions. Accordingly, \(m_\pi\) and \(M_\pi\) are their essential lower and upper bounds under the distribution of \((t,X_t^\pi)\) (also note that the essential extrema are suitable here, because \emph{Assumption~\ref{ass:opt}} requires differentiability only almost everywhere and are therefore unaffected by arbitrary values on null sets). We also note that the operator norm in \(M_\pi\) gives a sharper bound than the Frobenius norm, while playing the same role as the maximum Jacobian factor in \citet[Lemma~3.2]{pooladian2023multisample}.
\end{remark}

\vspace{-0.5em}
Second, we show the nondegeneracy supplied by an affine output layer with trainable bias.
\begin{fact}\label{fact:affine}
Over standard OT-CFM's MLP structure of width-$d$ output layer trainable bias, let \(\theta\!=\!(\eta,b)\), where \(b\in\mathbb R^d\), one always have \(v_{\eta,b}(t,x)=\bar v_\eta(t,x)+b\). We note that is explicitly shown in \citet[Appendix~E.2]{tong2023improving} and its code implementation\footnote{See \href{https://github.com/atong01/conditional-flow-matching/blob/cacd4dc8a5b6f50de63780e8f73754c9371f15db/torchcfm/models/models.py}{code} at commit \texttt{cacd4dc}, where it uses \texttt{torch.nn.Linear} with \texttt{bias=True} by default}. 
\end{fact}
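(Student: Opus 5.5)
The plan is to treat Fact~\ref{fact:affine} as a structural identity and then extract the consequence that Theorem~\ref{thm:strict-ot-variance} actually needs: a uniform lower bound on the model sensitivity $m_\pi$ from Proposition~\ref{prop:gradient-variance-sandwich}. First I will write the OT-CFM MLP explicitly as $v_\theta(t,x)=W\,h_\omega(t,x)+b$. Here $h_\omega$ is the penultimate-layer feature map, $W\in\mathbb R^{d\times m}$ is the final weight matrix, and $b\in\mathbb R^d$ is the default \texttt{torch.nn.Linear} bias. Setting $\eta=(\omega,W)$ and $\bar v_\eta(t,x)=W h_\omega(t,x)$ gives $v_{\eta,b}=\bar v_\eta+b$ immediately. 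Since $\bar v_\eta$ does not depend on $b$, this first step is just bookkeeping of the parameter vector.

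Second, I will compute the parameter Jacobian block-wise. At any point where $\eta\mapsto\bar v_\eta(t,x)$ is differentiable, which by Assumption~\ref{ass:opt} holds almost surely at $\theta_\pi^\star$, we have
\[
J_\theta(t,x)=\bigl[\nabla_\eta\bar v_\eta(t,x)\;\; I_d\bigr],
\qquad
J_\theta J_\theta^\top=\nabla_\eta\bar v_\eta\,(\nabla_\eta\bar v_\eta)^\top+I_d\succeq I_d .
\]
It follows that $\lambda_{\min}(J_{\theta_\pi^\star}J_{\theta_\pi^\star}^\top)\geq1$ almost surely, and hence $m_\pi\geq1$ for every coupling $\pi$. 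Substituting this into \cref{eq:gradient-variance-lower} gives $\sigma^2_{\text{grad},\pi}(\theta_\pi^\star)\geq4\,\mathbb E_\pi[\|U^\pi-u^\pi(t,X_t^\pi)\|_2^2]$. This is the architecture-independent lower bound used later, and it also yields the lower inequality in \cref{eq:entropic-large-k-fixed-bound}.

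Third, I will check that adding $b$ does not interfere with Assumptions~\ref{ass:real} and~\ref{ass:opt}. Realizability only asks that some $\theta$ reproduce $u^\pi$, so enlarging the class by a bias does no harm. Differentiability in $b$ is trivial, since the map is affine in $b$.

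I expect the main obstacle to be stating precisely which parameters are included in $\theta$, because the identity block only appears if $b$ is trainable and counted in $J_\theta$. I will handle this by fixing the parametrization $\theta=(\eta,b)$ explicitly, which is the configuration cited from \citet[Appendix~E.2]{tong2023improving} and its code. With that convention, the remaining argument is the one-line positive semidefinite bound above.
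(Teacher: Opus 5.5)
Your proposal is correct and follows the paper. The paper justifies the Fact simply by reading off the OT-CFM MLP, whose final layer is a \texttt{torch.nn.Linear} with \texttt{bias=True}; that is exactly your decomposition $v_\theta=W h_\omega+b$ with $\bar v_\eta=W h_\omega$. Your Jacobian step $J_{\theta_\pi^\star}J_{\theta_\pi^\star}^\top=\nabla_\eta\bar v_{\eta_\pi^\star}\nabla_\eta\bar v_{\eta_\pi^\star}^\top+I_d\succeq I_d$ is the paper's Lemma~\ref{lem:additive-output-bias}, and it is used the same way in Theorem~\ref{thm:strict-ot-variance} and \cref{eq:entropic-large-k-fixed-bound}.

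One wording issue: the bound $m_\pi\geq1$ is not ``architecture-independent.'' It relies on the bias being trainable, counted in $\theta$, and of full width $d$; a per-channel convolutional output bias, as in the image U-Net, would not yield an $I_d$ block.
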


With Fact~\ref{fact:affine}, we then establish Lemma~\ref{lem:additive-output-bias}:
\begin{lemma} \label{lem:additive-output-bias}
Under \emph{Assumptions~\ref{ass:real}} and~\emph{\ref{ass:opt}}, we have \(J_{\theta_\pi^\star}(t,x)J_{\theta_\pi^\star}(t,x)^\top\succeq I_d\) almost everywhere, so \(m_\pi\geq1\) and
\(
\sigma_{\emph{grad},\pi}^2(\theta_\pi^\star)\geq4\mathbb E_\pi\!\left[\|U^\pi-u^\pi(t,X_t^\pi)\|_2^2\right]
\).
\end{lemma}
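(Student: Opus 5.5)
The plan is to use the parameter split $\theta=(\eta,b)$ from Fact~\ref{fact:affine}. Since $v_{\eta,b}(t,x)=\bar v_\eta(t,x)+b$, the Jacobian with respect to $\theta$ has the block form
\[
J_\theta(t,x)=\bigl[\,\nabla_\eta\bar v_\eta(t,x)\;\;\nabla_b v_{\eta,b}(t,x)\,\bigr]=\bigl[\,\bar J_\eta(t,x)\;\;I_d\,\bigr],
\]
because $b$ enters additively with identity derivative. Here $\bar J_\eta=\nabla_\eta\bar v_\eta\in\mathbb R^{d\times(p-d)}$. This form holds at every point where $\bar v_\eta$ is differentiable in $\eta$. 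By Assumption~\ref{ass:opt}, this is the case at $\theta_\pi^\star=(\eta_\pi^\star,b_\pi^\star)$ for almost every $(t,X_t^\pi)$.

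Multiplying the blocks gives
\[
J_{\theta_\pi^\star}J_{\theta_\pi^\star}^\top=\bar J_{\eta_\pi^\star}\bar J_{\eta_\pi^\star}^\top+I_d\succeq I_d,
\]
since $\bar J\bar J^\top\succeq0$. This holds almost surely under the law of $(t,X_t^\pi)$, which is the sense in which "almost everywhere" should be read. Consequently $\lambda_{\min}(J_{\theta_\pi^\star}J_{\theta_\pi^\star}^\top)\geq1$ almost surely. Taking the essential infimum yields $m_\pi\geq1$.

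The variance bound then follows directly from the lower half of the variance sandwich, Proposition~\ref{prop:gradient-variance-sandwich} (\cref{eq:gradient-variance-lower}), which applies under Assumptions~\ref{ass:real} and~\ref{ass:opt}:
\[
\sigma_{\text{grad},\pi}^2(\theta_\pi^\star)\geq4m_\pi\,\mathbb E_\pi\!\left[\|U^\pi-u^\pi(t,X_t^\pi)\|_2^2\right]\geq4\,\mathbb E_\pi\!\left[\|U^\pi-u^\pi(t,X_t^\pi)\|_2^2\right].
\]
The second inequality uses $m_\pi\geq1$ and the nonnegativity of the expectation.

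The only delicate point is measure-theoretic. Differentiability is granted only almost surely, and the block identity $\nabla_b v=I_d$ must hold on that same full-measure set. It does, because differentiability of $\theta\mapsto\bar v_\eta+b$ at $\theta^\star$ is equivalent to differentiability of $\eta\mapsto\bar v_\eta$ at $\eta^\star$, while the $b$-derivative is always $I_d$. One must also make sure the essential infimum is taken with respect to the law of $(t,X_t^\pi)$, consistent with the definition of $m_\pi$. I expect no other obstacles.
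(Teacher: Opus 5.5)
Your proposal is correct and follows essentially the same route as the paper: you write the Jacobian in block form $[\nabla_\eta\bar v_{\eta_\pi^\star}\;\;I_d]$ via Fact~\ref{fact:affine}, obtain $J_{\theta_\pi^\star}J_{\theta_\pi^\star}^\top=\nabla_\eta\bar v_{\eta_\pi^\star}\nabla_\eta\bar v_{\eta_\pi^\star}^\top+I_d\succeq I_d$, and conclude from the lower bound \cref{eq:gradient-variance-lower} of the variance sandwich. Your extra measure-theoretic remarks (differentiability in $\theta$ at $\theta_\pi^\star$ is equivalent to differentiability in $\eta$, and the essential infimum is taken under the law of $(t,X_t^\pi)$) are accurate refinements that the paper leaves implicit.
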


\begin{proof}
Since \(\nabla_bv_{\eta,b}(t,x)=I_d\), the full parameter Jacobian at the coupling-specific optimum is
\[
J_{\theta_\pi^\star}(t,x)=\begin{bmatrix}\nabla_\eta\bar v_{\eta_\pi^\star}(t,x)&I_d\end{bmatrix}, \quad J_{\theta_\pi^\star}(t,x)J_{\theta_\pi^\star}(t,x)^\top=\nabla_\eta\bar v_{\eta_\pi^\star}(t,x)\nabla_\eta\bar v_{\eta_\pi^\star}(t,x)^\top+I_d\succeq I_d.
\]
The conclusion follows from \cref{eq:gradient-variance-lower}.
\end{proof}
\vspace{-0.5em}

We now prove the theorem:
\vspace{-0.5em}
\begin{proof}[Proof of Theorem~\ref{thm:strict-ot-variance}]
We first consider population OT. Choose a Borel representative of the Brenier map and a Borel set \(D\) of full \(q_0\)-measure on which \(T=\nabla\varphi\). Monotonicity of gradients of convex functions gives us
\[
    \langle T(x)-T(y),x-y\rangle\geq0,\qquad x,y\in D.
\]
For \(t<1\), we write \(F_t(x)=(1-t)x+tT(x)\). If \(x\ne y\), then we have
\[
    \langle F_t(x)-F_t(y),x-y\rangle\geq(1-t)\|x-y\|_2^2>0,
\]
so \(F_t\) is injective on \(D\).

It's clear that Borel map \(\Phi(t,x)=(t,F_t(x))\) from \([0,1)\times D\) into \([0,1)\times\mathbb R^d\) is injective. By the Lusin-Souslin theorem, its image is Borel and its inverse is Borel as well. Extending the inverse by a fixed point off its image yields a Borel map \(\Psi\) such that, almost surely, we have
\[
    X_0=\operatorname{pr}_2[\Psi(t,X_t^{\pi^\star})].
\]
Since \(t\!=\!1\) has probability zero, this recovers \(X_0\) from \((t,X_t^{\pi^\star})\) almost surely, so \(U^{\pi^\star}=T(X_0)-X_0\) is measurable with respect to \(\sigma(t,X_t^{\pi^\star})\) (and also \(\Cov_{\pi^\star}(U^{\pi^\star}\mid t,X_t^{\pi^\star})=0\) almost surely). 

Assumptions~\ref{ass:real} and \ref{ass:opt} then give \(v_{\theta_{\pi^\star}^\star}(t,X_t^{\pi^\star})=u^{\pi^\star}(t,X_t^{\pi^\star})=U^{\pi^\star}\) almost surely. Therefore, every per-sample gradient vanishes almost surely and \(\sigma_{\text{grad},\pi^\star}^2(\theta_{\pi^\star}^\star)=0.\)

We next consider independent coupling and first extend \(f_0,f_1\) by zero outside \(\Omega\). For fixed \(t\), the map
\[
    (x_0,x_1)\mapsto(x,u)=((1-t)x_0+tx_1,x_1-x_0)
\]
has inverse \(x_0=x-tu\), \(x_1=x+(1-t)u\), and absolute Jacobian determinant one. Therefore, \((X_t,U)\) has joint density \(h_t(x,u)=f_0(x-tu)f_1(x+(1-t)u)\). For notation clarity, we define
\[
\rho_t(x)=\int h_t(x,u)\,\mathrm du,\qquad m_t(x)=\rho_t(x)^{-1}\int u\,h_t(x,u)\,\mathrm du,
\]
and, when \(\rho_t(x)>0\), we can derive that
\[
\Cov_{\pi_{\text{ind}}}(U^{\pi_{\text{ind}}}\mid t,X_t^{\pi_{\text{ind}}}=x)=\rho_t(x)^{-1}\int(u-m_t(x))(u-m_t(x))^\top h_t(x,u)\,\mathrm du,
\]
which sets \(m_t(x)=0\) when \(\rho_t(x)=0\). Thus, \(\rho_t\) is the density of \(X_t\), while note that \(m_t\) and above are versions of its conditional mean velocity and covariance, respectively.

Next, we fix \(\bar x\in\Omega^\circ\) and \(r>0\) with \(B(\bar x,2r)\subset\Omega\). 

For \(x\in B(\bar x,r)\), \(u\in B(0,r)\), and \(t\in[0,1]\), it's clear that both \(x-tu\) and \(x+(1-t)u\) lie in \(\Omega\). Therefore, we have
\[
    h_t(x,u)\geq\gamma^2,\qquad \rho_t(x)\geq\gamma^2\omega_dr^d>0.
\]
For every \(a\in\mathbb R^d\) and such \((t,x)\), we can further derive
\begin{align*}
    \rho_t(x)a^\top\Cov_{\pi_{\text{ind}}}(U^{\pi_{\text{ind}}}\mid t,X_t^{\pi_{\text{ind}}}=x)a&\geq\gamma^2\int_{B(0,r)}(a^\top u-a^\top m_t(x))^2\,\mathrm du \\
    & \geq\frac{\gamma^2\omega_dr^{d+2}}{d+2}\|a\|_2^2,
\end{align*}
where the second step follows because the cross term integrates to zero by symmetry and \(\int_{B(0,r)}uu^\top\,\mathrm du=\omega_dr^{d+2}I_d/(d+2)\). Therefore, we can finally derive
\[
    \rho_t(x)\Cov_{\pi_{\text{ind}}}(U^{\pi_{\text{ind}}}\mid t,X_t^{\pi_{\text{ind}}}=x)\succeq\frac{\gamma^2\omega_dr^{d+2}}{d+2}I_d.
\]
The joint law of \((t,X_t)\) has density \(\rho_t(x)\,\mathrm dt\,\mathrm dx\). Taking the trace and integrating over \(t\in[0,1]\) and \(x\in B(\bar x,r)\) eventually yields
\[
    \mathbb E_{\pi_{\text{ind}}}\!\left[\|U^{\pi_{\text{ind}}}-u^{\pi_{\text{ind}}}(t,X_t^{\pi_{\text{ind}}})\|_2^2\right]\geq\frac{d\gamma^2\omega_d^2}{d+2}r^{2d+2}.
\]
By Lemma~\ref{lem:additive-output-bias}, we have
\[
    \sigma_{\text{grad},\pi_{\text{ind}}}^2(\theta_{\pi_{\text{ind}}}^\star)\geq\frac{4d\gamma^2\omega_d^2}{d+2}r^{2d+2}>0.
\]
Subtracting the zero population-OT variance, we can conclude \cref{eq:strict-ot-gap}.
\end{proof}

\subsection{Proposition~\ref{prop:fm-loss-cost-gap}}
\label{app:proof-fm-loss-cost-gap}

\begin{proof}
By Assumptions~\ref{ass:real} and \ref{ass:opt}, \(v_{\theta_\pi^\star}(t,X_t^\pi)=u^\pi(t,X_t^\pi)\) almost surely. Therefore, the loss at optimality equals the conditional target variance, and conditional orthogonality gives us that
\begin{equation}
\label{eq:velocity-second-moment-decomposition}
\mathbb E_\pi[\|U^\pi\|_2^2]=\mathbb E_\pi[\|u^\pi(t,X_t^\pi)\|_2^2]+\mathcal L_\pi(\theta_\pi^\star).
\end{equation}
For every smooth compactly supported test function \(\phi\), differentiation along the linear interpolation and conditioning on \((t,X_t^\pi)\) at each fixed \(t\), we have
\[
\frac{\mathrm d}{\mathrm dt}\mathbb E[\phi(X_t^\pi)]=\mathbb E[\nabla\phi(X_t^\pi)^\top U^\pi]=\mathbb E[\nabla\phi(X_t^\pi)^\top u^\pi(t,X_t^\pi)].
\]
Thus, together with \(u^\pi\), the marginal path induced by \(\pi\) satisfies the continuity equation and is admissible in the Benamou--Brenier formulation \citep{benamou2000computational}. So, its integrated kinetic energy is at least \(W_2^2(q_0,q_1)\), i.e.,
\[
\mathbb E_\pi[\|u^\pi(t,X_t^\pi)\|_2^2]\geq W_2^2(q_0,q_1).
\]
Since \(\mathbb E_\pi[\|U^\pi\|_2^2]=\mathbb E_\pi[\|X_1-X_0\|_2^2]\), combining this inequality with \cref{eq:velocity-second-moment-decomposition} proves \cref{eq:fm-loss-cost-gap}. For \(\pi^\star\), recall its transport cost is exactly \(W_2^2(q_0,q_1)\), so the same bound also gives \(\mathcal L_{\pi^\star}(\theta_{\pi^\star}^\star)=0\).
\end{proof}

\subsection{Lemma~\ref{lem:eot-strict-small-epsilon}}
\label{app:proof-eot-small-epsilon}

\begin{proof}
Let \(0\!<\!\varepsilon_1\!<\!\varepsilon_2\). Since \(q_0,q_1\in\mathcal P_2(\mathbb R^d)\), independent coupling is a finite-objective competitor, so both quantities are finite. We use \(\pi_{\varepsilon_2}\) as a competitor for the \(\varepsilon_1\)-problem and \(\pi_{\varepsilon_1}\) as a competitor for the \(\varepsilon_2\)-problem. Their optimality gives us that
\[
\begin{aligned}
\frac{\mathbb E_{\pi_{\varepsilon_1}}[\|X_1-X_0\|_2^2]}{2d}
+\varepsilon_1\operatorname{KL}(\pi_{\varepsilon_1}\mid\pi_{\text{ind}})
&\leq
\frac{\mathbb E_{\pi_{\varepsilon_2}}[\|X_1-X_0\|_2^2]}{2d}
+\varepsilon_1\operatorname{KL}(\pi_{\varepsilon_2}\mid\pi_{\text{ind}}),\\
\frac{\mathbb E_{\pi_{\varepsilon_2}}[\|X_1-X_0\|_2^2]}{2d}
+\varepsilon_2\operatorname{KL}(\pi_{\varepsilon_2}\mid\pi_{\text{ind}})
&\leq
\frac{\mathbb E_{\pi_{\varepsilon_1}}[\|X_1-X_0\|_2^2]}{2d}
+\varepsilon_2\operatorname{KL}(\pi_{\varepsilon_1}\mid\pi_{\text{ind}}).
\end{aligned}
\]
Adding these inequalities and canceling the transport costs yields
\[
(\varepsilon_2-\varepsilon_1)
\left(
\operatorname{KL}(\pi_{\varepsilon_2}\mid\pi_{\text{ind}})
-\operatorname{KL}(\pi_{\varepsilon_1}\mid\pi_{\text{ind}})
\right)
\leq 0.
\]
Because \(\varepsilon_2>\varepsilon_1\), we have
\[
\operatorname{KL}(\pi_{\varepsilon_1}\mid\pi_{\text{ind}})
\geq
\operatorname{KL}(\pi_{\varepsilon_2}\mid\pi_{\text{ind}}).
\]
Substituting this into the first optimality inequality further gives
\[
\mathbb E_{\pi_{\varepsilon_1}}[\|X_1-X_0\|_2^2]
-\mathbb E_{\pi_{\varepsilon_2}}[\|X_1-X_0\|_2^2]
\leq
2d\varepsilon_1
\left(
\operatorname{KL}(\pi_{\varepsilon_2}\mid\pi_{\text{ind}})
-\operatorname{KL}(\pi_{\varepsilon_1}\mid\pi_{\text{ind}})
\right)
\leq 0.
\]
Therefore, the relative entropy is non-increasing in \(\varepsilon\), while the transport cost is non-decreasing:
\begin{equation}
\label{eq:eot-cost-monotonicity}
\begin{aligned}
\operatorname{KL}(\pi_{\varepsilon_1}\mid\pi_{\text{ind}})
&\geq\operatorname{KL}(\pi_{\varepsilon_2}\mid\pi_{\text{ind}}),\\
\mathbb E_{\pi_{\varepsilon_1}}[\|X_1-X_0\|_2^2]
&\leq\mathbb E_{\pi_{\varepsilon_2}}[\|X_1-X_0\|_2^2].
\end{aligned}
\end{equation}
For every \(0<\varepsilon\leq\varepsilon_0\), we can now apply \cref{eq:fm-loss-cost-gap,eq:eot-cost-monotonicity,eq:eot-small-epsilon-certificate} in order to obtain
\[
\begin{aligned}
\mathcal L_{\pi_\varepsilon}(\theta_{\pi_\varepsilon}^\star)
&\leq \mathbb E_{\pi_\varepsilon}[\|X_1-X_0\|_2^2]-W_2^2(q_0,q_1)\\
&\leq \mathbb E_{\pi_{\varepsilon_0}}[\|X_1-X_0\|_2^2]-W_2^2(q_0,q_1)\\
&<\mathcal L_{\pi_{\text{ind}}}(\theta_{\pi_{\text{ind}}}^\star),
\end{aligned}
\]
which proves the strict upper inequality in \cref{eq:eot-strict-population-loss-chain}.
\[
\exp\!\left(
\frac{\varphi_\varepsilon(x_0)+\psi_\varepsilon(x_1)-\|x_1-x_0\|_2^2/(2d)}{\varepsilon}
\right)>0
\]
for \(\pi_{\text{ind}}\)-almost every \((x_0,x_1)\) \citep{nutz2022entropic}. Thus, \(\pi_\varepsilon\!\sim\!\pi_{\text{ind}}\), meaning that the two couplings have exactly the same null sets. Therefore, we have
\[
\mathbb E_\pi[U^\pi\mid t,X_t^\pi]
=
f(t,X_t^\pi)
=
U^\pi
\qquad\text{almost surely},
\]
so the loss is zero.
\[
\bigl(t,x-tu,x+(1-t)u\bigr).
\]
We write \(P_\pi=\Phi_\#(\lambda\otimes\pi)\) for the induced law of \((t,X_t^\pi,U^\pi)\). For any Borel set \(A\), the definition of pushforward gives
\[
\begin{aligned}
P_{\pi_\varepsilon}(A)=0
&\quad\Longleftrightarrow\quad
(\lambda\otimes\pi_\varepsilon)(\Phi^{-1}(A))=0\\
&\quad\Longleftrightarrow\quad
(\lambda\otimes\pi_{\text{ind}})(\Phi^{-1}(A))=0\\
&\quad\Longleftrightarrow\quad
P_{\pi_{\text{ind}}}(A)=0.
\end{aligned}
\]
Therefore, \(P_{\pi_\varepsilon}\sim P_{\pi_{\text{ind}}}\).
For any Borel measurable \(f:[0,1]\times\mathbb R^d\to\mathbb R^d\), its graph
\[
A_f:=\{(t,x,u):u=f(t,x)\}
\]
is Borel. Applying the preceding equivalence to \(A_f^c\) gives
\[
P_{\pi_\varepsilon}(A_f)=1
\quad\Longleftrightarrow\quad
P_{\pi_{\text{ind}}}(A_f)=1.
\]
Thus, a measurable function determines \(U\) from \((t,X_t)\) almost surely under one law exactly when it does under the other. Combining this with the zero-loss characterization above, we obtain
\[
\mathcal L_{\pi_\varepsilon}(\theta_{\pi_\varepsilon}^\star)=0
\quad\Longleftrightarrow\quad
\mathcal L_{\pi_{\text{ind}}}(\theta_{\pi_{\text{ind}}}^\star)=0.
\]
Therefore, as \(\mathcal L_{\pi_\varepsilon}(\theta_{\pi_\varepsilon}^\star)>0\), with Proposition~\ref{prop:fm-loss-cost-gap} and optimality of \(\pi^\star\), we end up with
\[
0\leq\mathcal L_{\pi^\star}(\theta_{\pi^\star}^\star)
\leq
\mathbb E_{\pi^\star}[\|X_1-X_0\|_2^2]-W_2^2(q_0,q_1)
=0.
\]
Consequently, \(\mathcal L_{\pi^\star}(\theta_{\pi^\star}^\star)=0<\mathcal L_{\pi_\varepsilon}(\theta_{\pi_\varepsilon}^\star)\), which proves the strict lower inequality.
\end{proof}

\subsection{Remark~\ref{rem:eot-epsilon-zero-existence}}
\label{app:proof-eot-epsilon-existence}

\begin{proof}
Because \(q_0,q_1\in\mathcal P_2(\mathbb R^d)\), the normalized quadratic cost satisfies
\[
\begin{aligned}
\int \frac{\|x_1-x_0\|_2^2}{2d}\,(q_0\otimes q_1)(dx_0,dx_1)
\leq \frac{1}{d}\left(
\int \|x_0\|_2^2\,q_0(dx_0)+
\int \|x_1\|_2^2\,q_1(dx_1)
\right) <\infty.
\end{aligned}
\]
The cost is also continuous, so vanishing-regularization convergence of entropic OT implies that every weak cluster point of \(\pi_\varepsilon\) as \(\varepsilon\downarrow0\) is a quadratic-OT optimizer \citep{nutz2022entropic}.

To pass from weak convergence to convergence of the unbounded quadratic costs, we make the uniform-integrability argument explicit. This is the same de~la~Vall\'ee--Poussin argument used below in Lemma~\ref{lem:entropic-fixed-batch-zero-temperature}; for similar one usage, also see the proof of Lemma~6.4 in \citet{beiglbock2017complete}. Since the squared norms under both marginals are integrable, the de~la~Vall\'ee--Poussin criterion supplies a common nondecreasing, convex function \(\Phi:[0,\infty)\to[0,\infty)\) such that
\[
\lim_{s\to\infty}\frac{\Phi(s)}{s}=\infty,
\qquad
\int\Phi(\|x_0\|_2^2)\,q_0(dx_0)
+
\int\Phi(\|x_1\|_2^2)\,q_1(dx_1)
<\infty.
\]
Indeed, we can apply the criterion to the equally weighted mixture of the two squared-norm distributions, which has finite first moment. For every pair \((x_0,x_1)\), we have
\[
\frac{\|x_1-x_0\|_2^2}{4}
\leq
\frac{\|x_0\|_2^2+\|x_1\|_2^2}{2}.
\]
Therefore, monotonicity and convexity of \(\Phi\) give us the following:
\[
\Phi\!\left(\frac{\|x_1-x_0\|_2^2}{4}\right)
\leq
\frac12\Phi(\|x_0\|_2^2)
+
\frac12\Phi(\|x_1\|_2^2).
\]
Integrating under any coupling \(\pi\in\Pi(q_0,q_1)\), its fixed marginals yield the uniform bound
\[
\sup_{\pi\in\Pi(q_0,q_1)}
\int\Phi\!\left(\frac{\|x_1-x_0\|_2^2}{4}\right)\,d\pi
\leq
K=
\frac12\int\Phi(\|x_0\|_2^2)\,q_0(dx_0)
+
\frac12\int\Phi(\|x_1\|_2^2)\,q_1(dx_1)
<\infty.
\]
For any \(a>0\), superlinearity gives a threshold \(M_a<\infty\) such that \(s\leq a\Phi(\tfrac{s}{4})\) whenever \(s>M_a\). Consequently, for every \(M\geq M_a\), we have
\[
\sup_{\pi\in\Pi(q_0,q_1)}
\int \|x_1-x_0\|_2^2\,\mathbf 1_{\{\|x_1-x_0\|_2^2>M\}}\,d\pi
\leq
a\sup_{\pi\in\Pi(q_0,q_1)}
\int\Phi\!\left(\frac{\|x_1-x_0\|_2^2}{4}\right)\,d\pi
\leq aK.
\]
Since \(a>0\) is arbitrary, we obtain
\[
\tau(M):=
\sup_{\pi\in\Pi(q_0,q_1)}
\int \|x_1-x_0\|_2^2\,\mathbf 1_{\{\|x_1-x_0\|_2^2>M\}}\,d\pi
\longrightarrow0
\qquad\text{as }M\to\infty.
\]
Thus, the quadratic costs are uniformly integrable over all couplings with these fixed marginals.

We now use this bound to establish the cost limit. First, the family \(\{\pi_\varepsilon\}_{\varepsilon>0}\) is tight. To see this, for any \(\delta>0\), we choose compact sets \(K_0,K_1\subset\mathbb R^d\) such that \(q_i(K_i^c)<\delta/2\). The marginal constraints give us:
\[
\sup_{\varepsilon>0}
\pi_\varepsilon\bigl((K_0\times K_1)^c\bigr)
\leq q_0(K_0^c)+q_1(K_1^c)
<\delta.
\]
Therefore, from any sequence \(\varepsilon_n\downarrow0\), we can extract a subsequence such that
\[
\pi_{\varepsilon_{n_j}}\Rightarrow\bar\pi.
\]
The limit \(\bar\pi\) still has marginals \(q_0,q_1\), and the vanishing-regularization result above gives us
\[
\mathbb E_{\bar\pi}[\|X_1-X_0\|_2^2]
=
W_2^2(q_0,q_1).
\]

For fixed \(M>0\), we define
\[
c_M:=\min\{\|x_1-x_0\|_2^2,M\}.
\]
As \(c_M\) is bounded and continuous, weak convergence further gives us
\[
\int c_M\,d\pi_{\varepsilon_{n_j}}
\longrightarrow
\int c_M\,d\bar\pi.
\]
Moreover, since
\[
0\leq \|x_1-x_0\|_2^2-c_M
\leq
\|x_1-x_0\|_2^2\,\mathbf 1_{\{\|x_1-x_0\|_2^2>M\}},
\]
so we have
\[
\begin{aligned}
\left|
\mathbb E_{\pi_{\varepsilon_{n_j}}}[\|X_1\!-\!X_0\|_2^2]\!-\!
\mathbb E_{\bar\pi}[\|X_1\!-\!X_0\|_2^2]
\right|
&\leq
\left|
\int c_M\,d\pi_{\varepsilon_{n_j}}\!-\!
\int c_M\,d\bar\pi
\right|\\
&\quad\!+\!
\int(\|x_1-x_0\|_2^2\!-\!c_M)\,d\pi_{\varepsilon_{n_j}}
\!+\!
\int(\|x_1-x_0\|_2^2\!-\!c_M)\,d\bar\pi\\
&\leq
\left|
\int c_M\,d\pi_{\varepsilon_{n_j}}
-
\int c_M\,d\bar\pi
\right|
+
2\tau(M).
\end{aligned}
\]
Taking \(j\to\infty\) at fixed \(M\), and then \(M\to\infty\), we have
\[
\mathbb E_{\pi_{\varepsilon_{n_j}}}[\|X_1-X_0\|_2^2]
\longrightarrow
\mathbb E_{\bar\pi}[\|X_1-X_0\|_2^2]
=
W_2^2(q_0,q_1).
\]
This subsequence argument also gives convergence of the full family. Otherwise, since
\[
\mathbb E_{\pi_\varepsilon}[\|X_1-X_0\|_2^2]
\geq
W_2^2(q_0,q_1),
\]
there would exist \(\delta>0\) and a sequence \(\varepsilon_n\downarrow0\) such that
\[
\mathbb E_{\pi_{\varepsilon_n}}[\|X_1-X_0\|_2^2]
-
W_2^2(q_0,q_1)
\geq\delta
\]
for every \(n\). The preceding claim would produce a subsequence whose costs converge to \(W_2^2(q_0,q_1)\), which is a contradiction, so we get
\begin{equation}
\label{eq:eot-cost-convergence}
\mathbb E_{\pi_\varepsilon}[\|X_1-X_0\|_2^2]
\longrightarrow W_2^2(q_0,q_1)
\qquad\text{as }\varepsilon\downarrow0.
\end{equation}
Since \(\mathcal L_{\pi_{\text{ind}}}(\theta_{\pi_{\text{ind}}}^\star)>0\), we can now choose \(\varepsilon_0>0\) sufficiently small that
\[
0\leq
\mathbb E_{\pi_\varepsilon}[\|X_1-X_0\|_2^2]
-
W_2^2(q_0,q_1)
<
\frac{\mathcal L_{\pi_{\text{ind}}}(\theta_{\pi_{\text{ind}}}^\star)}{2}
<
\mathcal L_{\pi_{\text{ind}}}(\theta_{\pi_{\text{ind}}}^\star),
\qquad
0<\varepsilon\leq\varepsilon_0.
\]
In particular, \cref{eq:eot-small-epsilon-certificate} holds at \(\varepsilon_0\).

For the explicit cutoff, fix a competitor \(\widetilde\pi\in\Pi(q_0,q_1)\) satisfying
\[
0<
\operatorname{KL}(\widetilde\pi\|\pi_{\text{ind}})
<\infty,
\qquad
\mathbb E_{\widetilde\pi}[\|X_1-X_0\|_2^2]
-
W_2^2(q_0,q_1)
<
\mathcal L_{\pi_{\text{ind}}}(\theta_{\pi_{\text{ind}}}^\star).
\]
Multiplying the objective in \cref{eq:population-eot} by \(2d\), optimality of \(\pi_\varepsilon\) against \(\widetilde\pi\) gives
\[
\mathbb E_{\pi_\varepsilon}[\|X_1-X_0\|_2^2]
+
2d\varepsilon
\operatorname{KL}(\pi_\varepsilon\|\pi_{\text{ind}})
\leq
\mathbb E_{\widetilde\pi}[\|X_1-X_0\|_2^2]
+
2d\varepsilon
\operatorname{KL}(\widetilde\pi\|\pi_{\text{ind}}).
\]
Since \( \operatorname{KL}(\pi_\varepsilon\|\pi_{\text{ind}})\geq0\), dropping its contribution and subtracting \(W_2^2(q_0,q_1)\) yields
\[
\mathbb E_{\pi_\varepsilon}[\|X_1-X_0\|_2^2]
-
W_2^2(q_0,q_1)
\leq
\mathbb E_{\widetilde\pi}[\|X_1-X_0\|_2^2]
-
W_2^2(q_0,q_1)
+
2d\varepsilon
\operatorname{KL}(\widetilde\pi\|\pi_{\text{ind}}).
\]
The competitor condition ensures that
\[
\Delta:=
\mathcal L_{\pi_{\text{ind}}}(\theta_{\pi_{\text{ind}}}^\star)
+
W_2^2(q_0,q_1)
-
\mathbb E_{\widetilde\pi}[\|X_1-X_0\|_2^2]
>0.
\]
Thus, whenever \(0<\varepsilon<\frac{\Delta}{2d\operatorname{KL}(\widetilde\pi\|\pi_{\text{ind}})}\), which is exactly \cref{eq:eot-explicit-competitor-cutoff}, we have
\[
2d\varepsilon
\operatorname{KL}(\widetilde\pi\|\pi_{\text{ind}})
<\Delta.
\]
Substituting this into the preceding cost bound gives
\[
\begin{aligned}
\mathbb E_{\pi_\varepsilon}[\|X_1-X_0\|_2^2]
-
W_2^2(q_0,q_1)
&\leq
\mathbb E_{\widetilde\pi}[\|X_1-X_0\|_2^2]
-
W_2^2(q_0,q_1)
+
2d\varepsilon
\operatorname{KL}(\widetilde\pi\|\pi_{\text{ind}})\\
&<
\mathbb E_{\widetilde\pi}[\|X_1-X_0\|_2^2]
-
W_2^2(q_0,q_1)
+\Delta\\
&=
\mathcal L_{\pi_{\text{ind}}}(\theta_{\pi_{\text{ind}}}^\star).
\end{aligned}
\]
So, \cref{eq:eot-small-epsilon-certificate} holds with this \(\varepsilon\) as the reference \(\varepsilon_0\), and Lemma~\ref{lem:eot-strict-small-epsilon} gives the claimed strict ordering in \cref{eq:eot-strict-population-loss-chain}.

Finally, we verify that a competitor satisfying both conditions always exists. By \cref{eq:eot-cost-convergence}, we choose a sufficiently small \(\eta>0\) such that
\[
\mathbb E_{\pi_\eta}[\|X_1-X_0\|_2^2]
-
W_2^2(q_0,q_1)
<
\mathcal L_{\pi_{\text{ind}}}(\theta_{\pi_{\text{ind}}}^\star),
\]
and set \(\widetilde\pi=\pi_\eta\). To establish finiteness of its relative entropy, we use \(\pi_{\text{ind}}\) as a competitor in the \(\eta\)-problem. Obviously, \(\operatorname{KL}(\pi_{\text{ind}}\|\pi_{\text{ind}})=0\), and optimality gives us
\[
\mathbb E_{\pi_\eta}[\|X_1-X_0\|_2^2]
+
2d\eta\operatorname{KL}(\pi_\eta\|\pi_{\text{ind}})
\leq
\mathbb E_{\pi_{\text{ind}}}[\|X_1-X_0\|_2^2]
<\infty.
\]
Consequently, we have
\[
0\leq
\operatorname{KL}(\pi_\eta\|\pi_{\text{ind}})
\leq
\frac{
\mathbb E_{\pi_{\text{ind}}}[\|X_1-X_0\|_2^2]
-
\mathbb E_{\pi_\eta}[\|X_1-X_0\|_2^2]
}{2d\eta}
\leq
\frac{
\mathbb E_{\pi_{\text{ind}}}[\|X_1-X_0\|_2^2]
}{2d\eta}
<\infty.
\]
We still need to show that
\[
\operatorname{KL}(\pi_\eta\|\pi_{\text{ind}})>0.
\]
Again, we can consider the contradiction, i.e., suppose instead that
\[
\operatorname{KL}(\pi_\eta\|\pi_{\text{ind}})=0.
\]
Since relative entropy vanishes only when the two probability measures agree, this would imply \(\pi_\eta=\pi_{\text{ind}}\). Applying \cref{eq:fm-loss-cost-gap} to independent coupling then gives
\[
\mathcal L_{\pi_{\text{ind}}}(\theta_{\pi_{\text{ind}}}^\star)
\leq
\mathbb E_{\pi_{\text{ind}}}[\|X_1-X_0\|_2^2]
-
W_2^2(q_0,q_1)
=
\mathbb E_{\pi_\eta}[\|X_1-X_0\|_2^2]
-
W_2^2(q_0,q_1)
<
\mathcal L_{\pi_{\text{ind}}}(\theta_{\pi_{\text{ind}}}^\star),
\]
which is a contradiction. Therefore, we can now conclude 
\[
0<
\operatorname{KL}(\pi_\eta\|\pi_{\text{ind}})
<\infty,
\]
and \(\widetilde\pi=\pi_\eta\) satisfies all the required conditions, so the guarantee we claimed holds true.
\end{proof}

\subsection{Theorem~\ref{thm:eot-gradient-variance}}
\label{app:proof-eot-gradient-variance}

Before establish the proof, we first justify the sufficient condition for \(\mathcal L_{\pi_{\text{ind}}}(\theta_{\pi_{\text{ind}}}^\star)>0\) stated in Lemma~\ref{lem:eot-strict-small-epsilon}.

Suppose $q_0$ has a Lebesgue density $f_0>0$ almost everywhere and $q_1$ is not a Dirac mass. Under $\pi_{\text{ind}}$, the joint law of $(X_0,X_1)$ is $f_0(x_0)\,dx_0\,q_1(dy)$. Fix $t\in(0,1)$ and we write $X_t=(1-t)X_0+tX_1$, then for any Borel sets $A,B\subseteq\mathbb R^d$, independence gives us that
\[
\mathbb P(X_t\in A,\ X_1\in B)=\int_B\int_{\mathbb R^d}\mathbf 1_A\bigl((1-t)x_0+ty\bigr)f_0(x_0)\,dx_0\,q_1(dy).
\]
For each fixed $y$, we can do the change of variables $x=(1-t)x_0+ty$, so that $x_0=(x-ty)/(1-t)$ and $dx_0=(1-t)^{-d}dx$. Substituting into the preceding integral and applying Tonelli's theorem yields the following 
\begin{align*}
\mathbb P(X_t\in A,\ X_1\in B)&=\int_B\int_A(1-t)^{-d}f_0\!\left(\frac{x-ty}{1-t}\right)\,dx\,q_1(dy) \\
& =\int_A\int_B(1-t)^{-d}f_0\!\left(\frac{x-ty}{1-t}\right)\,q_1(dy)\,dx.
\end{align*}
By taking $B=\mathbb R^d$, we can show that $X_t$ has Lebesgue density
\[
\rho_t(x)=(1-t)^{-d}\int_{\mathbb R^d}f_0\!\left(\frac{x-tz}{1-t}\right)q_1(dz).
\]
For every $x$ such that $0<\rho_t(x)<\infty$, we abbreviate the following for clarity:
\[
K_t(x,dy)=\frac{(1-t)^{-d}f_0\!\left(\frac{x-ty}{1-t}\right)}{\rho_t(x)}\,q_1(dy).
\]
The preceding joint-probability identity then becomes $\mathbb P(X_t\in A,\ X_1\in B)=\int_A K_t(x,B)\rho_t(x)\,dx$, so $K_t(x,\cdot)$ is a version of the conditional law of $X_1$ given $X_t=x$. Consequently, for almost every $x$ under the law of $X_t$, we have
\[
\frac{d\,\mathcal L(X_1\mid X_t=x)}{dq_1}(y)
=
\frac{f_0\!\left(\frac{x-ty}{1-t}\right)}
{\displaystyle\int_{\mathbb R^d}f_0\!\left(\frac{x-tz}{1-t}\right)q_1(dz)}.
\]
The denominator depends only on $t$ and $x$, so the Radon-Nikodym derivative is proportional to 
\[y\mapsto f_0\big(\frac{x-ty}{1-t}\big).\]
We note that the derivative is strictly positive almost everywhere, so the conditional law is equivalent to the non-Dirac law \(q_1\) and therefore has strictly positive covariance trace. Since \[U^{\pi_{\text{ind}}}=\frac{X_1-X_t^{\pi_{\text{ind}}}}{1-t},\] its conditional covariance also has strictly positive trace. By integrating over \((t,X_t^{\pi_{\text{ind}}})\), we can prove \(\mathcal L_{\pi_{\text{ind}}}(\theta_{\pi_{\text{ind}}}^\star)>0\), and the condition holds for \(q_0=\mathcal N(0,I_d)\). 

We now prove the theorem:
\vspace{-0.5em}
\begin{proof}
Following from Fact~\ref{fact:affine}, we write the affine output layer as \(v_{\eta,b}(t,x)\!=\!\bar v_\eta(t,x)+b\). Since \(\nabla_bv_{\eta,b}(t,x)=I_d\), its bias-gradient block at the population optimum is
\[
\nabla_b\ell_\pi(\theta_\pi^\star;\xi^\pi)=2\bigl(u^\pi(t,X_t^\pi)-U^\pi\bigr).
\]
Its conditional mean at \((t,X_t^\pi)\) is zero, so the law of total covariance therefore gives us that
\begin{equation}
\label{eq:bias-gradient-equals-loss}
\Tr\{\Cov_\pi(\nabla_b\ell_\pi(\theta_\pi^\star;\xi^\pi))\}=4\mathcal L_\pi(\theta_\pi^\star).
\end{equation}
For the complete gradient, \cref{eq:var-at-optimum} and the positive-semidefinite trace bound gives us that
\[
\sigma_{\text{grad},\pi_\varepsilon}^2(\theta_{\pi_\varepsilon}^\star)\leq4\mathcal L_{\pi_\varepsilon}(\theta_{\pi_\varepsilon}^\star)\operatorname*{ess\,sup}\|J_{\theta_{\pi_\varepsilon}^\star}(t,X_t^{\pi_\varepsilon})\|_{\mathrm{op}}^2.
\]
Combining this inequality with \cref{eq:fm-loss-cost-gap} proves the upper bound in \cref{eq:eot-full-gradient-bound}. For the strict lower bound, \cref{eq:bias-gradient-equals-loss} and Lemma~\ref{lem:eot-strict-small-epsilon} show that the bias-gradient block has strictly positive variance for every finite \(\varepsilon\). 

Since the total gradient-variance trace is at least the trace of this principal block, we can show that \(\sigma_{\text{grad},\pi_\varepsilon}^2(\theta_{\pi_\varepsilon}^\star)>0\). At the same time, population OT has zero gradient variance by \cref{eq:var-at-optimum,eq:fm-loss-cost-gap}. Therefore, whenever the right-hand side of \cref{eq:eot-full-gradient-bound} is smaller than the independent-coupling variance, the strict chain in \cref{eq:eot-full-gradient-strict} follows.

Finally, the independent-coupling variance is strictly positive by \cref{eq:bias-gradient-equals-loss}. The theorem's uniform condition means that there exist \(\varepsilon_J\!>\!0\) and \(M\!<\!\infty\) such that \(\{\pi_\varepsilon\!:\!0\!<\varepsilon\!\leq\!\varepsilon_J\}\) satisfies Definition~\ref{rem:bounded-jacobian} with bound \(M\). Thus, for every \(0<\varepsilon\leq\varepsilon_J\), we have
\[
\operatorname*{ess\,sup}\|J_{\theta_{\pi_\varepsilon}^\star}(t,X_t^{\pi_\varepsilon})\|_{\mathrm{op}}^2
\leq \max_{t\in[0,1],\,x\in\mathbb R^d}\|J_{\theta_{\pi_\varepsilon}^\star}(t,x)\|_{\mathrm{op}}^2
\leq M.
\]
We denote \(D_\varepsilon\coloneqq\mathbb E_{\pi_\varepsilon}[\|X_1-X_0\|_2^2]-W_2^2(q_0,q_1)\), \cref{eq:eot-cost-convergence} then gives us that \(4M D_\varepsilon\to0\) as \(\varepsilon\downarrow0\). Therefore, there exists \(\varepsilon_{\mathrm g}>0\) such that
\[
4M D_\varepsilon<\sigma_{\text{grad},\pi_{\text{ind}}}^2(\theta_{\pi_{\text{ind}}}^\star)
\qquad\text{for every }0<\varepsilon\leq\varepsilon_{\mathrm g}.
\]
By setting \(\varepsilon_1=\min\{\varepsilon_0,\varepsilon_J,\varepsilon_{\mathrm g}\}>0\), then for every \(0<\varepsilon\leq\varepsilon_1\), the right-hand side of \cref{eq:eot-full-gradient-bound} is smaller than the independent-coupling variance, which concludes \cref{eq:eot-full-gradient-strict}.
\end{proof}

\subsection{Proposition~\ref{prop:large-k-intrinsic-consistency}}
\label{app:proof-large-k-intrinsic-consistency}

\begin{proof}
We take \(\widehat q_{0,k}\) and \(\widehat q_{1,k}\) to denote the empirical distributions of the two endpoint batches used to construct \(\pi_k\). Note that conditional on these batches, the OT solution is a permutation and the selected index is uniform. 

Therefore, the selected source and target each remain uniformly distributed over their respective batches; after averaging over the batches, we have \(\pi_k\in\Pi(q_0,q_1)\). Moreover, one can derive that 
\[
\mathbb E_{\pi_k}[\|X_1-X_0\|_2^2]=\mathbb E[W_2^2(\widehat q_{0,k},\widehat q_{1,k})].
\]
To establish this, we first show that, for any \(q\in\mathcal P_2(\mathbb R^d)\) along with its empirical distribution \(\widehat q_k\), \(\mathbb E[W_2^2(\widehat q_k,q)]\) will converge to zero. 

For any \(R>0\), let \(P_R\) be the metric projection onto the closed ball \(\overline B(0,R)\), and let \(q^R=(P_R)_\#q\) and \(\widehat q_k^R=(P_R)_\#\widehat q_k\), coupling each sample with its projection gives us that
\[
\mathbb E[W_2^2(\widehat q_k,\widehat q_k^R)]\leq a_R:=\mathbb E_q[\|X-P_R(X)\|_2^2],\qquad W_2^2(q,q^R)\leq a_R.
\]
For fixed \(R>0\), the empirical measure \(\widehat q_k^R\) converges to \(q^R\) almost surely in \(W_2\). Both measures are supported on \(\overline B(0,R)\), whose diameter is \(2R\), so \(W_2^2(\widehat q_k^R,q^R)\leq4R^2\). The almost-sure convergence of \(\widehat q_k^R\) to \(q^R\) in \(W_2\) further gives
\[
W_2^2(\widehat q_k^R,q^R)\longrightarrow0
\qquad\text{almost surely}.
\]
Since both measures are supported on the radius-\(R\) ball, we also have \(0\leq W_2^2(\widehat q_k^R,q^R)\leq4R^2\). This deterministic bound is integrable and independent of \(k\), so dominated convergence gives
\[
\lim_{k\to\infty}\mathbb E[W_2^2(\widehat q_k^R,q^R)]
=
\mathbb E\!\left[\lim_{k\to\infty}W_2^2(\widehat q_k^R,q^R)\right]
=0.
\]
Next, the triangle inequality gives
\[
W_2(\widehat q_k,q)
\leq
W_2(\widehat q_k,\widehat q_k^R)
+
W_2(\widehat q_k^R,q^R)
+
W_2(q^R,q).
\]
First using \((a+b+c)^2\leq3(a^2+b^2+c^2)\) and then simply taking expectations, we have
\[
\begin{aligned}
\mathbb E[W_2^2(\widehat q_k,q)]
&\leq
3\mathbb E[W_2^2(\widehat q_k,\widehat q_k^R)]
+
3\mathbb E[W_2^2(\widehat q_k^R,q^R)]
+
3W_2^2(q^R,q)\\
&\leq
3a_R
+
3\mathbb E[W_2^2(\widehat q_k^R,q^R)]
+
3a_R\\
&=
6a_R+3\mathbb E[W_2^2(\widehat q_k^R,q^R)],
\end{aligned}
\]
where the second inequality uses the two projection bounds established above. Taking the limit superior as \(k\to\infty\), while keeping \(R\) fixed, therefore gives
\[
\begin{aligned}
\limsup_{k\to\infty}\mathbb E[W_2^2(\widehat q_k,q)]
\leq
6a_R
+
3\lim_{k\to\infty}\mathbb E[W_2^2(\widehat q_k^R,q^R)]
=6a_R.
\end{aligned}
\]
Since \(q\in\mathcal P_2(\mathbb R^d)\), \(a_R\to0\) as \(R\to\infty\), which proves our claim. Applying it to both marginals yields
\[
|W_2(\widehat q_{0,k},\widehat q_{1,k})-W_2(q_0,q_1)|\leq W_2(\widehat q_{0,k},q_0)+W_2(\widehat q_{1,k},q_1),
\]
The right-hand side converges to 0 in \(L^2\), so \(W_2(\widehat q_{0,k},\widehat q_{1,k})\!\to\! W_2(q_0,q_1)\) in \(L^2\). Its square therefore converges in \(L^1\), which proves \(\Delta_k\to0\) in \cref{eq:large-k-cost-gap}. Since \(\pi_k\) is a coupling of \(q_0\) and \(q_1\), its cost is at least \(W_2^2(q_0,q_1)\), so \(\Delta_k\geq0\).

For any coupling \(\pi\), conditional orthogonality gives us that
\[
\mathbb E_\pi[\|U^\pi\|_2^2]=\mathbb E_\pi[\|u^\pi(t,X_t^\pi)\|_2^2]+\mathbb E_\pi[\Tr\{\Cov_\pi(U^\pi\mid t,X_t^\pi)\}].
\]
As earlier shown in the proof of Proposition~\ref{prop:fm-loss-cost-gap}, the interpolation induced by \(\pi\), together with \(u^\pi\), is admissible in the Benamou--Brenier formulation. Therefore, \(\mathbb E_\pi[\|u^\pi(t,X_t^\pi)\|_2^2]\geq W_2^2(q_0,q_1)\). 

Applying the preceding kinetic-energy bound to \(\pi_k\), we have
\[
\mathbb E_{\pi_k}\!\left[
\|u^{\pi_k}(t,X_t^{\pi_k})\|_2^2
\right]
\geq W_2^2(q_0,q_1).
\]
We can then rearrange the conditional-orthogonality identity and use \(U^{\pi_k}=X_1-X_0\) to get
\[
\begin{aligned}
0
&\leq
\mathbb E_{\pi_k}\!\left[
\operatorname{Tr}\!\left\{
\operatorname{Cov}_{\pi_k}(U^{\pi_k}\mid t,X_t^{\pi_k})
\right\}
\right]\\
&=
\mathbb E_{\pi_k}\!\left[
\|U^{\pi_k}-u^{\pi_k}(t,X_t^{\pi_k})\|_2^2
\right]
&&\hfill \qquad \quad (\because \text{conditional-variance identity})\\
&=
\mathbb E_{\pi_k}\!\left[\|U^{\pi_k}\|_2^2\right]
-
\mathbb E_{\pi_k}\!\left[
\|u^{\pi_k}(t,X_t^{\pi_k})\|_2^2
\right]
&&\hfill \qquad \quad (\because \text{conditional orthogonality})\\
&\leq
\mathbb E_{\pi_k}\!\left[\|U^{\pi_k}\|_2^2\right]
-
W_2^2(q_0,q_1)
&&\hfill \qquad \quad (\because \text{mean-velocity energy lower bound})\\
&=
\mathbb E_{\pi_k}\!\left[\|X_1-X_0\|_2^2\right]
-
W_2^2(q_0,q_1)=\Delta_k.
\end{aligned}
\]
This proves the upper bound in \cref{eq:large-k-intrinsic-envelope}. Since we have already established that \(\Delta_k\to0\), the squeeze theorem further gives
\[
\mathbb E_{\pi_k}\!\left[
\operatorname{Tr}\!\left\{
\operatorname{Cov}_{\pi_k}(U^{\pi_k}\mid t,X_t^{\pi_k})
\right\}
\right]
\longrightarrow0
\qquad\text{as }k\to\infty.
\]
So, we prove \(\Delta_k\to0\).
\end{proof}

\subsection{Theorem~\ref{thm:large-k-eventual-gradient-reduction}}
\label{app:proof-large-k-eventual-gradient-reduction}

Before moving into the proof of Theorem~\ref{thm:large-k-eventual-gradient-reduction}. We first provide the following supporting lemma, which shows that every finite-batch coupling has zero target uncertainty exactly when independent coupling does.

\begin{lemma}
\label{lem:large-k-finite-nondegeneracy}
For every finite \(k\geq1\), \(\pi_k\) and \(\pi_{\emph{ind}}\) are mutually absolutely continuous, so it has
\[
\mathbb E_{\pi_k}[\Tr\{\Cov_{\pi_k}(U^{\pi_k}\mid t,X_t^{\pi_k})\}]=0\quad\Longleftrightarrow\quad\mathbb E_{\pi_{\emph{ind}}}[\Tr\{\Cov_{\pi_{\emph{ind}}}(U^{\pi_{\emph{ind}}}\mid t,X_t^{\pi_{\emph{ind}}})\}]=0.
\]
If \(q_0=f_0\mathcal L^d\) with \(f_0>0\) Lebesgue-almost everywhere and \(q_1\) is not a Dirac mass, both quantities are strictly positive.
\end{lemma}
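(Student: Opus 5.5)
I would split the statement into three parts: mutual absolute continuity of the endpoint laws, its transfer to the laws of $(t,X_t,U)$, and the positivity claim.

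\textbf{Step 1: mutual absolute continuity.} I would write the law of the selected pair explicitly. Draw i.i.d.\ $x_1,\dots,x_k\sim q_0$ and $y_1,\dots,y_k\sim q_1$, solve for an optimal permutation $\sigma$ (fixing a measurable tie-breaking rule), and pick $I$ uniformly. Then
\[
\pi_k(A)=\mathbb E\Big[\tfrac1k\textstyle\sum_{i}\mathbf 1_A(x_i,y_{\sigma(i)})\Big].
\]
For the direction $\pi_k\ll\pi_{\text{ind}}$, the union bound gives
\[
\pi_k(A)\le \textstyle\sum_{i,j}\mathbb P\big((x_i,y_j)\in A\big)=k^2\,\pi_{\text{ind}}(A),
\]
since each fixed pair $(x_i,y_j)$ has law $q_0\otimes q_1$. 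For the converse direction, I would use the symmetry of the batch under relabeling: the conditional law of the selected pair given the unordered batches is uniform over the $k$ matched pairs. I would then lower bound $\pi_k(A)$ by the probability that the specific pair $(x_1,y_1)$ lands in $A$ \emph{and} is matched. The difficulty is that "is matched" may be a thin event. The cleanest route I would try is to exhibit a positive-probability configuration. Conditional on $(x_1,y_1)=(a,b)\in A$, draw the other points so that, for instance, $x_2,\dots,x_k$ and $y_2,\dots,y_k$ pair among themselves far more cheaply than any alternative assignment, which forces $\sigma(1)=1$. Getting such a configuration with positive probability uniformly in $(a,b)$ is delicate for general $q_0,q_1$. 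This is the step I expect to be the main obstacle. A fallback is to exploit cyclical monotonicity and integrate over the positions of the remaining points.

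\textbf{Step 2: transfer to the regression variables.} Given Step 1, I would reuse the pushforward argument from the proof of Lemma~\ref{lem:eot-strict-small-epsilon}. The map $\Phi(t,x_0,x_1)=(t,(1-t)x_0+tx_1,x_1-x_0)$ is a Borel bijection, so the laws $P_{\pi_k}$ and $P_{\pi_{\text{ind}}}$ of $(t,X_t,U)$ under $\lambda\otimes\pi$ are equivalent. Zero conditional covariance trace means $U=f(t,X_t)$ almost surely for some Borel $f$, namely $f=u^\pi$. That is the event that the graph $A_f$ has full measure, and full measure is preserved under equivalent laws. However, $f$ here is coupling-specific. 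So I would argue as follows: if $\mathsf R(\pi_k)=0$, then $U=u^{\pi_k}(t,X_t)$ holds $P_{\pi_k}$-a.s., hence also $P_{\pi_{\text{ind}}}$-a.s. This shows $U$ is $\sigma(t,X_t)$-measurable under $\pi_{\text{ind}}$, so $\mathsf R(\pi_{\text{ind}})=0$. The converse direction is symmetric.

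\textbf{Step 3: positivity.} Under the density hypothesis, the argument given just before the proof of Theorem~\ref{thm:eot-gradient-variance} already shows $\mathsf R(\pi_{\text{ind}})>0$. That argument uses the equivalence of the conditional law $\mathcal L(X_1\mid X_t=x)$ with the non-Dirac $q_1$. The equivalence from Step 2 then yields $\mathsf R(\pi_k)>0$ for every finite $k$.
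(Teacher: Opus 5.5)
Your union bound correctly gives $\pi_k\ll\pi_{\text{ind}}$. Steps 2 and 3 follow the paper's route: the graph event $\{U=f(t,X_t)\}$ is invariant under equivalent laws of $(t,X_t,U)$, and positivity is then transferred from $\pi_{\text{ind}}$ to $\pi_k$.

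The genuine gap is the reverse inclusion $\pi_{\text{ind}}\ll\pi_k$, which you leave open. It is not a side issue. With only $\pi_k\ll\pi_{\text{ind}}$, Step 2 gives just $\mathsf R(\pi_{\text{ind}})=0\Rightarrow\mathsf R(\pi_k)=0$. Step 3 needs the opposite implication, $\mathsf R(\pi_k)=0\Rightarrow\mathsf R(\pi_{\text{ind}})=0$, to conclude $\mathsf R(\pi_k)>0$. So the positivity claim for finite $k$ is currently unsupported. Your configuration argument (placing the other points so that $\sigma(1)=1$ is forced) would require controlling the optimal permutation uniformly over $(a,b)\in A$. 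You do not carry this out, and it is also unnecessary.

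The missing idea is that you never need to know which target gets matched. Fix a Borel $E$ with $(q_0\otimes q_1)(E)>0$, write $E_x=\{y:(x,y)\in E\}$, and consider the event that \emph{every} target in the batch lies in the section of the selected source. On that event the selected pair $(x_I,y_{\sigma(I)})$ lies in $E$ whatever $\sigma$ is. Since $x_I\sim q_0$ is independent of the i.i.d.\ targets,
\[
\pi_k(E)\;\geq\;\mathbb P\bigl(y_j\in E_{x_I}\ \text{for every } j\bigr)=\int q_1(E_x)^k\,q_0(\mathrm dx)>0 .
\]
The integral is positive because Fubini gives $q_1(E_x)>0$ on a set of positive $q_0$-measure. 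This is exactly the paper's argument. It needs no assumption on $q_0,q_1$ and no cyclical monotonicity.

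One minor point in Step 3. The earlier paragraph you cite only asserts that $y\mapsto f_0\bigl((x-ty)/(1-t)\bigr)$ is positive almost everywhere. Since $q_1$ may be singular, positivity $q_1$-a.e.\ does not follow for a fixed $x$ from $f_0>0$ Lebesgue-a.e. The paper's proof of this lemma supplies the needed Tonelli argument: integrate the indicator of $\{f_0=0\}$ over $x$ first, which gives positivity $q_1$-a.e.\ for Lebesgue-a.e.\ $x$.
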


\begin{proof}
We write the two endpoint batches as \((X_1,\ldots,X_k)\) and \((Y_1,\ldots,Y_k)\), and let \(I\) be uniform on \(\{1,\ldots,k\}\), independently of both batches, along with \(\tau\) be the permutation returned by a fixed measurable tie-breaking rule for empirical OT. Then, \((X_I,Y_{\tau(I)})\sim\pi_k\). 

For any Borel set \(E\subset\mathbb R^d\times\mathbb R^d\), if \((q_0\otimes q_1)(E)=0\), all finitely many candidate pairs \((X_i,Y_j)\) avoid \(E\) almost surely, so \(\pi_k(E)=0\). Conversely, we denote \(E_x=\{y:(x,y)\in E\}\), and the event that every candidate target paired with \(X_I\) lies in \(E_{X_I}\) forces the selected pair to lie in \(E\). Therefore, we get 
\[
\pi_k(E)\geq\mathbb P((X_I,Y_j)\in E\text{ for every }j)=\int q_1(E_x)^k\,q_0(\mathrm dx).
\]
If \((q_0\otimes q_1)(E)>0\), then \(q_1(E_x)>0\) on a set of positive \(q_0\)-measure, so the final integral is strictly positive. Thus, one have \(\pi_k\sim\pi_{\text{ind}}\).

Next, as we discussed before, the map \((t,x_0,x_1)\mapsto(t,(1-t)x_0+tx_1,x_1-x_0)\) is a Borel bijection with inverse \(x_0=x-tu\), \(x_1=x+(1-t)u\). Therefore, equivalent endpoint couplings induce equivalent laws of \((t,X_t,U)\).

As the covariance trace is nonnegative, its expectation is zero exactly when \(\Cov(U\mid t,X_t)=0\) almost surely, or equivalently when \(U\) is almost surely a measurable function of \((t,X_t)\). This graph property is preserved under equivalent laws, proving the stated equivalence.

Finally, using the stated density conditions and fix \(t\in(0,1)\). Under independent coupling, for almost every \(x\) under the law of \(X_t\), the conditional law of \(X_1\) given \(X_t=x\) has Radon--Nikodym derivative with respect to \(q_1\) proportional to
\[
y\longmapsto f_0\!\left(\frac{x-ty}{1-t}\right).
\]

We claim that this derivative is strictly positive for \(q_1\)-almost every \(y\). To prove it, we set \(t\in(0,1)\) and let
\[
N:=\{z\in\mathbb R^d:f_0(z)=0\}.
\]
The assumption \(f_0>0\) Lebesgue-almost everywhere gives \(\mathcal L^d(N)=0\). For every fixed \(y\in\mathbb R^d\), the change of variables \(x=(1-t)z+ty\) gives
\[
\begin{aligned}
\int_{\mathbb R^d}
\mathbf 1_N\!\left(\frac{x-ty}{1-t}\right)\,dx=
(1-t)^d\int_{\mathbb R^d}\mathbf 1_N(z)\,dz=(1-t)^d\mathcal L^d(N)=0.
\end{aligned}
\]
Thus, for each fixed \(y\), the numerator \(f_0((x-ty)/(1-t))\) is positive for Lebesgue-almost every \(x\). We now interchange the order of integration. Since the indicator is nonnegative, Tonelli's theorem gives us
\[
\begin{aligned}
\int_{\mathbb R^d}
\left[
\int_{\mathbb R^d}
\mathbf 1_N\!\left(\frac{x-ty}{1-t}\right)\,q_1(dy)
\right]dx=
\int_{\mathbb R^d}
\left[
\int_{\mathbb R^d}
\mathbf 1_N\!\left(\frac{x-ty}{1-t}\right)\,dx
\right]q_1(dy)=0.
\end{aligned}
\]
The inner integral on the left is nonnegative, so it must vanish for Lebesgue-almost every \(x\). Equivalently, we have
\[
q_1\!\left(
\left\{y:
f_0\!\left(\frac{x-ty}{1-t}\right)=0
\right\}
\right)=0
\qquad
\text{for Lebesgue-almost every }x.
\]
Therefore, for Lebesgue-almost every \(x\), we have
\[
f_0\!\left(\frac{x-ty}{1-t}\right)>0
\qquad
\text{for }q_1\text{-almost every }y.
\]
We still remain for considering the normalizing factor in the conditional density. Under independent coupling, \(X_t\) has Lebesgue density
\[
\rho_t(x)
=
(1-t)^{-d}
\int_{\mathbb R^d}
f_0\!\left(\frac{x-ty}{1-t}\right)\,q_1(dy).
\]
The preceding positivity gives \(\rho_t(x)>0\) for Lebesgue-almost every \(x\). Moreover, we can show that
\[
\begin{aligned}
\int_{\mathbb R^d}\rho_t(x)\,dx
&=
\int_{\mathbb R^d}
\left[
(1-t)^{-d}
\int_{\mathbb R^d}
f_0\!\left(\frac{x-ty}{1-t}\right)\,dx
\right]q_1(dy)=
\int_{\mathbb R^d}
\left[
\int_{\mathbb R^d}f_0(z)\,dz
\right]q_1(dy)=1,
\end{aligned}
\]
so \(\rho_t(x)<\infty\) for Lebesgue-almost every \(x\). Since the law of \(X_t\) is absolutely continuous with respect to Lebesgue measure, all these properties also hold for almost every \(x\) under the law of \(X_t\). For such \(x\), the conditional Radon--Nikodym derivative therefore satisfies
\[
\frac{d\mathcal L(X_1\mid X_t=x)}{dq_1}(y)
=
\frac{
(1-t)^{-d}f_0\!\left(\frac{x-ty}{1-t}\right)
}{
\rho_t(x)
}
>0
\qquad
\text{for }q_1\text{-almost every }y.
\]
Thus, for almost every \(x\) under the law of \(X_t\), the conditional law of \(X_1\) given \(X_t=x\) is equivalent to \(q_1\), which concludes our claim. Furthermore, the conditional law is equivalent to the non-Dirac distribution \(q_1\) and has strictly positive covariance trace. 

Since \(U\!=\!(X_1-X_t)/(1-t)\), the conditional covariance of \(U\) also has strictly positive trace. Integrating first over \(X_t\) and then over \(t\in(0,1)\) proves positivity for independent coupling and therefore, by the preceding equivalence, for every \(k<\infty\).
\end{proof}
\vspace{-0.5em}
We now formally prove Theorem~\ref{thm:large-k-eventual-gradient-reduction}:
\vspace{-0.5em}
\begin{proof}[Proof of Theorem~\ref{thm:large-k-eventual-gradient-reduction}]
At the coupling-specific population optimum, \cref{eq:var-at-optimum} and the positive-semidefinite trace bound used in Proposition~\ref{prop:gradient-variance-sandwich} give
\[
0\leq\sigma_{\text{grad},\pi_k}^2(\theta_{\pi_k}^\star)\leq4\operatorname*{ess\,sup}\|J_{\theta_{\pi_k}^\star}(t,X_t^{\pi_k})\|_{\mathrm{op}}^2\,\mathbb E_{\pi_k}[\Tr\{\Cov_{\pi_k}(U^{\pi_k}\mid t,X_t^{\pi_k})\}].
\]
Applying \cref{eq:large-k-intrinsic-envelope} proves the upper bound in \cref{eq:large-k-full-convergence}. The uniform condition means that there exist \(K_J\in\mathbb N\) and \(M<\infty\) such that, for every \(k\geq K_J\), the following essential optima control exists:
\[
\operatorname*{ess\,sup}\|J_{\theta_{\pi_k}^\star}(t,X_t^{\pi_k})\|_{\mathrm{op}}^2
\leq\max_{t\in[0,1],\,x\in\mathbb R^d}\|J_{\theta_{\pi_k}^\star}(t,x)\|_{\mathrm{op}}^2
\leq M.
\]
Since \(\Delta_k\to0\), the right-hand side of \cref{eq:large-k-full-convergence} is at most \(4M\Delta_k\to0\) for \(k\geq K_J\). The squeeze theorem therefore gives \(\sigma_{\text{grad},\pi_k}^2(\theta_{\pi_k}^\star)\to0\), so \cref{eq:large-k-full-eventual-reduction} holds for all sufficiently large \(k\) whenever the fixed baseline variance is positive.

For completeness, we also connect the per-sample statement to an update averaged over all matched pairs. 

We denote \(\{G_{k,1},\ldots,G_{k,k}\}\) as the gradients generated at \(\theta_{\pi_k}^\star\) by the \(k\) matched pairs in one empirical OT batch, with an independent uniform time sampled for each pair, and let \(I\) be an independent uniform index. Then, \(G_{k,I}\) has the same law as \(G_{\pi_k}(\theta_{\pi_k}^\star)\). The proof of Proposition~\ref{prop:variance-at-optimum} gives \(\mathbb E[G_{k,I}]=0\), so the averaged update also has mean zero. Jensen's inequality further gives us the following bound:
\begin{equation}\label{eq:jen}
    \Tr\!\left\{\Cov\!\left(k^{-1}\textstyle\sum_{i=1}^kG_{k,i}\right)\right\}=\mathbb E\!\left[\left\|k^{-1}\textstyle\sum_{i=1}^kG_{k,i}\right\|_2^2\right]
    \leq\mathbb E[\|G_{k,I}\|_2^2]
     = \sigma_{\text{grad},\pi_k}^2(\theta_{\pi_k}^\star).
\end{equation}
Thus, the averaged matched-batch update also has variance converging to zero under the same vanishing-product condition.
\end{proof}

\subsection{Proposition~\ref{prop:entropic-large-k-fixed}}\label{app:proof-entropic-large-k}

To ensure rigor, we first establish several lemmas required to pass from convergence of entropic couplings to convergence of their target uncertainty.

\begin{lemma}
\label{lem:entropic-uncertainty-usc}
If \(\pi_n,\pi\in\Pi(q_0,q_1)\), with \(q_0,q_1\in\mathcal P_2(\mathbb R^d)\), and \(\pi_n\to\pi\) in \(W_2\), then we have
\begin{equation}\label{eq:entropic-uncertainty-usc}
\limsup_{n\to\infty}\mathsf R(\pi_n)\leq\mathsf R(\pi).
\end{equation}
\end{lemma}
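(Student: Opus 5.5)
The plan is to write $\mathsf R$ as a difference of two quantities: a kinetic term that converges, and a projection term that is lower semicontinuous. The conditional-orthogonality identity from the proof of Proposition~\ref{prop:fm-loss-cost-gap} gives, for every coupling $\pi$,
\[
\mathsf R(\pi)=\mathbb E_\pi[\|X_1-X_0\|_2^2]-\mathbb E_\pi[\|u^\pi(t,X_t^\pi)\|_2^2].
\]
So \cref{eq:entropic-uncertainty-usc} follows from two facts:
\[
\lim_{n}\mathbb E_{\pi_n}[\|X_1-X_0\|_2^2]=\mathbb E_\pi[\|X_1-X_0\|_2^2],
\qquad
\liminf_n\mathbb E_{\pi_n}[\|u^{\pi_n}\|_2^2]\geq\mathbb E_\pi[\|u^\pi\|_2^2].
\]

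The first limit is routine. Convergence in $W_2$ on $\mathbb R^{2d}$ is equivalent to weak convergence together with convergence of second moments. The integrand $(x_0,x_1)\mapsto\|x_1-x_0\|_2^2$ is continuous and bounded by $2(\|x_0\|_2^2+\|x_1\|_2^2)$, so its integral converges. Alternatively, one can reuse the uniform-integrability bound $\tau(M)\to0$ over $\Pi(q_0,q_1)$ from Appendix~\ref{app:proof-eot-epsilon-existence}, combined with truncation at level $M$. This also shows that $\lambda\otimes\pi_n\to\lambda\otimes\pi$ weakly on $[0,1]\times\mathbb R^{2d}$, with uniformly integrable $\|U\|_2^2$.

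For the second fact I would use the variational characterization of the conditional mean as an $L^2$ projection. For any bounded continuous $\phi:[0,1]\times\mathbb R^d\to\mathbb R^d$,
\[
\mathbb E_{\pi_n}[\|u^{\pi_n}\|_2^2]\;\geq\;2\,\mathbb E_{\pi_n}[\phi(t,X_t)^\top U]-\mathbb E_{\pi_n}[\|\phi(t,X_t)\|_2^2].
\]
This holds because, after conditioning, the right-hand side equals $\mathbb E_{\pi_n}[\|u^{\pi_n}\|_2^2]-\mathbb E_{\pi_n}[\|u^{\pi_n}-\phi\|_2^2]$. For fixed $\phi$, both terms on the right converge as $n\to\infty$:
\begin{itemize}
\item $\|\phi(t,X_t)\|_2^2$ is a bounded continuous function of $(t,x_0,x_1)$, so weak convergence suffices;
\item $\phi(t,X_t)^\top U$ is continuous with growth at most $\|\phi\|_\infty\|x_1-x_0\|_2$, which is uniformly integrable by the first step.
\end{itemize}
Hence
\[
\liminf_n\mathbb E_{\pi_n}[\|u^{\pi_n}\|_2^2]\geq2\,\mathbb E_\pi[\phi^\top U]-\mathbb E_\pi[\|\phi\|_2^2]=\mathbb E_\pi[\|u^\pi\|_2^2]-\mathbb E_\pi[\|u^\pi-\phi\|_2^2].
\]

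The remaining step, which I expect to be the only delicate point, is to take the supremum over $\phi$. This requires that bounded continuous fields are dense in $L^2(\mu_\pi;\mathbb R^d)$, where $\mu_\pi$ is the law of $(t,X_t^\pi)$. The measure $\mu_\pi$ is a finite Borel measure on the metric space $[0,1]\times\mathbb R^d$, so $C_b$ is dense in $L^2(\mu_\pi)$ by the standard regularity/Lusin argument. Moreover, $u^\pi\in L^2(\mu_\pi)$ because $\mathbb E\|u^\pi\|_2^2\leq\mathbb E\|U\|_2^2<\infty$. Choosing $\phi_j\to u^\pi$ in $L^2(\mu_\pi)$ makes the defect term $\mathbb E_\pi[\|u^\pi-\phi_j\|_2^2]$ vanish, which gives the claimed liminf. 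Subtracting the two facts yields $\limsup_n\mathsf R(\pi_n)\leq\mathsf R(\pi)$.

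A measurability check is needed throughout: $u^{\pi_n}$ is the fixed jointly measurable version chosen at the start of Appendix~\ref{app:proof}. Only equalities holding $\mu_{\pi_n}$-almost surely are used, so the choice of version does not matter.
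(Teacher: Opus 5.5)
Your proof is correct and is essentially the paper's argument. The paper writes $\mathsf R(\pi)=\inf_f\mathbb E_P[\|U-f(t,X_t)\|_2^2]$ over bounded continuous $f$ (dense in $L^2$ of the law of $(t,X_t^\pi)$), notes that each fixed-$f$ risk converges along $\pi_n\to\pi$ because the integrand is continuous with quadratic growth, and takes the infimum; your convergent kinetic term $\mathbb E_{\pi_n}[\|U\|_2^2]$ minus a lower semicontinuous supremum over $\phi$ is exactly this with $\|U-\phi\|_2^2$ expanded.
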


\begin{proof}
We write \(P_n\) and \(P\) as the laws of \(((t,X_t),U)\) induced by \(\pi_n\) and \(\pi\), respectively. Then, we can couple \(\pi_n\) and \(\pi\) optimally and use the same independent \(t\) for both couplings. 

Since \(X_t=(1-t)X_0+tX_1\) and \(U=X_1-X_0\) are linear in the endpoints, \(P_n\to P\) in \(W_2\). The conditional-mean characterization of squared-error regression gives
\[
\mathsf R(\pi)=\inf_f\mathbb E_P[\|U-f(t,X_t)\|_2^2],
\]
where the infimum may be restricted to bounded continuous \(f\), which are dense in the relevant \(L^2\) space. For every such fixed \(f\), the integrand is continuous with quadratic growth, so its expectation converges under \(W_2\) convergence. Therefore, we end up with
\[
\limsup_{n\to\infty}\mathsf R(\pi_n)\leq\mathbb E_P[\|U-f(t,X_t)\|_2^2].
\]
Taking the infimum over \(f\) proves \cref{eq:entropic-uncertainty-usc}.
\end{proof}

For the convergence of empirical entropic plans, we first try to conclude the following local value bound. Before that, for notation simplicity, given probability measure $\mu$ and $\nu$, we define
\[
\mathsf S_\varepsilon(\mu,\nu):=\inf_{\rho\in\Pi(\mu,\nu)}\left\{\frac{1}{2d}\mathbb E_\rho[\|X_1-X_0\|_2^2]+\varepsilon\operatorname{KL}(\rho\|\mu\otimes\nu)\right\}.
\]
\begin{lemma}\label{lem:entropic-shadow-bound}
Let \(\widehat q_{0,k}\) and \(\widehat q_{1,k}\) be the two empirical endpoint distributions, set \(a_k=W_2(\widehat q_{0,k},q_0)+W_2(\widehat q_{1,k},q_1)\), and let \(C_{\emph{ind}}=\mathbb E_{\pi_{\emph{ind}}}[\|X_1-X_0\|_2^2]\). Then, for every \(\varepsilon>0\), we have
\begin{equation}\label{eq:entropic-shadow-bound}
\mathsf S_\varepsilon(\widehat q_{0,k},\widehat q_{1,k})\leq\mathsf S_\varepsilon(q_0,q_1)+\tfrac{1}{2d}(2\sqrt{C_{\emph{ind}}}\,a_k+a_k^2).
\end{equation}
\end{lemma}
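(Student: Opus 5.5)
Our plan is to build a competitor for the empirical problem by \emph{gluing} the population entropic plan \(\pi_\varepsilon\) to the empirical marginals through optimal couplings, and then to bound its transport cost and its entropy separately. Let \(\gamma_0\in\Pi(\widehat q_{0,k},q_0)\) and \(\gamma_1\in\Pi(q_1,\widehat q_{1,k})\) be \(W_2\)-optimal couplings. Write \(\gamma_0(d\hat x_0\mid x_0)\) and \(\gamma_1(d\hat x_1\mid x_1)\) for the conditional kernels given the population points. We define the shadow plan
\[
\rho:=\text{law of }(\hat X_0,\hat X_1),\qquad (X_0,X_1)\sim\pi_\varepsilon,\ \hat X_0\sim\gamma_0(\cdot\mid X_0),\ \hat X_1\sim\gamma_1(\cdot\mid X_1)\ \text{independently}.
\]
Its marginals are \(\widehat q_{0,k}\) and \(\widehat q_{1,k}\), so \(\rho\in\Pi(\widehat q_{0,k},\widehat q_{1,k})\) is admissible in \(\mathsf S_\varepsilon(\widehat q_{0,k},\widehat q_{1,k})\).

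\textbf{Cost.} By Minkowski's inequality in \(L^2\),
\[
\|\hat X_1-\hat X_0\|_{L^2}\leq\|X_1-X_0\|_{L^2}+\|\hat X_0-X_0\|_{L^2}+\|\hat X_1-X_1\|_{L^2}=\sqrt{\mathbb E_{\pi_\varepsilon}\|X_1-X_0\|^2}+a_k .
\]
Squaring gives \(\mathbb E_\rho\|X_1-X_0\|^2\leq c_\varepsilon+2\sqrt{c_\varepsilon}\,a_k+a_k^2\), where \(c_\varepsilon=\mathbb E_{\pi_\varepsilon}\|X_1-X_0\|^2\). Using \(\pi_{\text{ind}}\) as a competitor in \cref{eq:population-eot} yields \(c_\varepsilon\leq c_\varepsilon+2d\varepsilon\operatorname{KL}(\pi_\varepsilon\|\pi_{\text{ind}})\leq C_{\text{ind}}\). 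Hence \(\sqrt{c_\varepsilon}\leq\sqrt{C_{\text{ind}}}\), which produces the additive term \(\tfrac1{2d}(2\sqrt{C_{\text{ind}}}\,a_k+a_k^2)\).

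\textbf{Entropy.} We need \(\operatorname{KL}(\rho\|\widehat q_{0,k}\otimes\widehat q_{1,k})\leq\operatorname{KL}(\pi_\varepsilon\|q_0\otimes q_1)\). The plan is to use the data-processing inequality for the Markov kernel \(K((x_0,x_1),\cdot)=\gamma_0(\cdot\mid x_0)\otimes\gamma_1(\cdot\mid x_1)\). This kernel maps \(\pi_\varepsilon\) to \(\rho\). It also maps \(q_0\otimes q_1\) to \(\widehat q_{0,k}\otimes\widehat q_{1,k}\), because it factorizes across coordinates and each factor pushes \(q_i\) to \(\widehat q_{i,k}\). Since KL does not increase under a common kernel, the entropy term of \(\rho\) is at most that of \(\pi_\varepsilon\).

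Adding the two bounds gives
\[
\mathsf S_\varepsilon(\widehat q_{0,k},\widehat q_{1,k})\leq\tfrac1{2d}c_\varepsilon+\varepsilon\operatorname{KL}(\pi_\varepsilon\|\pi_{\text{ind}})+\tfrac1{2d}(2\sqrt{C_{\text{ind}}}\,a_k+a_k^2).
\]
The first two terms equal \(\mathsf S_\varepsilon(q_0,q_1)\), which is \cref{eq:entropic-shadow-bound}.

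\textbf{Main obstacle.} I expect the delicate step to be the measure-theoretic justification, namely that \(\pi_\varepsilon\) exists with finite KL and that the gluing kernels exist. Existence with finite KL follows from lower semicontinuity and compactness of \(\Pi(q_0,q_1)\), with \(\pi_{\text{ind}}\) as a finite competitor. The kernels exist by disintegration on Polish spaces. The data-processing step also requires the kernel to act identically on both measures, and this is exactly why the product structure of \(K\) is essential.
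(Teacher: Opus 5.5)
Your proposal is correct and follows the paper's proof essentially step for step. Both arguments push the population entropic plan through the product of optimal $W_2$ kernels to the empirical marginals, bound the entropy by data processing, and bound the cost by Minkowski together with $\mathbb E_{\pi_\varepsilon}[\|X_1-X_0\|_2^2]\leq C_{\text{ind}}$ (from $\pi_{\text{ind}}$ as a competitor). The only difference is that the paper derives the KL contraction explicitly through the conditional-expectation form of the transformed density and conditional Jensen, whereas you cite the data-processing inequality directly.
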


\begin{proof}
Let \(\rho_\varepsilon\) minimize \(\mathsf S_\varepsilon(q_0,q_1)\), and transport each of its marginals to the corresponding empirical marginal through optimal \(W_2\) kernels. Applying the product of these kernels to \(\rho_\varepsilon\) produces a coupling \(\rho_\varepsilon'\in\Pi(\widehat q_{0,k},\widehat q_{1,k})\). 

We claim that \(\operatorname{KL}(\rho_\varepsilon'\|\widehat q_{0,k}\otimes\widehat q_{1,k})\leq\operatorname{KL}(\rho_\varepsilon\|q_0\otimes q_1)\). To prove it, we fix the sampled endpoint batches and let \(K_i(x_i,d\widehat x_i)\) be the transport kernel from \(q_i\) to \(\widehat q_{i,k}\), for \(i\in\{0,1\}\). Thus, for every Borel set \(A_i\subseteq\mathbb R^d\), we have
\(
\int K_i(x_i,A_i)\,q_i(dx_i)=\widehat q_{i,k}(A_i)
\).

We denote \(x=(x_0,x_1)\) and \(\widehat x=(\widehat x_0,\widehat x_1)\), and the product kernel is
\(K_0(x_0,d\widehat x_0)K_1(x_1,d\widehat x_1)\). For any probability measure \(\mu\) on the endpoint space, the law obtained by applying this kernel to \(\mu\) assigns to each Borel set \(A\) with \(\int\left(\int_A K_0(x_0,d\widehat x_0)K_1(x_1,d\widehat x_1)\right)\mu(dx)\). So,
we then have
\[
\rho_\varepsilon'(A)
=
\int
\left(
\int_A K_0(x_0,d\widehat x_0)K_1(x_1,d\widehat x_1)
\right)\rho_\varepsilon(dx).
\]
Applying the same kernel to the independent reference measure gives, for every Borel rectangle \(A_0\times A_1\), we can do the following derivation:
\[
\begin{aligned}
\int
\left(
\int_{A_0\times A_1}
K_0(x_0,d\widehat x_0)K_1(x_1,d\widehat x_1)
\right)&(q_0\otimes q_1)(dx)
\!=\!
\int
K_0(x_0,A_0)K_1(x_1,A_1)\,
q_0(dx_0)q_1(dx_1)\\
&=
\left(\int K_0(x_0,A_0)\,q_0(dx_0)\right)
\left(\int K_1(x_1,A_1)\,q_1(dx_1)\right)\\
&=
\widehat q_{0,k}(A_0)\widehat q_{1,k}(A_1)\\
&=
(\widehat q_{0,k}\otimes\widehat q_{1,k})(A_0\times A_1).
\end{aligned}
\]
Since these rectangles determine the probability measure, for every Borel set \(A\), we have
\[
\int
\left(
\int_A K_0(x_0,d\widehat x_0)K_1(x_1,d\widehat x_1)
\right)(q_0\otimes q_1)(dx)
=
(\widehat q_{0,k}\otimes\widehat q_{1,k})(A).
\]
We next derive the entropy inequality directly. Since independent coupling is a finite-objective competitor and \(\varepsilon>0\), optimality gives
\(\operatorname{KL}(\rho_\varepsilon\mid q_0\otimes q_1)<\infty\). Indeed, since \(\rho_\varepsilon\ll q_0\otimes q_1\), the density ratio
\(\tfrac{d\rho_\varepsilon}{d(q_0\otimes q_1)}(x)\) is then well defined.

Now, we consider the joint reference law \((q_0\otimes q_1)(dx) K_0(x_0,d\widehat x_0)K_1(x_1,d\widehat x_1)\). Under this law, the original endpoint pair \(X\) has law \(q_0\otimes q_1\), while the transformed endpoint pair \(\widehat X\) has law \(\widehat q_{0,k}\otimes\widehat q_{1,k}\). We note that all the expectations below are taken under this joint reference law.

For any Borel set \(A\) in the transformed endpoint space, the definition of \(\rho_\varepsilon\) and the conditional-expectation identity give
\[
\begin{aligned}
\rho_\varepsilon'(A)
&=
\int
\left(
\int_A K_0(x_0,d\widehat x_0)K_1(x_1,d\widehat x_1)
\right)\rho_\varepsilon(dx)\\
&=
\int
\frac{d\rho_\varepsilon}{d(q_0\otimes q_1)}(x)
\left(
\int_A K_0(x_0,d\widehat x_0)K_1(x_1,d\widehat x_1)
\right)(q_0\otimes q_1)(dx)\\
&=
\mathbb E\!\left[
\frac{d\rho_\varepsilon}{d(q_0\otimes q_1)}(X)
\mathbf 1_{\{\widehat X\in A\}}
\right]\\
&=
\mathbb E\!\left[
\mathbb E\!\left[
\left.
\frac{d\rho_\varepsilon}{d(q_0\otimes q_1)}(X)
\right|\widehat X
\right]
\mathbf 1_{\{\widehat X\in A\}}
\right]\\
&=
\int_A
\mathbb E\!\left[
\left.
\frac{d\rho_\varepsilon}{d(q_0\otimes q_1)}(X)
\right|\widehat X=\widehat x
\right]
(\widehat q_{0,k}\otimes\widehat q_{1,k})(d\widehat x).
\end{aligned}
\]
Therefore, the transformed density ratio is
\[
\frac{d\rho_\varepsilon'}
{d(\widehat q_{0,k}\otimes\widehat q_{1,k})}(\widehat x)
=
\mathbb E\!\left[
\left.
\frac{d\rho_\varepsilon}{d(q_0\otimes q_1)}(X)
\right|\widehat X=\widehat x
\right]
\]
for \((\widehat q_{0,k}\otimes\widehat q_{1,k})\)-almost every \(\widehat x\).

Finally, the function \(s\!\mapsto\!s\log s\) (with \(0\log 0\!=\!0\)) is convex on \([0,\infty)\). Conditional Jensen's inequality therefore gives
\[
\begin{aligned}
&\mathbb E\!\left[
\left.
\frac{d\rho_\varepsilon}{d(q_0\otimes q_1)}(X)
\right|\widehat X
\right]
\log\!\left(
\mathbb E\!\left[
\left.
\frac{d\rho_\varepsilon}{d(q_0\otimes q_1)}(X)
\right|\widehat X
\right]
\right)\\
&\qquad\leq
\mathbb E\!\left[
\left.
\frac{d\rho_\varepsilon}{d(q_0\otimes q_1)}(X)
\log\!\left(
\frac{d\rho_\varepsilon}{d(q_0\otimes q_1)}(X)
\right)
\right|\widehat X
\right].
\end{aligned}
\]
Taking expectations and using the two marginal laws of the joint reference law, we get
\[
\begin{aligned}
\operatorname{KL}
\bigl(\rho_\varepsilon'\mid
\widehat q_{0,k}\otimes\widehat q_{1,k}\bigr)
&=\int
\frac{d\rho_\varepsilon'}
{d(\widehat q_{0,k}\otimes\widehat q_{1,k})}(\widehat x)
\log\!\left(
\frac{d\rho_\varepsilon'}
{d(\widehat q_{0,k}\otimes\widehat q_{1,k})}(\widehat x)
\right)
(\widehat q_{0,k}\otimes\widehat q_{1,k})(d\widehat x)\\
&=
\mathbb E\!\left[
\mathbb E\!\left[
\left.
\frac{d\rho_\varepsilon}{d(q_0\otimes q_1)}(X)
\right|\widehat X
\right]
\log\!\left(
\mathbb E\!\left[
\left.
\frac{d\rho_\varepsilon}{d(q_0\otimes q_1)}(X)
\right|\widehat X
\right]
\right)
\right]\\
&\leq
\mathbb E\!\left[
\mathbb E\!\left[
\left.
\frac{d\rho_\varepsilon}{d(q_0\otimes q_1)}(X)
\log\!\left(
\frac{d\rho_\varepsilon}{d(q_0\otimes q_1)}(X)
\right)
\right|\widehat X
\right]
\right]\\
&=
\mathbb E\!\left[
\frac{d\rho_\varepsilon}{d(q_0\otimes q_1)}(X)
\log\!\left(
\frac{d\rho_\varepsilon}{d(q_0\otimes q_1)}(X)
\right)
\right]\\
&=
\int
\frac{d\rho_\varepsilon}{d(q_0\otimes q_1)}(x)
\log\!\left(
\frac{d\rho_\varepsilon}{d(q_0\otimes q_1)}(x)
\right)(q_0\otimes q_1)(dx)\\
&=
\operatorname{KL}(\rho_\varepsilon\mid q_0\otimes q_1).
\end{aligned}
\]
So our claim is proved, and by using Minkowski's inequality, we have
\[
\sqrt{\mathbb E_{\rho_\varepsilon'}[\|X_1-X_0\|_2^2]}\leq\sqrt{\mathbb E_{\rho_\varepsilon}[\|X_1-X_0\|_2^2]}+a_k.
\]
Since independent coupling is feasible and has zero relative entropy, the raw transport cost of \(\rho_\varepsilon\) is at most \(C_{\text{ind}}\). Substituting these bounds into the empirical objective, we can then conclude \cref{eq:entropic-shadow-bound}.
\end{proof}

\begin{lemma}
\label{lem:entropic-index-measure-reduction}
Considering two sampled endpoint batches \((X_1,\!\ldots\!,X_k)\) and \((Y_1,\!\ldots\!,Y_k)\), then for any uniform-marginal matrix \(P\!=(p_{ij})_{i,j=1}^k\!\in\!\mathbb R_+^{k\times k}\) satisfying \(P\mathbf1\!=P^\top\mathbf1\!=\mathbf1/k\), we write \(\Gamma_P=\sum_{i,j=1}^k p_{ij}\,\delta_{(X_i,Y_j)}\in\Pi(\widehat q_{0,k},\widehat q_{1,k})\) as its pushforward to endpoint space. Then, the index-level problem in \cref{eq:eot} and the measure-level problem defining \(\mathsf S_\varepsilon(\widehat q_{0,k},\widehat q_{1,k})\) have the same minimum. 
\end{lemma}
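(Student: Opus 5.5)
The plan is to compare the two problems term by term through the map \(\iota:(i,j)\mapsto(X_i,Y_j)\) from index space \(\{1,\ldots,k\}^2\) to endpoint space. I will show that every index-level plan yields a measure-level plan with no larger objective, and that every measure-level plan lifts to an index-level plan with the same objective. This gives equality of the two infima. Existence of minimizers is then immediate: the index problem is a strictly convex, lower semicontinuous objective over the compact polytope of uniform-marginal matrices. Throughout, the batches are fixed, and I allow repeated atoms, so that \(X_i=X_{i'}\) or \(Y_j=Y_{j'}\) may occur with positive probability when \(q_0\) or \(q_1\) has atoms. Handling these repeated atoms is the only real subtlety.

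\emph{Step 1 (index \(\to\) measure).} Let \(P\) be uniform-marginal. Then \(\Gamma_P=\iota_\#P\), and \(\iota_\#(\mathbf1\mathbf1^\top/k^2)=\widehat q_{0,k}\otimes\widehat q_{1,k}\). The first marginal of \(\Gamma_P\) is \(\sum_i(1/k)\delta_{X_i}=\widehat q_{0,k}\), and likewise for the second, so \(\Gamma_P\in\Pi(\widehat q_{0,k},\widehat q_{1,k})\). For the cost,
\[
\langle\mathbf C,P\rangle=\tfrac1{2d}\textstyle\sum_{i,j}p_{ij}\|X_i-Y_j\|_2^2=\tfrac1{2d}\mathbb E_{\Gamma_P}[\|X_1-X_0\|_2^2].
\]
For the entropy, the data-processing inequality for KL under the common pushforward \(\iota\) gives
\[
\operatorname{KL}(\Gamma_P\|\widehat q_{0,k}\otimes\widehat q_{1,k})\le\operatorname{KL}(P\|\mathbf1\mathbf1^\top/k^2).
\]
Hence \(\mathsf S_\varepsilon(\widehat q_{0,k},\widehat q_{1,k})\) is at most the index-level minimum. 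Equality holds in this step when all atoms are distinct, but not in general.

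\emph{Step 2 (measure \(\to\) index lift).} Let \(\rho\in\Pi(\widehat q_{0,k},\widehat q_{1,k})\) have finite objective. Then \(\rho\) is supported on the finite set of atom pairs. Write \(m_0(x)=\#\{i:X_i=x\}\) and \(m_1(y)=\#\{j:Y_j=y\}\), and define
\[
p_{ij}=\frac{\rho(\{(X_i,Y_j)\})}{m_0(X_i)\,m_1(Y_j)},
\]
which splits each atom's mass evenly among its duplicate indices. Summing over \(j\) groups the indices by distinct values \(y\), each appearing \(m_1(y)\) times:
\[
\textstyle\sum_jp_{ij}=\rho_0(\{X_i\})/m_0(X_i)=(m_0(X_i)/k)/m_0(X_i)=1/k.
\]
Symmetrically for columns, so \(P\) is uniform-marginal, and by construction \(\Gamma_P=\rho\), which makes the costs equal. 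Moreover,
\[
k^2p_{ij}=\frac{\rho(\{(X_i,Y_j)\})}{\widehat q_{0,k}(\{X_i\})\,\widehat q_{1,k}(\{Y_j\})}=\frac{d\rho}{d(\widehat q_{0,k}\otimes\widehat q_{1,k})}(X_i,Y_j).
\]
Thus the density ratio of \(P\) with respect to the uniform reference is exactly the pulled-back density ratio of \(\rho\). Summing \(p_{ij}\log(k^2p_{ij})\) over duplicate index blocks then reproduces \(\operatorname{KL}(\rho\|\widehat q_{0,k}\otimes\widehat q_{1,k})\) exactly. So the index-level minimum is at most the objective of \(\rho\), and taking the infimum over \(\rho\) closes the inequality.

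The main obstacle I expect is precisely this tie handling. The naive identification of \(P\) with \(\Gamma_P\) fails to be injective when atoms repeat, so Step 1 alone gives only an inequality. The even-splitting lift in Step 2 is the device that restores equality, and I would check carefully that its marginals and its KL computation are unaffected by the multiplicities. A corollary of the argument is that the index-level minimizer \(\widehat\pi_{k,\varepsilon}\) pushes forward to the measure-level minimizer, which is what later convergence arguments will use.
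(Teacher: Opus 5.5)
Your proof is correct and matches the paper's. Your Step 1 is the data-processing half of the paper's KL chain-rule decomposition over the location blocks \(I_a\times J_b\); the paper simply writes out the nonnegative conditional-entropy slack explicitly. Your even-splitting lift is exactly the paper's block-uniform lift \(p_{ij}=\overline P_{ab}/(m_a n_b)\), for which that slack vanishes. The explicit chain rule also shows that the index-level optimizer must be block-uniform, but that is not needed for equality of the two minima.
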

\vspace{-1em}
\begin{proof}
Let \(\{x^1,\ldots,x^A\}\) and \(\{y^1,\ldots,y^B\}\) be the distinct values in the two batches. We write
\[
I_a:=\{i:X_i=x^a\},\qquad
J_b:=\{j:Y_j=y^b\},\qquad
m_a:=|I_a|,\qquad n_b:=|J_b|.
\]
For a feasible matrix \(P\), we define its location-level block masses by
\[
    \overline P_{ab}
    :=\sum_{i\in I_a}\sum_{j\in J_b}p_{ij}.
\]
Then, \(\overline P\) is a coupling between the location masses \((m_a/k)_{a=1}^A\) and \((n_b/k)_{b=1}^B\), and
\[
    \Gamma_P=\sum_{a=1}^A\sum_{b=1}^B
    \overline P_{ab}\,\delta_{(x^a,y^b)}.
\]
Whenever \(\overline P_{ab}>0\), we set \(r_{ij}^{ab}:=p_{ij}/\overline P_{ab}\) on \(I_a\times J_b\); when \(\overline P_{ab}=0\), one can choose \(r^{ab}\) arbitrarily. The relative-entropy chain rule gives us that
\begin{equation}
\operatorname{KL}\!\left(P\middle\|\mathbf1\mathbf1^\top/k^2\right)=
\operatorname{KL}\!\left(
    \overline P
    \middle\|
    \left(m_a n_b/k^2\right)_{a,b}
\right)
+
\sum_{a,b}\overline P_{ab}
\operatorname{KL}\!\left(
    r^{ab}
    \middle\|
    \operatorname{Unif}(I_a\times J_b)
\right).
\label{eq:entropic-index-measure-kl-chain}
\end{equation}
So, clearly, the first term in \cref{eq:entropic-index-measure-kl-chain} is exactly \(\operatorname{KL}(\Gamma_P\|\widehat q_{0,k}\otimes \widehat q_{1,k})\), while the second term is nonnegative. Moreover, the quadratic cost is constant on every block \(I_a\times J_b\), so we have
\[
    \langle\mathbf C,P\rangle
    =\frac1{2d}\int\|x-y\|_2^2\,\Gamma_P(\mathrm dx,\mathrm dy).
\]

Conversely, every location-level coupling \(\overline P\) has the feasible block-uniform lift
\[
    p_{ij}:=\frac{\overline P_{ab}}{m_a n_b},
    \qquad i\in I_a,\ j\in J_b.
\]
Indeed, for \(i\in I_a\), we have
\[
\sum_jp_{ij}
=\frac1{m_a}\sum_b\overline P_{ab}
=\frac1k,
\]
and the column constraints follow identically. For this lift every conditional entropy term in \cref{eq:entropic-index-measure-kl-chain} vanishes. Thus, minimizing the index-level objective first over lifts of a fixed endpoint coupling, and then over endpoint couplings, gives exactly the measure-level entropic problem. The block-uniform lift is the unique entropy-minimizing lift of each endpoint coupling. 

Since relative entropy is strictly convex on the finite location-level coupling polytope, the measure-level optimizer is unique, and its block-uniform lift is the unique index-level optimizer.
\end{proof}
\vspace{-0.5em}
With these lemmas, we can finally now prove Proposition~\ref{prop:entropic-large-k-fixed}:
\vspace{-0.5em}
\begin{proof}[Proof of Proposition~\ref{prop:entropic-large-k-fixed}]
We denote \(\widehat q_{0,k}\) and \(\widehat q_{1,k}\) as the empirical marginals and set \(\widehat\gamma_{k,\varepsilon}\!=\!\Gamma_{\widehat\pi_{k,\varepsilon}}\). By Lemma~\ref{lem:entropic-index-measure-reduction}, conditional on the endpoint batches, \(\widehat\gamma_{k,\varepsilon}\) is exactly the unique minimizer of \(\mathsf S_\varepsilon(\widehat q_{0,k},\widehat q_{1,k})\). The empirical marginals converge almost surely to \(q_0\) and \(q_1\) in \(W_2\).

Fix \(\varepsilon>0\), we can work on an event of probability on which both empirical \(W_2\) convergences hold. We claim that the sequence \(\{\widehat\gamma_{k,\varepsilon}\}_k\) is tight as its two marginals are tight. 

Let \(\gamma\) be the weak limit of an arbitrary convergent subsequence \(\{\widehat\gamma_{k_j,\varepsilon}\}_{j\geq1}\), its marginals are \(q_0\) and \(q_1\). Joint lower semicontinuity of the quadratic cost and relative entropy along this subsequence gives us
\begin{equation}
\frac1{2d}\mathbb E_\gamma\|X_1-X_0\|_2^2 +\varepsilon\operatorname{KL}(\gamma\|q_0\otimes q_1) \nonumber\leq \liminf_{j\to\infty} \mathsf S_\varepsilon(\widehat q_{0,k_j},\widehat q_{1,k_j}).
\label{eq:entropic-empirical-liminf}
\end{equation}
For the entropy term, the joint lower semicontinuity follows directly from its variational representation
\[
\operatorname{KL}(\mu\|\nu) =\sup_{f\in C_b} \left\{\int f\,\mathrm d\mu- \log\int e^f\,\mathrm d\nu\right\},
\]
which is because \(\widehat q_{0,k}\otimes\widehat q_{1,k}\!\Rightarrow\! q_0\otimes q_1\). On the other hand, Lemma~\ref{lem:entropic-shadow-bound} and \(a_k\to0\) (almost surely) give
\begin{equation}
\label{eq:entropic-empirical-limsup}
\limsup_{k\to\infty} \mathsf S_\varepsilon(\widehat q_{0,k},\widehat q_{1,k}) \leq\mathsf S_\varepsilon(q_0,q_1).
\end{equation}
The full-sequence bound in \cref{eq:entropic-empirical-limsup} also holds along \(\{k_j\}\). Therefore, we have
\[
\begin{aligned}
\frac1{2d}\mathbb E_\gamma\|X_1\!-\!X_0\|_2^2 \!+\!\varepsilon\operatorname{KL}(\gamma\|q_0\!\otimes\!q_1) \!\leq\! \liminf_{j\to\infty}\mathsf S_\varepsilon(\widehat q_{0,k_j},\widehat q_{1,k_j}) \!\leq\! \limsup_{j\to\infty}\mathsf S_\varepsilon(\widehat q_{0,k_j},\widehat q_{1,k_j}) \!\leq\! \mathsf S_\varepsilon(q_0,q_1),
\end{aligned}
\]
so \(\gamma\) minimizes the population objective in \cref{eq:population-eot}. We note the objective has a unique minimizer, i.e., independent coupling has finite objective, and relative entropy is strictly convex on the finite-objective domain, so \(\gamma=\pi_\varepsilon\). Since every weak cluster point is the same, we end up with
\[
    \widehat\gamma_{k,\varepsilon}\Rightarrow\pi_\varepsilon
    \qquad\text{almost surely}.
\]
Furthermore, we have
\[
\int(\|x\|_2^2+\|y\|_2^2)\, \widehat\gamma_{k,\varepsilon}(\mathrm dx,\mathrm dy) = \int\|x\|_2^2\,\widehat q_{0,k}(\mathrm dx) + \int\|y\|_2^2\,\widehat q_{1,k}(\mathrm dy),
\]
which converges almost surely to the corresponding second moment under \(q_0\otimes q_1\). Thus, the almost-sure convergence is in \(W_2\). Conditional on the endpoint batches, \(\widehat\gamma_{k,\varepsilon}\) is precisely the law of the selected endpoint pair. Therefore, for every bounded continuous \(g\), we have
\[
\int g\,\mathrm d\pi_{k,\varepsilon} = \mathbb E\!\left[\int g\,\mathrm d\widehat\gamma_{k,\varepsilon} \right] \longrightarrow \int g\,\mathrm d\pi_\varepsilon
\]
by bounded convergence, so \(\pi_{k,\varepsilon}\Rightarrow\pi_\varepsilon\). Every expected coupling \(\pi_{k,\varepsilon}\) has marginals \(q_0,q_1\), and therefore has the same product-space second moment as \(\pi_\varepsilon\). It follows that \(\pi_{k,\varepsilon}\rightarrow\pi_\varepsilon\) in \(W_2\).

For any coupling \(\pi\), the conditional orthogonality and Benamou--Brenier argument in the proof of Proposition~\ref{prop:fm-loss-cost-gap} gives
\begin{equation}
\label{eq:entropic-general-uncertainty-envelope}
0\leq\mathsf R(\pi)
\leq
\mathbb E_\pi[\|X_1-X_0\|_2^2]-W_2^2(q_0,q_1).
\end{equation}
Applying Lemma~\ref{lem:entropic-uncertainty-usc} to the established \(W_2\) convergence yields \(\limsup_{k\to\infty}\mathsf R(\pi_{k,\varepsilon}) \leq\mathsf R(\pi_\varepsilon)\),
and applying \cref{eq:entropic-general-uncertainty-envelope} to \(\pi_\varepsilon\) proves \cref{eq:entropic-large-k-fixed}.

We still need to prove the finite-batch equivalence. Let \(E\subseteq\mathbb R^d\times\mathbb R^d\) be Borel,  we write \(E_x=\{y:(x,y)\in E\}\). If \(\pi_{\text{ind}}(E)=0\), every one of the finitely many candidate source-target pairs avoids \(E\) almost surely, so \(\pi_{k,\varepsilon}(E)=0\). Conversely, if \(\pi_{\text{ind}}(E)>0\), then
\[
\mathbb P\!\left(Y_j\in E_{X_I}\text{ for every }j\right) =\int q_1(E_x)^k\,q_0(\mathrm dx)>0,
\]
and this event forces the selected pair to belong to \(E\), so \(\pi_{k,\varepsilon}\!\sim\!\pi_{\text{ind}}\). The Borel bijection
\[
(t,x_0,x_1)\mapsto
\bigl(t,(1-t)x_0+tx_1,x_1-x_0\bigr)
\]
preserves equivalence of the induced laws. Finally, \(\mathsf R(\pi)=0\) exactly when \(U^\pi\) is almost surely a measurable function of \((t,X_t^\pi)\), and this graph property is invariant under equivalent laws. We now prove the proposition.
\end{proof}

\subsection{Theorem~\ref{thm:entropic-large-k-fixed-gradient}}
We first establish Lemma~\ref{lem:entropic-fixed-batch-zero-temperature} before prove Theorem~\ref{thm:entropic-large-k-fixed-gradient}.
\begin{lemma}
\label{lem:entropic-fixed-batch-zero-temperature}
Suppose \(q_0,q_1\in\mathcal P_2(\mathbb R^d)\), and we choose any \(k\in\mathbb N\). Conditional on two sampled endpoint batches, let \(\mathcal O_k\) be the face of uniform-marginal matrices minimizing the empirical quadratic cost in \cref{eq:eot}, and we define
\begin{equation}
\label{eq:entropic-fixed-batch-me-plan}
\widehat\pi_{k,0}^{\mathrm{ME}}
:=
\operatorname*{arg\,min}_{P\in\mathcal O_k}
\operatorname{KL}\!\left(
P\middle\|\mathbf1\mathbf1^\top/k^2
\right).
\end{equation}
Let \(\pi_{k,0}^{\mathrm{ME}}\) be the averaged selected-pair coupling induced by this plan. Then, it gives that
\begin{equation}
\label{eq:entropic-fixed-batch-zero-temperature}
\left\|\pi_{k,\varepsilon}-\pi_{k,0}^{\mathrm{ME}}\right\|_{\mathrm{TV}}
\longrightarrow0,
\qquad
\mathsf R(\pi_{k,\varepsilon})
\longrightarrow
\mathsf R(\pi_{k,0}^{\mathrm{ME}})
\end{equation}
as \(\varepsilon\downarrow0\). Moreover, we have
\begin{equation}
\label{eq:entropic-fixed-batch-equivalence}
\pi_{k,0}^{\mathrm{ME}}\sim\pi_{\emph{ind}},
\qquad
\mathsf R(\pi_{k,0}^{\mathrm{ME}})=0
\quad\Longleftrightarrow\quad
\mathsf R(\pi_{\emph{ind}})=0.
\end{equation}
\end{lemma}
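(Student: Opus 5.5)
The plan is to work conditionally on the two sampled endpoint batches, where \cref{eq:eot} becomes a finite-dimensional linear program with an entropic penalty, and then average over the batches. Fix the batches. By the classical analysis of entropically penalized linear programs \citep{weed2018explicit}, the unique minimizer $\widehat\pi_{k,\varepsilon}$ converges, as $\varepsilon\downarrow0$, to the minimum-relative-entropy point of the optimal face $\mathcal O_k$, which is $\widehat\pi_{k,0}^{\mathrm{ME}}$. To make this self-contained, I would argue as follows. Take any cluster point $\bar P$, which exists by compactness of the transport polytope. Optimality against an element of $\mathcal O_k$ gives $\langle\mathbf C,\widehat\pi_{k,\varepsilon}\rangle\le \min_{\mathcal O_k}\langle \mathbf C,\cdot\rangle+\varepsilon\log k^2$, so $\bar P\in\mathcal O_k$. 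Comparing against $\widehat\pi_{k,0}^{\mathrm{ME}}$ and using $\langle\mathbf C,\widehat\pi_{k,\varepsilon}\rangle\ge\min$ gives $\operatorname{KL}(\widehat\pi_{k,\varepsilon}\|\mathbf1\mathbf1^\top/k^2)\le\operatorname{KL}(\widehat\pi_{k,0}^{\mathrm{ME}}\|\cdot)$. Continuity of KL on the finite simplex then gives $\bar P=\widehat\pi_{k,0}^{\mathrm{ME}}$, by uniqueness from strict convexity. For measurability, $\widehat\pi_{k,0}^{\mathrm{ME}}$ is the pointwise limit of the measurable maps $\widehat\pi_{k,\varepsilon}$, so it is a measurable function of the batches.

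Next I transfer this to the averaged couplings. The selected pair has conditional law $\Gamma_P=\sum_{ij}p_{ij}\delta_{(X_i,Y_j)}$, so for any Borel $E$,
\[
|\pi_{k,\varepsilon}(E)-\pi_{k,0}^{\mathrm{ME}}(E)|\le\mathbb E\Bigl[\textstyle\sum_{ij}|\widehat\pi_{k,\varepsilon,ij}-\widehat\pi^{\mathrm{ME}}_{k,0,ij}|\Bigr].
\]
The integrand is bounded by $2$ and tends to $0$ almost surely, so dominated convergence gives the TV convergence. The induced laws of $(t,X_t,U)$ are images under the same Borel bijection, so they also converge in TV.

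For $\mathsf R(\pi_{k,\varepsilon})\to\mathsf R(\pi_{k,0}^{\mathrm{ME}})$, the upper semicontinuity argument of Lemma~\ref{lem:entropic-uncertainty-usc} applies once I upgrade TV to $W_2$. Weak convergence follows from TV, and the second moments coincide because the marginals are fixed, so $W_2$ convergence holds. This yields $\limsup\le$. The matching $\liminf$ is the main obstacle, since conditional-mean regression is not lower semicontinuous in general.

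For the lower bound, I would write $\pi_{k,\varepsilon}=\mathbb E[\Gamma_{\widehat\pi_{k,\varepsilon}}]$ and note that, with the batches fixed, each entry is a density with respect to the finitely many atoms. I would then try to dominate everything by a common reference, namely the averaged measure $\mathbb E[\sum_{ij}\delta_{(X_i,Y_j)}]/k^2$. Call its induced law $Q$; this is the independent law. With respect to $Q$, the densities $g_\varepsilon=d P_{\pi_{k,\varepsilon}}/dQ$ are bounded by $k$, uniformly in $\varepsilon$, because every entry satisfies $p_{ij}\le 1/k$. They converge in $L^1(Q)$ by the TV step. Writing $\mathsf R(\pi)=\mathbb E_Q[g\,\|U\|^2]-\mathbb E_Q\bigl[\|\mathbb E_Q[gU\mid t,X_t]\|^2/\mathbb E_Q[g\mid t,X_t]\bigr]$, I would show both terms converge. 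The first converges by uniform integrability of $\|U\|^2$ under $Q$, which is finite since $q_0,q_1\in\mathcal P_2$, together with the bound $g\le k$. For the second, the conditional expectations converge in $L^1$. The ratio is handled by conditional Cauchy–Schwarz domination, $\|\mathbb E_Q[gU\mid\cdot]\|^2/\mathbb E_Q[g\mid\cdot]\le\mathbb E_Q[g\|U\|^2\mid\cdot]$, which supplies a uniformly integrable envelope. Vitali's theorem then applies along subsequences converging almost surely.

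Finally, for \cref{eq:entropic-fixed-batch-equivalence} I reuse the argument from Proposition~\ref{prop:entropic-large-k-fixed}. If $\pi_{\text{ind}}(E)=0$, all candidate pairs avoid $E$ almost surely. Conversely, the event that every $Y_j$ lies in $E_{X_I}$ has probability $\int q_1(E_x)^k\,q_0(\mathrm dx)>0$ and forces the selected pair into $E$. This works for any row-stochastic selection, including $\widehat\pi^{\mathrm{ME}}_{k,0}$. The graph-invariance argument under equivalent laws then gives the $\mathsf R$ equivalence.
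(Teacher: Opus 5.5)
Your proposal is correct. The minimum-entropy limit on the optimal face, the measurability, the transfer to total variation, and the equivalence/graph-invariance step all match the paper's proof essentially line for line. The genuine difference is in how you prove $\mathsf R(\pi_{k,\varepsilon})\to\mathsf R(\pi_{k,0}^{\mathrm{ME}})$.

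The paper proves a general continuity statement: for any couplings in $\Pi(q_0,q_1)$ that converge in total variation, the Bayes risk converges. Its argument has four pieces:
\begin{itemize}
\item it truncates $U$ by the metric projection $T_L$;
\item it shows the truncated risk $\mathsf R_L$ is $8L^2$-Lipschitz in TV;
\item it controls the truncation error uniformly over all of $\Pi(q_0,q_1)$ with de~la~Vall\'ee--Poussin;
\item it uses that $\sqrt{\mathsf R}$ is a distance to a closed subspace, so it moves by at most $e_L^{1/2}$.
\end{itemize}

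You instead exploit the finite-batch structure. Every entry of a uniform-marginal plan is at most $1/k$, so $\pi_{k,\varepsilon}\le k\,\pi_{\text{ind}}$ setwise. Hence the densities $g_\varepsilon$ with respect to the independent law $Q$ of $(t,X_t,U)$ are bounded by $k$ uniformly in $\varepsilon$. The Bayes formula and conditional Cauchy--Schwarz then give the fixed integrable envelope $k\,\mathbb E_Q[\|U\|^2\mid t,X_t]$, and dominated convergence along a.s.-convergent subsequences finishes the argument. On $\{\mathbb E_Q[g_0\mid t,X_t]=0\}$ the same envelope forces the ratio to $0$, so the $0/0$ convention causes no trouble.

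This argument already gives convergence of both terms, hence of $\mathsf R$ itself. The detour through Lemma~\ref{lem:entropic-uncertainty-usc} and the $W_2$ upgrade is therefore redundant rather than needed for the $\limsup$.

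Your route is more elementary: it needs no truncation and no uniform integrability over the whole coupling set. It does, however, rely on a density bound $d\pi/d\pi_{\text{ind}}\le k$ that holds only at a fixed finite batch size. The paper's continuity lemma requires no domination by $\pi_{\text{ind}}$, only fixed $\mathcal P_2$ marginals. It would therefore also cover sequences involving population entropic plans, whose densities with respect to $\pi_{\text{ind}}$ need not be bounded. The same truncation pattern reappears in the paper's continuity argument for $\operatorname{err}$.
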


\begin{proof}
Let
\[
\mathcal T_k
:=
\left\{P\in\mathbb R_+^{k\times k}:
P\mathbf1=\frac1k\mathbf1,
\ P^\top\mathbf1=\frac1k\mathbf1
\right\}
\]
be the empirical transport poly-tope. Conditional on the sampled batches, we write
\[
C(P):=\langle\mathbf C,P\rangle, \qquad H(P):=\operatorname{KL}\!\left( P\middle\|\mathbf1\mathbf1^\top/k^2 \right), \qquad C_\star:=\min_{P\in\mathcal T_k}C(P).
\]
We note that \(\mathcal T_k\) is both compact and convex, and the optimal face \(\mathcal O_k=\{P\in\mathcal T_k:C(P)=C_\star\}\) is nonempty, compact, and convex. Because the reference matrix has strictly positive entries, \(H\) is continuous and strictly convex on \(\mathcal T_k\). Therefore, \cref{eq:entropic-fixed-batch-me-plan} has a unique solution.

We denote \(\widehat\pi_{k,\varepsilon}\) as the unique empirical entropic optimizer, and its optimality against \(\widehat\pi_{k,0}^{\mathrm{ME}}\) gives
\begin{equation*}
C(\widehat\pi_{k,\varepsilon}) +\varepsilon H(\widehat\pi_{k,\varepsilon}) \leq C_\star +\varepsilon H(\widehat\pi_{k,0}^{\mathrm{ME}}).
\end{equation*}
Since \(C(\widehat\pi_{k,\varepsilon})\geq C_\star\) and \(H\geq0\), we can show that
\[
0\leq C(\widehat\pi_{k,\varepsilon})-C_\star \leq \varepsilon H(\widehat\pi_{k,0}^{\mathrm{ME}}), \qquad H(\widehat\pi_{k,\varepsilon}) \leq H(\widehat\pi_{k,0}^{\mathrm{ME}}).
\] 
Therefore, every cluster point of \(\widehat\pi_{k,\varepsilon}\) as \(\varepsilon\downarrow0\) lies in \(\mathcal O_k\) and has entropy no larger than that of \(\widehat\pi_{k,0}^{\mathrm{ME}}\). 

The uniqueness in \cref{eq:entropic-fixed-batch-me-plan} forces every cluster point to equal \(\widehat\pi_{k,0}^{\mathrm{ME}}\). Compactness then gives convergence of the full sequence in the finite-dimensional polytope, equivalently in entrywise \(\ell_1\) and total variation.

For every positive \(\varepsilon\), the unique optimizer is a Borel function of the sampled cost matrix by the measurable maximum theorem. Taking, for example, \(\varepsilon_n=1/n\), the pointwise convergence just proved shows that \(\widehat\pi_{k,0}^{\mathrm{ME}}\) is the pointwise limit of measurable optimizers and is therefore measurable as well. 

For a matrix \(P\in\mathcal T_k\), we denote \( \Gamma_P=\sum_{i,j=1}^kP_{ij}\,\delta_{(X_i,Y_j)} \) as its selected-pair endpoint law conditional on the batches. Pushforward contracts total variation, so we have
\[
\|\Gamma_{\widehat\pi_{k,\varepsilon}}
-\Gamma_{\widehat\pi_{k,0}^{\mathrm{ME}}}\|_{\mathrm{TV}}
\leq
\frac12\|\widehat\pi_{k,\varepsilon}
-\widehat\pi_{k,0}^{\mathrm{ME}}\|_1
\longrightarrow0
\]
for every realized pair of batches. Convexity of total variation and dominated convergence now yield
\[
\begin{aligned}
\|\pi_{k,\varepsilon}-\pi_{k,0}^{\mathrm{ME}}\|_{\mathrm{TV}}
&\leq
\mathbb E\!\left[
\|\Gamma_{\widehat\pi_{k,\varepsilon}}
-\Gamma_{\widehat\pi_{k,0}^{\mathrm{ME}}}\|_{\mathrm{TV}}
\right]
\longrightarrow0.
\end{aligned}
\]
This proves the first convergence in \cref{eq:entropic-fixed-batch-zero-temperature}.

We next prove the continuity of \(\mathsf R\) needed for the second convergence. More generally, let \(\rho_n,\rho\in\Pi(q_0,q_1)\) satisfy \(\|\rho_n-\rho\|_{\mathrm{TV}}\to0\), and let \(P_n,P\) be the corresponding laws of \(((t,X_t),U)\), where \(U=X_1-X_0\). Appending the same independent \(t\sim\mathcal U([0,1])\) preserves total-variation convergence, while measurable pushforward contracts it. Thus, we have \(\|P_n-P\|_{\mathrm{TV}}\to0\).

Let \(T_L:\mathbb R^d\to\overline B(0,L)\) denote metric projection, we set \(\mathsf R_L(\rho')\!=\!\inf_f\mathbb E_{\rho'}\!\left[\|T_L(U)-f(t,X_t)\|_2^2\right]\). The infimum may be restricted to predictors with values in \(\overline B(0,L)\), because projection onto this ball cannot increase the distance to \(T_L(U)\), so the resulting losses are bounded by \(4L^2\). 

Given \(\|\mu-\nu\|_{\mathrm{TV}}=\sup_A|\mu(A)-\nu(A)|\), comparison with an arbitrarily accurate minimizer under the other law gives us that
\begin{equation}
\label{eq:bounded-bayes-tv}
|\mathsf R_L(\rho_n)-\mathsf R_L(\rho)|
\leq8L^2\|P_n-P\|_{\mathrm{TV}}
\longrightarrow0.
\end{equation}
The final missing point is that we need to remove the truncation uniformly. We write \(A=\|X_0\|_2^2\) and \(B=\|X_1\|_2^2\); because their two fixed marginal laws are integrable, the de~la~Vall\'ee--Poussin criterion supplies a nondecreasing, convex, superlinear function \(\Phi\) such that
\[
\mathbb E_{q_0}[\Phi(A)]+\mathbb E_{q_1}[\Phi(B)]<\infty.
\]
For every \(\rho'\in\Pi(q_0,q_1)\), we have
\[
\frac{\|X_1-X_0\|_2^2}{4}
\leq\frac{A+B}{2},
\]
and monotonicity and convexity give the following bound
\[
\sup_{\rho'\in\Pi(q_0,q_1)}
\mathbb E_{\rho'}\!\left[
\Phi\!\left(\frac{\|X_1-X_0\|_2^2}{4}\right)
\right]
\leq
\frac12\mathbb E_{q_0}[\Phi(A)]
+\frac12\mathbb E_{q_1}[\Phi(B)]<\infty.
\]
Therefore, \(\{\|X_1-X_0\|_2^2:\rho'\in\Pi(q_0,q_1)\}\) is uniformly integrable, and we end up with
\begin{equation}\label{eq:uniform-target-truncation}
e_L=\sup_{\rho'\in\Pi(q_0,q_1)}\mathbb E_{\rho'}\!\left[\|U-T_L(U)\|_2^2\right]\longrightarrow 0.
\end{equation}
For any coupling \(\rho'\), the square root of its Bayes risk is the \(\ell_2\)-distance from \(U\) to the closed subspace of \((t,X_t)\)-measurable vector-valued functions. Since the distance to a closed set is $1$-Lipschitz, we have
\begin{equation} \label{eq:bayes-risk-truncation}
\left| \sqrt{\mathsf R(\rho')}-\sqrt{\mathsf R_L(\rho')} \right| \leq \left( \mathbb E_{\rho'}\|U-T_L(U)\|_2^2 \right)^{1/2} \leq e_L^{1/2}.
\end{equation}
Combining \cref{eq:bounded-bayes-tv,eq:uniform-target-truncation,eq:bayes-risk-truncation}, first taking \(n\!\to\!\infty\) at fixed \(L\) and then taking \(L\!\to\!\infty\), we get \(\mathsf R(\rho_n)\to\mathsf R(\rho)\). Applying this fact to the endpoint couplings above proves the second convergence in \cref{eq:entropic-fixed-batch-zero-temperature}.

Finally, consider \(E\subseteq\mathbb R^d\times\mathbb R^d\) be Borel with \(E_x=\{y:(x,y)\in E\}\). If \((q_0\otimes q_1)(E)=0\), all \(k^2\) sampled candidate pairs avoid \(E\) almost surely, which gives \(\pi_{k,0}^{\mathrm{ME}}(E)=0\). Conversely, if \((q_0\otimes q_1)(E)>0\), we just consider \(I\) be any uniformly selected source index, which gives us
\[
\mathbb P\!\left(Y_j\in E_{X_I}\text{ for every }j\right)=\int q_1(E_x)^k\,q_0(\mathrm dx)>0.
\]
On this event, whichever target index is selected from row \(I\), the selected pair still lies in \(E\). Therefore, \(\pi_{k,0}^{\mathrm{ME}}(E)>0\), proving \(\pi_{k,0}^{\mathrm{ME}}\sim\pi_{\text{ind}}\). Again, recall that the map
\[
(t,x_0,x_1) \longmapsto \bigl(t,(1-t)x_0+tx_1,x_1-x_0\bigr)
\]
is a Borel bijection, with inverse \((t,x,u)\mapsto(t,x-tu,x+(1-t)u)\). Therefore, equivalent endpoint laws induce equivalent laws of \((t,X_t,U)\). Since
\[ 
\mathsf R(\rho)=0\quad\Longleftrightarrow\quad U\ \text{is almost surely a measurable function of }(t,X_t),
\]
so this zero-risk graph property is invariant under equivalence, and we can then conclude \cref{eq:entropic-fixed-batch-equivalence}.
\end{proof}
\vspace{-0.5em}

Before establishing the proof for Theorem~\ref{thm:entropic-large-k-fixed-gradient}, we define \(D_\varepsilon=\mathbb E_{\pi_\varepsilon}[\|X_1-X_0\|_2^2]-W_2^2(q_0,q_1)\). The theorem's joint uniform condition means that there exist \(K_J\in\mathbb N\), \(\varepsilon_J>0\), and \(M<\infty\) such that
\begin{equation}\label{eq:entropic-large-k-uniform-jacobian}
\operatorname*{ess\,sup}\|J_{\theta_{\pi_{k,\varepsilon}}^\star}(t,X_t^{\pi_{k,\varepsilon}})\|_{\mathrm{op}}^2
\leq\max_{t\in[0,1],\,x\in\mathbb R^d}\|J_{\theta_{\pi_{k,\varepsilon}}^\star}(t,x)\|_{\mathrm{op}}^2
\leq M,
\ \ k\geq K_J,\ \ 0<\varepsilon\leq\varepsilon_J.
\end{equation}
Moreover, \cref{eq:eot-cost-convergence} gives \(D_\varepsilon\to0\) as \(\varepsilon\downarrow0\). Now, we can start the proof:

\vspace{-0.5em}
\begin{proof}[Proof of Theorem~\ref{thm:entropic-large-k-fixed-gradient}]
At a realizable coupling-specific population optimum, \cref{eq:var-at-optimum} gives the exact gradient-variance identity. The trainable additive output bias implies \(J_{\theta_{\pi_{k,\varepsilon}}^\star}J_{\theta_{\pi_{k,\varepsilon}}^\star}^\top\succeq I_d\) by Fact~\ref{fact:affine}, with both Jacobians evaluated at \((t,X_t^{\pi_{k,\varepsilon}})\). Applying the positive-semidefinite trace bounds used in Proposition~\ref{prop:gradient-variance-sandwich} proves both inequalities in \cref{eq:entropic-large-k-fixed-bound}.

We next obtain a positive lower bound for the prescribed finite baseline as \(\varepsilon\downarrow0\) and set \(r_0=\mathsf R(\pi_{k_0,0}^{\mathrm{ME}})\). By Lemma~\ref{lem:entropic-fixed-batch-zero-temperature} and the assumption \(\mathsf R(\pi_{\text{ind}})>0\), we have
\begin{equation*}
r_0>0,
\qquad
\mathsf R(\pi_{k_0,\varepsilon})\longrightarrow r_0
\quad\text{as }\varepsilon\downarrow0.
\end{equation*}
So, there exists \(\varepsilon_{\mathrm b}>0\) such that
\begin{equation*}
\mathsf R(\pi_{k_0,\varepsilon})>\frac{r_0}{2}
\qquad
\text{for every }0<\varepsilon\leq\varepsilon_{\mathrm b}.
\end{equation*}
By the lower inequality in \cref{eq:entropic-large-k-fixed-bound}, this gives
\begin{equation}
\label{eq:entropic-baseline-gradient-floor}
\sigma_{\text{grad},\pi_{k_0,\varepsilon}}^2
(\theta_{\pi_{k_0,\varepsilon}}^\star)
>2r_0
\qquad
\text{for every }0<\varepsilon\leq\varepsilon_{\mathrm b}.
\end{equation}
Now fix \(0<\varepsilon\leq\varepsilon_J\). Proposition~\ref{prop:entropic-large-k-fixed} and \cref{eq:entropic-large-k-uniform-jacobian} imply
\[
\limsup_{k\to\infty}\mathsf R(\pi_{k,\varepsilon})\leq D_\varepsilon,
\qquad
\limsup_{k\to\infty}\operatorname*{ess\,sup}\|J_{\theta_{\pi_{k,\varepsilon}}^\star}(t,X_t^{\pi_{k,\varepsilon}})\|_{\mathrm{op}}^2\leq M.
\]
Note that both sequences are nonnegative and the two right-hand sides are finite. Therefore, applying \cref{eq:entropic-large-k-fixed-bound} and the elementary limsup product inequality gives
\begin{equation}
\label{eq:entropic-large-k-gradient-limsup}
\limsup_{k\to\infty}
\sigma_{\text{grad},\pi_{k,\varepsilon}}^2
(\theta_{\pi_{k,\varepsilon}}^\star)
\leq
4M D_\varepsilon.
\end{equation}
Since \(D_\varepsilon\to0\), there exists \(\varepsilon_{\mathrm s}>0\) such that
\begin{equation}
\label{eq:entropic-small-temperature-gradient-ceiling}
4M D_\varepsilon<r_0
\qquad
\text{for every }0<\varepsilon\leq\varepsilon_{\mathrm s}.
\end{equation}
We set \(\bar\varepsilon= \min\{\varepsilon_J,\varepsilon_{\mathrm b}, \varepsilon_{\mathrm s}\}>0\). For every fixed \(0<\varepsilon\leq\bar\varepsilon\), \cref{eq:entropic-baseline-gradient-floor,eq:entropic-large-k-gradient-limsup,eq:entropic-small-temperature-gradient-ceiling} give
\[
\limsup_{k\to\infty} \sigma_{\text{grad},\pi_{k,\varepsilon}}^2 (\theta_{\pi_{k,\varepsilon}}^\star)<r_0 <2r_0 < \sigma_{\text{grad},\pi_{k_0,\varepsilon}}^2 (\theta_{\pi_{k_0,\varepsilon}}^\star).
\]
By the limit of superior, we have a finite threshold after which the strict upper comparison holds. Increasing that threshold if necessary, choose a finite \(K>\max\{k_0,K_J\}\), which may depend on \(k_0\) and \(\varepsilon\); then, \cref{eq:entropic-large-k-fixed-reduction} holds for every \(k\geq K\).

Now, it remains for us only to verify strict positivity. For every finite \(k\) and \(\varepsilon>0\), Proposition~\ref{prop:entropic-large-k-fixed} gives \(\pi_{k,\varepsilon}\sim\pi_{\text{ind}}\). As in Lemma~\ref{lem:entropic-fixed-batch-zero-temperature}, equivalence preserves the graph property characterizing zero target uncertainty. Therefore, \(\mathsf R(\pi_{k,\varepsilon})>0\), and the lower inequality in \cref{eq:entropic-large-k-fixed-bound} gives \(\sigma_{\text{grad},\pi_{k,\varepsilon}}^2>0\). This completes the proof, and we can conclude the variance reduction.
\end{proof}

\subsubsection{Proposition~\ref{prop:int-err-red}}
Before establishing the proof, we need to prove two supporting lemmas. We first prove the following one-step coupling bound:

\begin{lemma}\label{lem:one-step-coupling-facts}
Let $q_0,q_1\in\mathcal P_2(\mathbb R^d)$, let $\pi\in\Pi(q_0,q_1)$, and we denote $m_\pi(x)=\mathbb E_\pi[X_1\mid X_0=x]$. Then, we have
\begin{equation}
\label{eq:one-step-coupling-facts}
0\leq\operatorname{err}(\pi)\leq\mathbb E_\pi[\|X_1-m_\pi(X_0)\|_2^2]=\mathbb E[\|X_1-\mathbb E[X_1]\|_2^2]-\mathbb E[\|m_\pi(X_0)-\mathbb E[X_1]\|_2^2].
\end{equation}
Moreover, for every measurable $g\in L^2(q_0;\mathbb R^d)$, the following reduction holds true:
\begin{equation}
\label{eq:one-step-predictor-envelope}
\operatorname{err}(\pi)\leq\mathbb E_\pi[\|X_1-m_\pi(X_0)\|_2^2]\leq\mathbb E_\pi[\|X_1-g(X_0)\|_2^2].
\end{equation}
If $\pi\sim q_0\otimes q_1$ and $q_1$ is not a Dirac mass, then $\operatorname{err}(\pi)>0$.
\end{lemma}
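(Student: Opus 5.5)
The plan is to prove the three assertions of Lemma~\ref{lem:one-step-coupling-facts} in order. Each one comes from a direct coupling construction combined with the $L^2$ projection property of conditional expectation.

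\textbf{Upper bound on $\operatorname{err}(\pi)$.} I would build a specific coupling between $(m_\pi)_\#q_0$ and $q_1$: sample $(X_0,X_1)\sim\pi$ and pair $m_\pi(X_0)$ with $X_1$. The first coordinate has law $(m_\pi)_\#q_0$ and the second has law $q_1$, so this is admissible in the definition of $W_2^2$. It gives
\[
\operatorname{err}(\pi)\leq\mathbb E_\pi[\|X_1-m_\pi(X_0)\|_2^2],
\]
and nonnegativity is trivial. Integrability is not an issue: $m_\pi\in L^2(q_0)$ by conditional Jensen, because $q_1\in\mathcal P_2$.

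\textbf{The variance identity.} I would write $X_1-\mathbb E[X_1]=(X_1-m_\pi(X_0))+(m_\pi(X_0)-\mathbb E[X_1])$. The first summand is orthogonal in $L^2$ to every $\sigma(X_0)$-measurable square-integrable vector, and in particular to the second summand. Expanding the square gives the equality in \cref{eq:one-step-coupling-facts}; this is the law of total variance.

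\textbf{The predictor envelope.} The same orthogonality gives, for any $g\in L^2(q_0;\mathbb R^d)$,
\[
\mathbb E\|X_1-g(X_0)\|_2^2=\mathbb E\|X_1-m_\pi(X_0)\|_2^2+\mathbb E\|m_\pi(X_0)-g(X_0)\|_2^2,
\]
which yields \cref{eq:one-step-predictor-envelope} immediately.

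\textbf{Strict positivity.} This is the only step with real content. Suppose $\operatorname{err}(\pi)=0$. Then $(m_\pi)_\#q_0=q_1$, so $\mathbb E\|m_\pi(X_0)\|^2=\mathbb E\|X_1\|^2$. The identity above then forces
\[
\mathbb E_\pi\|X_1-m_\pi(X_0)\|_2^2=\mathbb E\|X_1\|^2-\mathbb E\|m_\pi(X_0)\|^2=0,
\]
where the middle equality again uses orthogonality, in the form $\mathbb E\|X_1\|^2=\mathbb E\|m_\pi\|^2+\mathbb E\|X_1-m_\pi\|^2$. Hence $X_1=m_\pi(X_0)$ $\pi$-almost surely, i.e.\ $\pi$ is concentrated on the graph $\{(x,m_\pi(x))\}$, which is Borel since $m_\pi$ is measurable.

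Because $\pi\sim q_0\otimes q_1$, the graph must also have full $q_0\otimes q_1$-measure. By Fubini,
\[
(q_0\otimes q_1)(\mathrm{graph})=\int q_1(\{m_\pi(x)\})\,q_0(dx)=1.
\]
Each integrand is at most $1$, so $q_1(\{m_\pi(x)\})=1$ for $q_0$-a.e.\ $x$. That makes $q_1$ a Dirac mass, a contradiction.

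The main point to handle carefully is this last argument: passing from zero error to a deterministic coupling via the second-moment equality, and then using the equivalence of null sets with the graph's measurability. The earlier steps are routine.
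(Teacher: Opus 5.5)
Your proposal is correct and follows the paper's proof. The upper bound comes from the coupling $(m_\pi(X_0),X_1)$, and conditional orthogonality gives both the variance identity and the predictor envelope. For strict positivity, $\operatorname{err}(\pi)=0$ forces $(m_\pi)_\#q_0=q_1$, hence equal second moments and zero residual variance.

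The only difference is the order of the last step. The paper first shows that each conditional law $\pi(\mathrm dy\mid X_0=x)$ is equivalent to the non-Dirac $q_1$, so it has positive variance, and then derives the contradiction. You instead deduce $X_1=m_\pi(X_0)$ almost surely and transfer the graph to $q_0\otimes q_1$ via Fubini, which is a slightly more explicit rendering of the same idea.
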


\begin{proof}
The proof is quite straightforward. Specifically, the pair $(m_\pi(X_0),X_1)$ couples $(m_\pi)_\#q_0$ with $q_1$, which gives the first upper bound. Conditional orthogonality gives the equality in \cref{eq:one-step-coupling-facts} and the predictor inequality in \cref{eq:one-step-predictor-envelope}.

Now suppose $\pi\!\sim\!q_0\otimes q_1$. For $q_0$-almost every $x$, the conditional law $\pi(\mathrm dy\mid X_0=x)$ is equivalent to the non-Dirac law $q_1$, so its conditional variance is strictly positive. If $\operatorname{err}(\pi)=0$, then $(m_\pi)_\#q_0=q_1$, while conditional orthogonality gives us that
\[
\mathbb E[\|X_1\|_2^2]=\mathbb E[\|m_\pi(X_0)\|_2^2]+\mathbb E_\pi[\|X_1-m_\pi(X_0)\|_2^2]>\mathbb E[\|X_1\|_2^2],
\]
which is a contradiction, so this proves the lemma.
\end{proof}
\vspace{-0.5em}
We next prove the convergence to Monge coupling:
\begin{lemma}
\label{lem:monge-one-step-convergence}
Let $T\in L^2(q_0;\mathbb R^d)$ and let $\gamma_n\in\Pi(q_0,q_1)$ satisfy $\gamma_n\to(\mathrm{id},T)_\#q_0$ in $W_2$. Then,
\begin{equation}
\label{eq:monge-one-step-convergence}
\mathbb E_{\gamma_n}[\|X_1-T(X_0)\|_2^2]\longrightarrow0,
\qquad
\operatorname{err}(\gamma_n)\longrightarrow0.
\end{equation}
\end{lemma}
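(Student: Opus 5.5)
The plan is to approximate $T$ by a bounded continuous map, so that $W_2$ convergence can be applied to a test function of quadratic growth. The key identity is that under the limit coupling $\gamma:=(\mathrm{id},T)_\#q_0$ we have $X_1=T(X_0)$ almost surely. It therefore suffices to show that the functional $\rho\mapsto\mathbb E_\rho[\|X_1-T(X_0)\|_2^2]$ is continuous along $\gamma_n\to\gamma$. This functional vanishes at $\gamma$, but it need not be weakly continuous, since $T$ is only measurable.

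Fix $\delta>0$. Since continuous bounded maps are dense in $L^2(q_0;\mathbb R^d)$, choose $T_\delta\in C_b$ with $\|T-T_\delta\|_{L^2(q_0)}\le\delta$. By the triangle inequality in $L^2(\gamma_n)$,
\[
\mathbb E_{\gamma_n}[\|X_1-T(X_0)\|_2^2]^{1/2}\le \mathbb E_{\gamma_n}[\|X_1-T_\delta(X_0)\|_2^2]^{1/2}+\|T_\delta-T\|_{L^2(q_0)}.
\]
The last term depends only on the first marginal, which is $q_0$ for every $\gamma_n$, so it is at most $\delta$ uniformly in $n$.

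The integrand $(x_0,x_1)\mapsto\|x_1-T_\delta(x_0)\|_2^2$ is continuous with growth at most $C(1+\|x_1\|_2^2)$. Convergence in $W_2$ gives convergence of integrals of such functions, so
\[
\mathbb E_{\gamma_n}[\|X_1-T_\delta(X_0)\|_2^2]\to\mathbb E_{\gamma}[\|T(X_0)-T_\delta(X_0)\|_2^2]\le\delta^2.
\]
Hence $\limsup_n\mathbb E_{\gamma_n}[\|X_1-T(X_0)\|_2^2]\le(2\delta)^2$. Letting $\delta\downarrow0$ proves the first limit.

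For the second limit, apply Lemma~\ref{lem:one-step-coupling-facts}, specifically \cref{eq:one-step-predictor-envelope} with the predictor $g=T\in L^2(q_0)$. This gives $0\le\operatorname{err}(\gamma_n)\le\mathbb E_{\gamma_n}[\|X_1-T(X_0)\|_2^2]\to0$.

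The main obstacle is the discontinuity of $T$, and the density argument above handles it. The only points needing care are the quadratic growth of the test function, which is exactly what $W_2$ convergence controls, and the use of fixed marginals to make the approximation error uniform in $n$.
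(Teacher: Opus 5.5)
Your proof is correct and follows the paper's argument essentially step for step. You approximate $T$ by a bounded continuous map, and you use $W_2$ convergence for the quadratic-growth integrand together with the common first marginal $q_0$ to get $\limsup_n \mathbb E_{\gamma_n}[\|X_1-T(X_0)\|_2^2]\le 4\|T_\delta-T\|_{L^2(q_0)}^2$. You then conclude by density and bound $\operatorname{err}(\gamma_n)$ via \cref{eq:one-step-predictor-envelope} with $g=T$, exactly as the paper does; your triangle-inequality step just makes explicit where the paper's constant $4$ comes from.
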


\begin{proof}
Fix $g\in C_b(\mathbb R^d;\mathbb R^d)$. Since $(x,y)\mapsto\|y-g(x)\|_2^2$ is continuous with quadratic growth, $W_2$ convergence and the common first marginal give
\[
\limsup_{n\to\infty}\mathbb E_{\gamma_n}[\|X_1-T(X_0)\|_2^2]\leq4\|g-T\|_{L^2(q_0)}^2.
\]
Bounded continuous functions are dense in $L^2(q_0)$, so the first limit follows. The second follows from \cref{eq:one-step-predictor-envelope} with $g=T$.
\end{proof}
\vspace{-0.5em}
Given these two supporting lemmas, we can now prove Proposition~\ref{prop:int-err-red}:
\vspace{-0.75em}
\begin{proof}[Proof of Proposition~\ref{prop:int-err-red}]
For population OT, $X_1=T(X_0)$ almost surely, so the one-step output has law $q_1$ and $\operatorname{err}(\pi^\star)=0$. Under independent coupling, $\mathbb E[X_1\mid X_0]=\mathbb E[X_1]$, so we can show that 
\[
\operatorname{err}(\pi_{\text{ind}})=\mathbb E[\|X_1-\mathbb E[X_1]\|_2^2]>0.
\]
We next prove the two population-entropic claims. First, we write objective as
\[
\mathcal J_\varepsilon(\pi)=\frac{1}{2d}\mathbb E_\pi[\|X_1-X_0\|_2^2]+\varepsilon\operatorname{KL}(\pi\|\pi_{\text{ind}}).
\]
Since $\pi_{\text{ind}}$ has finite objective, $\pi_\varepsilon$ is absolutely continuous with respect to $\pi_{\text{ind}}$, and there exists a Radon--Nikodym density $f$ such that $\mathrm d\pi_\varepsilon=f\,\mathrm d\pi_{\text{ind}}$. 

We claim that $f>0$ almost everywhere, which can be proved via contradiction, i.e., otherwise if $A=\{f=0\}$ had positive $\pi_{\text{ind}}$-mass, the feasible mixture $\pi_s=(1-s)\pi_\varepsilon+s\pi_{\text{ind}}$ would satisfy, with $h(r)=r\log r$, that
\[
\frac{\mathcal J_\varepsilon(\pi_s)-\mathcal J_\varepsilon(\pi_\varepsilon)}{s}\leq\frac{\mathbb E_{\pi_{\text{ind}}}[\|X_1-X_0\|_2^2]-\mathbb E_{\pi_\varepsilon}[\|X_1-X_0\|_2^2]}{2d}+\varepsilon\!\left\{\pi_{\text{ind}}(A)\log s-\operatorname{KL}(\pi_\varepsilon\|\pi_{\text{ind}})\right\},
\]
which tends to $-\infty$ as $s\downarrow0$ and contradicts the optimality, so $\pi_\varepsilon\sim\pi_{\text{ind}}$, and Lemma~\ref{lem:one-step-coupling-facts} gives $\operatorname{err}(\pi_\varepsilon)>0$.

We next compare $\pi_\varepsilon$ strictly with independence. We choose $w$ such that $\langle w,X_1-\mathbb E[X_1]\rangle$ is not a constant and set $b(X_1)=\tanh\langle w,X_1-\mathbb E[X_1]\rangle-\mathbb E[\tanh\langle w,X_1-\mathbb E[X_1]\rangle]$, and it's clear that $v\!\coloneqq\!\mathbb E[(X_1-\mathbb E[X_1])b(X_1)]\neq0$. Since $q_0\ll\mathcal L^d$, the variable $\langle v,X_0-\mathbb E[X_0]\rangle$ is not a constant. 

We define $a(X_0)$ by applying and centering $\tanh$ in the same way, then rescaling it so that both $\|a\|_\infty,\|b\|_\infty\leq1$. For $|s|<1$, the density $1+s,a(x)b(y)$ defines a coupling $\rho_s\in\Pi(q_0,q_1)$, and its entropy derivative at $s=0$ vanishes, whereas
\[
\left.\frac{\mathrm d}{\mathrm ds}\frac{1}{2d}\mathbb E_{\rho_s}[\|X_1-X_0\|_2^2]\right|_{s=0}=-\frac1d\left\langle\mathbb E[(X_0-\mathbb E[X_0])a(X_0)],v\right\rangle<0.
\]
Therefore, $\mathcal J_\varepsilon(\pi_\varepsilon)\leq\mathcal J_\varepsilon(\rho_s)<\mathcal J_\varepsilon(\pi_{\text{ind}})$ for all sufficiently small $s>0$, and non-negativity of relative entropy gives us that
\[
\mathbb E_{\pi_\varepsilon}[\|X_1-X_0\|_2^2]<\mathbb E_{\pi_{\text{ind}}}[\|X_1-X_0\|_2^2].
\]
Expanding the two costs shows that $\mathbb E_{\pi_\varepsilon}[X_1\mid X_0]$ is not almost surely equal to $\mathbb E[X_1]$. The strict upper comparison $\operatorname{err}(\pi_\varepsilon)<\operatorname{err}(\pi_{\text{ind}})$ now follows from \cref{eq:one-step-coupling-facts}.

Finally, choosing any minibatch batch size $k$, we denote $(Z_1,\ldots,Z_k)$ and $(Y_1,\ldots,Y_k)$ as the endpoint batches, let $I$ be any uniform source index, and let $\tau$ be the empirical matching. If $(q_0\otimes q_1)(E)=0$, every candidate pair $(Z_i,Y_j)$ avoids $E$ almost surely, so $\pi_k(E)=0$. Conversely, if $(q_0\otimes q_1)(E)>0$ and $E_z=\{y:(z,y)\in E\}$, then
\[
\pi_k(E)\geq\mathbb P\bigl(Y_j\in E_{Z_I}\text{ for every }j\bigr)=\int q_1(E_z)^kq_0(\mathrm dz)>0.
\]
Therefore, $\pi_k\sim q_0\otimes q_1$, and Lemma~\ref{lem:one-step-coupling-facts} gives $\operatorname{err}(\pi_k)>0$.

Now, recall that Proposition~\ref{prop:large-k-intrinsic-consistency} already proves the convergence of $\pi_k$ to $W_2^2(q_0,q_1)$. Also, the tightness, lower semicontinuity, and uniqueness of the Brenier coupling imply $\pi_k\Rightarrow\pi^\star$. Since all couplings have the same product-space second moment, this convergence holds in $W_2$. Lemma~\ref{lem:monge-one-step-convergence} therefore gives
\[
\operatorname{err}(\pi_k)\leq\mathbb E_{\pi_k}[\|X_1-T(X_0)\|_2^2]\longrightarrow0.
\]
Since every fixed finite baseline has positive error, we can conclude the strict error reduction.
\end{proof}

\subsection{Theorem~\ref{thm:main-entropic-integration}}
\begin{proof}
For simplicity, we abbreviate the following notations: $\mu_i=\mathbb E[X_i]$, $V_i=\mathbb E[\|X_i-\mu_i\|_2^2]$, and $C(\pi)=\mathbb E_\pi[\|X_1-X_0\|_2^2]$ with $C_{\text{ind}}=C(\pi_{\text{ind}})$. Since $q_0\ll\mathcal L^d$ and $q_0$ is non-Dirac, we have \(V_0>0\). For any coupling $\pi$, expanding the two quadratic costs gives
\[
C_{\text{ind}}-C(\pi)=2\mathbb E_\pi[\langle X_0-\mu_0,\mathbb E_\pi[X_1\mid X_0]-\mu_1\rangle].
\]
Together with Cauchy--Schwarz and Lemma~\ref{lem:one-step-coupling-facts}, we have
\begin{equation}
\label{eq:main-proof-cost-certificate}
\operatorname{err}(\pi)\leq V_1-\frac{\{C_{\text{ind}}-C(\pi)\}^2}{4V_0}.
\end{equation}
To apply this inequality, we consider $\mathcal B$ as two sampled endpoint batches. Under the actual sampling law $\mathbb P_{\mathrm{ot}}$, the conditional distribution of the selected indices $(I,J)$ given $\mathcal B$ is $\widehat\pi_{k,\varepsilon}$. We can define a reference law $\mathbb P_{\mathrm{ref}}$ using the same batches but conditionally independent uniform indices, so $(I,J)\mid\mathcal B\sim\mathbf1\mathbf1^\top/k^2$. The map $(\mathcal B,I,J)\mapsto(X_I,Y_J)$ pushes $\mathbb P_{\mathrm{ot}}$ forward to $\pi_{k,\varepsilon}$ and $\mathbb P_{\mathrm{ref}}$ forward to $\pi_{\text{ind}}$, so the conditional KL chain rule, along with the same data processing proof structure from Lemma~\ref{lem:entropic-shadow-bound}, give us that
\begin{equation}\label{eq:doublelaw}
\operatorname{KL}(\pi_{k,\varepsilon}\|\pi_{\text{ind}})\leq\operatorname{KL}(\mathbb P_{\mathrm{ot}}\|\mathbb P_{\mathrm{ref}})=\mathbb E\!\left[\operatorname{KL}\!\left(\widehat\pi_{k,\varepsilon}\middle\|\mathbf1\mathbf1^\top/k^2\right)\right].
\end{equation}
Moreover, selecting $(I,J)$ according to $\widehat\pi_{k,\varepsilon}$ gives us that
\begin{equation}\label{eq:cost-selected-pairs}
\frac{C(\pi_{k,\varepsilon})}{2d}=\mathbb E\!\left[\left\langle\mathbf C,\widehat\pi_{k,\varepsilon}\right\rangle\right].
\end{equation}
Adding \cref{eq:doublelaw} to \cref{eq:cost-selected-pairs} and recalling the definition of $S_{k,\varepsilon}$, we have
\[
\frac{C(\pi_{k,\varepsilon})}{2d}+\varepsilon\operatorname{KL}(\pi_{k,\varepsilon}\|\pi_{\text{ind}})\leq S_{k,\varepsilon}.
\]
Since $S_{1,\varepsilon}=C_{\text{ind}}/(2d)$, we obtain
\[
C_{\text{ind}}-C(\pi_{k,\varepsilon})\geq2d(S_{1,\varepsilon}-S_{k,\varepsilon}).
\]
Substitution into \cref{eq:main-proof-cost-certificate} proves the middle inequality in \cref{eq:main-entropic-finite-k-chain}. Proposition~\ref{prop:entropic-large-k-fixed} gives $\pi_{k,\varepsilon}\sim\pi_{\text{ind}}$ at every finite $k$, so strict positivity follows from Lemma~\ref{lem:one-step-coupling-facts}, and we finish proving the first statement in the theorem.

We now prove the second statement using the idea of induction, which the upper bound is nonincreasing and strict for $k\geq2$. 

First, we set $n=k+1$ and fix two realized $n$-point batches. For each source index $a$ and target index $b$, remove those two points, let $P^{ab}$ solve the resulting $k\times k$ problem, embed it into an $n\times n$ matrix $Q^{ab}$ by inserting a zero row and column, and set $\overline P=n^{-2}\sum_{a,b}Q^{ab}$. We note that this matrix has uniform $n$-point marginals. If
\[
M(a,b,i,j)=n^{-2}Q^{ab}_{ij},\qquad R(a,b,i,j)=\frac{\mathbf1_{\{i\neq a,\,j\neq b\}}}{n^2k^2},
\]
then their $(i,j)$-marginals are respectively $\overline P$ and the uniform $n\times n$ matrix. Data processing gives
\[
\operatorname{KL}(\overline P\|\mathbf1\mathbf1^\top/n^2)\leq\frac1{n^2}\sum_{a,b}\operatorname{KL}(P^{ab}\|\mathbf1\mathbf1^\top/k^2).
\]
The same averaging identity holds for the cost, so taking expectations proves $S_{k+1,\varepsilon}\leq S_{k,\varepsilon}$.

We now prove the strictness part, and it's enough for us to start with $k=2$. 

Let $U_2=\mathbf1\mathbf1^\top/4$ and $D=\left(\begin{smallmatrix}1&-1\\-1&1\end{smallmatrix}\right)$. The feasible perturbation $U_2+sD$ has zero entropy derivative at $s=0$, whereas its cost derivative is
\[
-\frac1d\langle Z_1-Z_2,Y_1-Y_2\rangle.
\]
Since $q_1$ is non-Dirac, $Y_1-Y_2\neq0$ with positive probability; conditional on any such value, absolute continuity of $q_0$ makes the displayed inner product nonzero almost surely. Therefore, the empirical optimum is strictly better than $U_2$ with positive probability. 

Also, the nonnegative gap is integrable, so $S_{2,\varepsilon}<S_{1,\varepsilon}$, and monotonicity extends this strict inequality to every $k\geq2$. This proves both the strict final inequality and monotonicity of the upper bound in \cref{eq:main-entropic-finite-k-chain}.

We now proceed to prove the final \cref{eq:main-entropic-matched-baseline}. Similar to before, we write the population value as
\[
S_\varepsilon\coloneqq \inf_{\pi\in\Pi(q_0,q_1)}\left\{\frac{C(\pi)}{2d}+\varepsilon\operatorname{KL}(\pi\|\pi_{\text{ind}})\right\}.
\]
The index-to-measure reduction and uniform shadow bound in Lemmas~\ref{lem:entropic-index-measure-reduction} and~\ref{lem:entropic-shadow-bound} give the following, respectively, 
\begin{equation}
\label{eq:main-proof-shadow}
0\leq S_{k,\varepsilon}-S_\varepsilon\leq e_k:=\frac{2\sqrt{C_{\text{ind}}}\,\mathbb E[a_k]+\mathbb E[a_k^2]}{2d},\qquad a_k=W_2(\widehat q_{0,k},q_0)+W_2(\widehat q_{1,k},q_1).
\end{equation}
For every $q\in\mathcal P_2(\mathbb R^d)$, $\mathbb E[W_2^2(\widehat q_k,q)]\to0$ (i.e., project onto a fixed ball, use bounded-support empirical $W_2$ convergence and dominated convergence, and then send the radius to infinity), so $e_k\to0$.

We set $C_\star=W_2^2(q_0,q_1)$ and try to first show $S_\varepsilon\to C_\star/(2d)$. First, we note that the optimal coupling $\pi^\star$ may have infinite relative entropy with respect to $\pi_{\text{ind}}$, so we do the following math technique by approximating it through finite-entropy couplings.

Specifically, we first partition both endpoint spaces into increasingly fine finite partitions and set $p_{AB}=\pi^\star(A\times B)$, and we set $\rho_n$ by assigning mass $p_{AB}$ to each block $A\times B$ according to the product measure $q_0(\mathrm dx\mid A)q_1(\mathrm dy\mid B)$. Since $\sum_Bp_{AB}=q_0(A)$ and $\sum_Ap_{AB}=q_1(B)$, $\rho_n$ has marginals $q_0$ and $q_1$. Moreover, we have
\[
\operatorname{KL}(\rho_n\|\pi_{\text{ind}})
=\sum_{A,B}p_{AB}\log\frac{p_{AB}}{q_0(A)q_1(B)}<\infty.
\]
As the partitions become finer, $\rho_n$ converges weakly to $\pi^\star$. Since all $\rho_n$ have the same marginals as $\pi^\star$, their product-space second moments also agree, so $\rho_n\!\to\!\pi^\star$ in $W_2$. Therefore, $C(\rho_n)\to C_\star$, and
\[
\frac{C_\star}{2d}\leq S_\varepsilon\leq\frac{C(\rho_n)}{2d}+\varepsilon\operatorname{KL}(\rho_n\|\pi_{\text{ind}}).
\]
To prove the claim, we simply need to take $\varepsilon\downarrow0$ and $n\to\infty$. Now, let $\varepsilon_k\to0$. The raw selected-pair cost, i.e., \cref{eq:main-proof-shadow}, and the preceding population limit give
\[
\frac{C_\star}{2d}\leq\frac{C(\pi_{k,\varepsilon_k})}{2d}\leq S_{k,\varepsilon_k}\leq S_{\varepsilon_k}+e_k\longrightarrow\frac{C_\star}{2d}.
\]
Tightness and uniqueness of the Brenier coupling imply $\pi_{k,\varepsilon_k}\Rightarrow\pi^\star$, and the common product-space second moment upgrades this to $W_2$ convergence. Lemma~\ref{lem:monge-one-step-convergence} therefore gives $\operatorname{err}(\pi_{k,\varepsilon_k})\to0$, with no rate condition relating $k$ and $\varepsilon_k$.

To derive \cref{eq:main-entropic-matched-baseline}, we still need to use the following continuity fact: let $\rho_n,\rho\in\Pi(q_0,q_1)$ satisfy $\|\rho_n-\rho\|_{\mathrm{TV}}\to0$, and we can apply the technique by disintegrating them as $q_0(\mathrm dx)K_n(x,\mathrm dy)$ and $q_0(\mathrm dx)K(x,\mathrm dy)$. Their common first marginal gives
\[
\|\rho_n-\rho\|_{\mathrm{TV}}=\int\|K_n(x,\cdot)-K(x,\cdot)\|_{\mathrm{TV}}q_0(\mathrm dx).
\]
For metric projection $P_L$ onto the radius-$L$ ball, the corresponding truncated conditional means satisfy
\[
\int\left\|\int P_L(y)K_n(x,\mathrm dy)-\int P_L(y)K(x,\mathrm dy)\right\|_2^2q_0(\mathrm dx)\leq4L^2\|\rho_n-\rho\|_{\mathrm{TV}}.
\]
Conditional Jensen bounds each truncation tail by $\int\|y-P_L(y)\|_2^2q_1(\mathrm dy)$, and we can take $n\to\infty$ and then $L\to\infty$, which proves convergence of the conditional means in $L^2(q_0)$. 

Coupling their pushforward laws through $X_0\sim q_0$ gives $\operatorname{err}(\rho_n)\to\operatorname{err}(\rho)$, and we now derive the final ingredient. Fix \(\varepsilon>0\), and recall the population objective
\[
J_\varepsilon(\rho)
:=
\frac{C(\rho)}{2d}
+
\varepsilon\operatorname{KL}(\rho\|\pi_{\text{ind}}),
\qquad
S_\varepsilon=J_\varepsilon(\pi_\varepsilon).
\]
Since \(\pi_{k,\varepsilon}\in\Pi(q_0,q_1)\), population optimality and the preceding bounds on the averaged coupling and empirical value give
\[
S_\varepsilon
\leq
J_\varepsilon(\pi_{k,\varepsilon})
\leq
S_{k,\varepsilon}
\leq
S_\varepsilon+e_k.
\]
Therefore, the population objective gap satisfies
\[
0
\leq
J_\varepsilon(\pi_{k,\varepsilon})
-
J_\varepsilon(\pi_\varepsilon)
\leq e_k
\longrightarrow0.
\]
To translate this objective convergence into convergence of the couplings, we apply the entropy Pythagorean inequality (see Lemma 4.4 in \citet{eckstein2022quantitative}) to the rescaled objective \(J_\varepsilon/\varepsilon\). This gives
\[
\operatorname{KL}(\pi_{k,\varepsilon}\|\pi_\varepsilon)
\leq
\frac{
J_\varepsilon(\pi_{k,\varepsilon})
-
J_\varepsilon(\pi_\varepsilon)
}{\varepsilon}
\leq
\frac{e_k}{\varepsilon}.
\]
Using the convention \(\|\mu-\nu\|_{\mathrm{TV}}=\sup_A|\mu(A)-\nu(A)|\), Pinsker's inequality then yields
\[
\|\pi_{k,\varepsilon}-\pi_\varepsilon\|_{\mathrm{TV}}
\leq
\sqrt{\frac12\operatorname{KL}(\pi_{k,\varepsilon}\|\pi_\varepsilon)}
\leq
\sqrt{\frac{e_k}{2\varepsilon}}
\longrightarrow0,
\]
where the limit follows because \(\varepsilon>0\) is fixed and \(e_k\to0\). Both couplings have the same marginals \(q_0,q_1\), so the total-variation continuity established above applies and gives
\[
\operatorname{err}(\pi_{k,\varepsilon})
\longrightarrow
\operatorname{err}(\pi_\varepsilon)
\qquad\text{as }k\to\infty.
\]
This proves the fixed-regularization convergence claimed in the theorem.

For any fixed finite $k_0$, Lemma~\ref{lem:entropic-fixed-batch-zero-temperature} gives $\pi_{k_0,\varepsilon}\to\pi_{k_0,0}^{\mathrm{ME}}$ in total variation and $\pi_{k_0,0}^{\mathrm{ME}}\sim\pi_{\text{ind}}$. With the continuity, along with Lemma~\ref{lem:one-step-coupling-facts}, we have
\[
\operatorname{err}(\pi_{k_0,\varepsilon})\longrightarrow\operatorname{err}(\pi_{k_0,0}^{\mathrm{ME}})>0,
\]
i.e., the candidate error tends to zero, so \cref{eq:main-entropic-matched-baseline} holds for all sufficiently large $k$, and we can finally conclude the theorem.
\end{proof}

\begin{figure}[!t]
    \centering
\includegraphics[width=\linewidth]{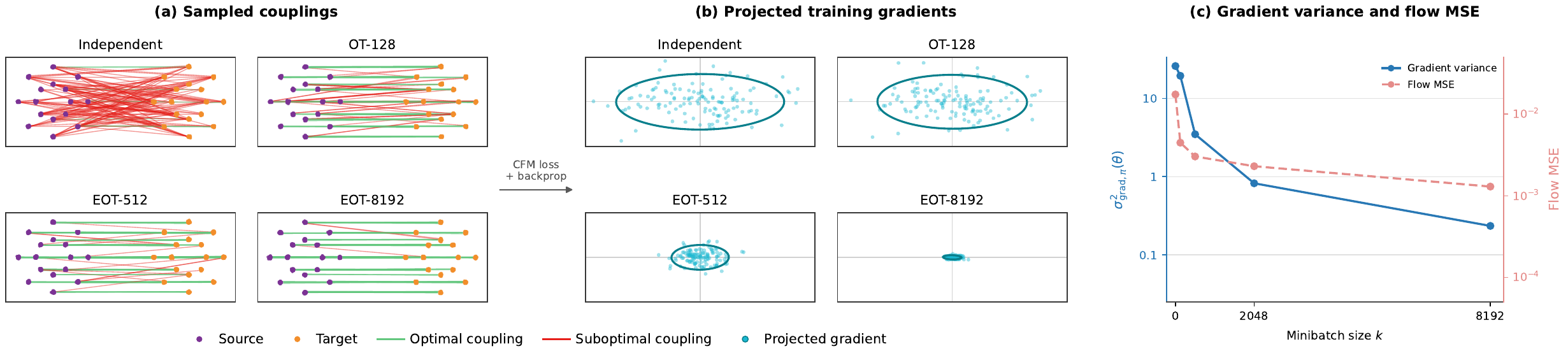}
\vspace{-1.5em}
    \caption{Results on a synthetic ring task over independent random coupling, OT-CFM using the minibatch size of $128$, and EOT-CFM with contexts up to $8192$. From left to right, each panel corresponds the flow couplings, the projected training gradients, and the final gradient variance and generation quality under different context size. All the training setup follows from Figure~\ref{f1}.}
    \label{f4}
\end{figure}

\section{Further Empirical Results}\label{app:exp}

\paragraph{Additional 2D examples.} We provide additional experimental results and details. First, we present additional 2D synthetic experiment on rings in Figure~\ref{f4}, using the same training settings as the multimodal block synthetic experiment in Figure~\ref{f1}. The results show a similar conclusion, where larger minibatch sizes consistently lead to more stable training and lower flow-generation error.

\begin{figure}[!t]
    \centering
\includegraphics[width=\linewidth]{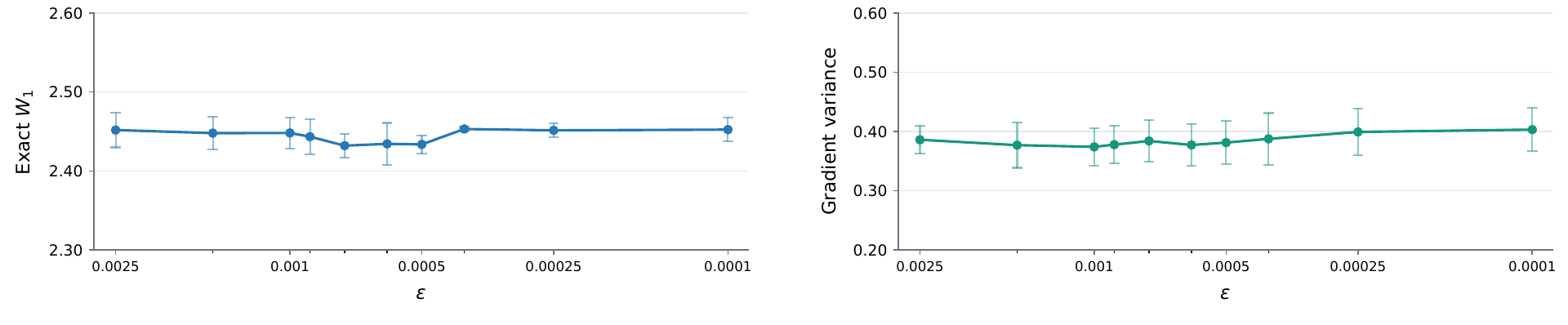}
\vspace{-2em}
    \caption{Ablation over $\varepsilon$ on the PBMC dataset with an 8K entropic OT context, with the exact $W_1$ error of generated samples (left) and training gradient variance (right) shown. We sweep $\varepsilon$ from $\{0.0025, 0.0015, 0.001, 0.0009, 0.00075, 0.0006, 0.0005, 0.0004, 0.00025, 0.0001\}$.}
    \label{f5}
    \vspace{-0.5em}
\end{figure}

\paragraph{Ablation on $\varepsilon$.} We also ablate $\varepsilon$ on the PBMC dataset, whose size lets us reach substantially larger minibatch contexts than the EB dataset. Offline pipelines that scale to million-level contexts must drive $\varepsilon$ to very low levels to keep the coupling sharp, which makes its tuning delicate. Online couplings instead pair each freshly sampled source point within the current minibatch, so that a moderate context of a few thousand points already yields a clear scaling effect while remaining robust to the choice of $\varepsilon$. 

As shown in Figure~\ref{f5}, we sweep $\varepsilon$ from $0.0001$ to $0.0025$ at 8K minibatch entropic OT context. Following the exactly same training configuration detailed in \S\ref{sec:exp}, each $\varepsilon$ trial is running over 5 seeds, with the mean performance and its standard deviation shown. Performance peaks around $\varepsilon = 0.001$ and varies little across the swept range. For EB training, we use a constant $\varepsilon=0.01$ across all context sizes, since the largest context can only be scaled to 1.5K. For image flow training, where the context can be scaled similarly to PBMC, the optimal $\varepsilon$ at the largest minibatch context is $0.002$ for CIFAR and $0.001$ for ImageNet, consistent with the 8K-context ablation.

\end{document}